\documentclass{article}
\usepackage[accepted]{icml2026}
\usepackage{amsmath, amssymb, amsthm}
\usepackage{booktabs}          
\usepackage{graphicx}
\usepackage{hyperref}
\usepackage{microtype}
\usepackage{xcolor}
\usepackage{multicol}
\usepackage{algorithm}
\usepackage{algorithmic}
\usepackage{multirow}
\usepackage{enumitem}
\usepackage{float}             
\usepackage{dblfloatfix}       
\usepackage[section]{placeins} 

\newtheorem{theorem}{Theorem}
\newtheorem{proposition}{Proposition}

\usepackage{orcidlink}

\usepackage{fontawesome5}

\newcommand{\ua}{u_{\mathrm{a}}}       
\newcommand{\ue}{u_{\mathrm{e}}}       
\newcommand{\evikalmpg}{\textsc{PG-EviKal}}
\newcommand{\evikalm}{\textsc{GP-EviKal}}
\newcommand{\evikal}{\textsc{EviKal}}
\newcommand{\evid}{\textsc{Evidential}}
\newcommand{\mcd}{\textsc{MC-Dropout}}

\makeatletter
\icml@noticeprintedtrue
\makeatother

\begin{document}
\fancyhead{} 
\renewcommand{\headrulewidth}{0pt}

\twocolumn[
\icmltitle{Adapting Evidential Neural Networks to Test-Time Neighbor Fusion\\Improves Molecular
Property Prediction}

\renewcommand{\thefootnote}{*}
\begin{icmlauthorlist}
\icmlauthor{Cameron Gruich$^{\dagger}$\footnotemark}{}
\icmlauthor{Weichi Yao$^{\ddagger}{^,} ^{\S}$}{}
\icmlauthor{Yixin Wang$^{\ddagger}$}{}
\icmlauthor{Bryan Goldsmith$^{\dagger}$}{}
\end{icmlauthorlist}

\begin{center}
{\small
$\dagger$ Department of Chemical Engineering, University of Michigan, Ann
Arbor, Michigan 48109-2136, USA \\[3pt]
$\S$ Michigan Institute for Data \& AI in Society, University of
Michigan, Ann Arbor, Michigan 48109-1042, USA \\[3pt]
$\ddagger$ Department of Statistics, University of Michigan, Ann Arbor,
Michigan 48109-1107 USA
\par}
\end{center}

\icmlkeywords{uncertainty quantification, molecular property prediction,
        Kalman filter, evidential deep learning, Gaussian process}

\vskip 0.3in
]

\renewcommand{\thefootnote}{*}
\footnotetext{Corresponding author. E-mail: cameron.gruich@gmail.com}

\begin{abstract}
A trained molecular property model can be refined at test time by correcting each prediction with the measured labels of the most similar training molecules, a retraining-free procedure we call neighbor fusion; evidential neural networks make it principled by using their aleatoric and epistemic uncertainty to parameterize a Bayesian update. Our main contribution, \evikalmpg{}, learns a property-distance metric to re-rank structurally similar neighbors by their property relevance before fusion, building on \evikal{} (scalar Kalman filter) and \evikalm{} (Gaussian process variant handling correlated neighbors). Evaluated on 16 molecular datasets, \evikalmpg{} reduces RMSE relative to the evidential model baseline on 14 of them, with a median reduction of 19.4\%, and improves calibration; in sequential-assay scenarios it further incorporates newly measured molecules, refining predictions as they arrive without retraining. This work demonstrates that evidential uncertainty decomposition is not merely a calibration objective but an actionable inference resource that enables test-time refinement of molecular property predictions.
\end{abstract}



\section{Introduction}
\noindent Deep learning has emerged as a powerful tool for molecular property prediction \citep{wu2018moleculenet, yang2019analyzing}, enabling rapid in-silico screening of chemical libraries that would be prohibitively expensive to evaluate experimentally or from first principles modeling. However, point predictions alone are insufficient for many molecular discovery workflows. Knowing how uncertain a prediction is can be as valuable as the prediction itself \mbox{\citep{hirschfeld2020uncertainty}}, directing resources toward the most informative experiments and flagging unreliable estimates before they misdirect costly assays. This has motivated a line of work on Bayesian and uncertainty-aware molecular models that aim to augment predictions with calibrated uncertainty intervals \citep{scalia2020evaluating, soleimany2021evidential}.

Uncertainty quantification (UQ) methods typically provide prediction-centered intervals that encode the model's confidence in its own prediction \citep{kuleshov2018accurate, abdar2021review}. A natural question is whether this confidence signal can be used not just to rank predictions but also to actively improve them. If a model is uncertain about a query molecule of interest, structurally similar training molecules with known property labels could, in principle, correct or sharpen the query prediction. This line of thinking motivates test-time neighbor fusion: using the training set as a source of a corrective signal at inference time, without any model updates (Figure~\ref{fig:overview}).

\begin{figure*}[!ht]
  \centering
  \includegraphics[width=\linewidth]{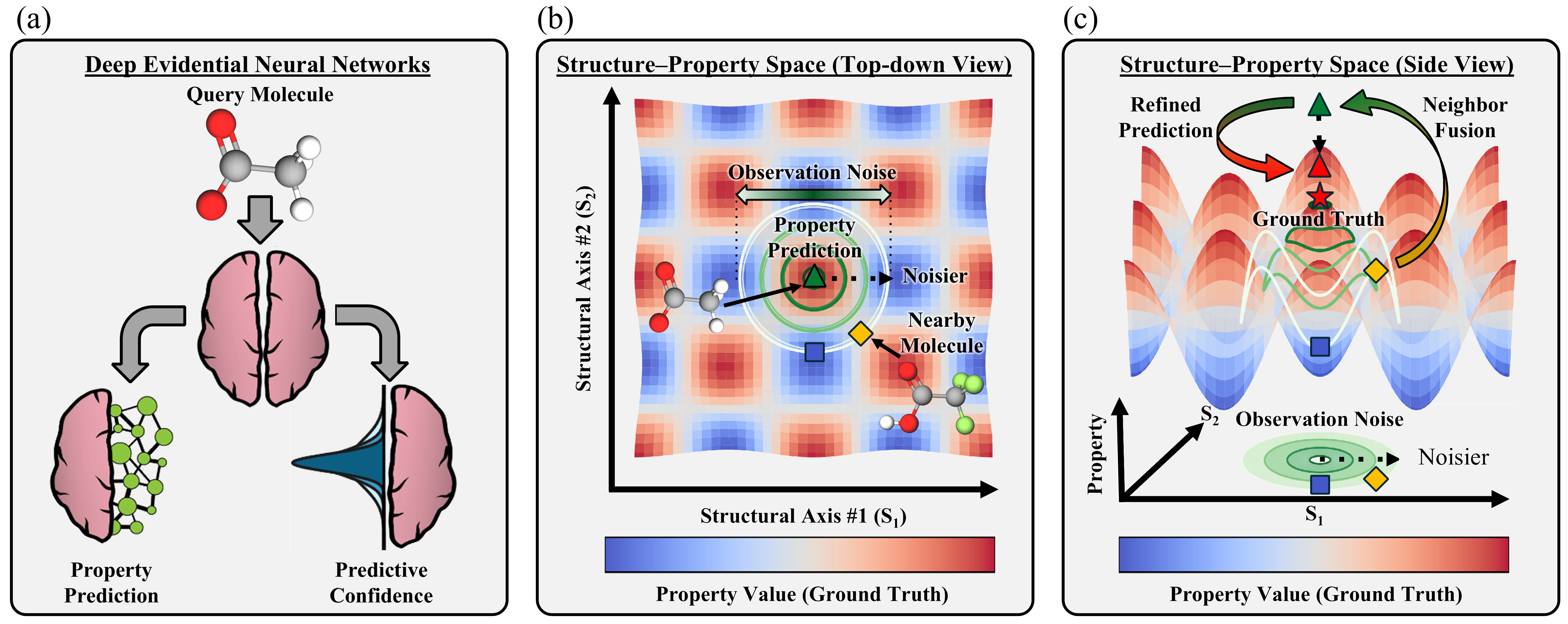}
  \caption{
    \textbf{Overview of the neighbor fusion scheme for improved molecular property prediction.} \textbf{(a)} For each query molecule, evidential neural networks provide on-demand both a molecular property prediction fitted to a choice of neural network backbone (left, green) and a Gaussian measure of the prediction's uncertainty (right, blue). \textbf{(b)} The query molecule (green triangle) is placed in Structure--Property space (red/blue, idealized). Structurally nearby molecules (yellow diamond) serve as corrective signals; more distant neighbors from the query are treated as noisier observations. \textbf{(c)} Neighbor labels are fused with the evidential property prediction, refining the prediction (red triangle) towards the ground truth (red star). Note that two neighbors may have identical measured dissimilarity to a query, but one may have a larger property distance (e.g., blue square vs. yellow diamond), showing that property relevance matters as well.
  }
  \label{fig:overview}
\end{figure*}

The natural alternative is active learning, which directs new measurements toward molecules where predictive uncertainty is high and then retrains the model on the expanded dataset \citep{reker2015active, graff2021accelerating}. Active learning is principled and well-studied, but requires retraining after each acquisition cycle \citep{settles2009active}. Such retraining is impractical when training takes hours, assigning new labels is prohibitively expensive, or the model is already in active deployment. Test-time neighbor fusion sidesteps this entirely. The training set is queried at inference, and neighbor labels are fused with the model's prediction to refine the estimate without gradient updates.

Effective neighbor fusion ideally utilizes uncertainty estimates of molecular property predictions that are both well-calibrated and available on demand at test time. By well-calibrated, we mean that the prediction-centered uncertainty interval accurately encloses where the ground truth could be relative to the property prediction \citep{gruich2023clarifying}. Monte Carlo (MC) Dropout \citep{gal2016dropout}, a commonly adopted UQ method for graph neural networks, provides only a narrow interpretation of predictive uncertainty from stochastic forward-pass variance and has, at times, exhibited poor calibration on molecular property benchmarks \citep{scalia2020evaluating}. Ensemble methods share a common, arguably more important limitation: both require sampling multiple forward passes of a neural network per molecule, and neither reliably separates the sources of uncertainty in a way that is useful for neighbor fusion \citep{kendall2017uncertainties, hullermeier2021aleatoric}.

Deep evidential regression \citep{amini2020deep} addresses both shortcomings. By placing a structured prior over the predictive distribution, it produces uncertainty estimates from a single forward pass without any sampling (i.e., on-demand uncertainty estimates, Figure~\ref{fig:overview}a). The loss function includes a regularization hyperparameter that can be tuned to improve the calibration of uncertainty estimates. Critically, the method decomposes uncertainty into aleatoric and epistemic components. Aleatoric uncertainty $\ua$ captures the irreducible noise in the structure--property relationship, such as label noise, conformational flexibility, experimental measurement noise, and any variability that would otherwise persist even with unlimited observations of the same structural type. Epistemic uncertainty $\ue$ reflects the knowledge gap of the model given the training data.

Epistemic uncertainty, by definition, decreases as more relevant observations are incorporated \citep{kendall2017uncertainties}. This suggests a natural use: initialize a prior or initial predictive uncertainty with the evidential model's epistemic uncertainty for a query molecule, then fuse structurally similar training neighbors as observations to reduce it (Figure~\ref{fig:overview}b). Neighbor selection is therefore critical. While Tanimoto similarity---measured by fingerprint overlap---is a standard approach \citep{rogers2010ecfp, bajusz2015tanimoto}, it captures only molecular overlap without guaranteeing that similar-looking neighbors will be informative for the target property (Figure~\ref{fig:overview}c).

This key distinction hinges on a property of the molecular structure--property landscape called QSAR smoothness (``Quantitative Structure--Activity Relationship smoothness''). A smooth landscape is one where structurally similar molecules tend to have similar property values---the absence of abrupt shifts or ``activity cliffs'' where small structural changes produce large property changes \citep{maggiora2006outliers, stumpfe2012exploring}. On smooth landscapes, Tanimoto similarity is a reliable proxy for property similarity. On rough landscapes with activity cliffs, structurally identical neighbors may have vastly different properties, making structural similarity alone an unreliable guide for neighbor selection. Moreover, we show that Tanimoto similarity alone is insufficient even on smooth landscapes when similar neighbors are not readily available. Our main contribution, \textbf{\evikalmpg{}} (``Property-guided Evidential Kalman''), learns a property-distance metric to select neighbors most informative for property refinement.

\evikalmpg{} selects neighbors based on a learned property-distance metric (PropDist) trained once on training set pairs using a fingerprint-based multi-layer perceptron (MLP). The MLP learns to predict property distances between query-neighbor pairs using both shared and unshared features of their fingerprints. At inference, candidates are first retrieved by Tanimoto similarity, then re-ranked by property relevance through PropDist before a batch Gaussian process posterior is applied. Using 50 neighbors ($K=50$), \evikalmpg{} reduces the evidential baseline RMSE by a median of 19.4\% across 14 of 16 benchmark datasets (Table~\ref{tab:main}). Beyond accuracy, property-guided neighbor selection improves calibration by prioritizing neighbors whose labels align with the evidential noise model, typically yielding tighter, better-calibrated posterior uncertainty intervals (Figure~\ref{fig:reliability}, Table~\ref{tab:main}).

To contextualize this contribution, we develop two simpler predecessor methods that establish the foundational principles. \evikal{} (``Evidential Kalman'') is a scalar Kalman filter that formalizes the connection between evidential uncertainty decomposition and test-time neighbor fusion on QSAR smooth datasets. \evikalm{} (``Gaussian Process Evidential Kalman'') generalizes \evikal{} to handle correlated neighbors via a Tanimoto covariance kernel, achieving a median 5.1\% RMSE reduction compared to the evidential baseline on smooth datasets. \evikalmpg{} uses the \evikalm{} Gaussian process but with property-guided selection that amplifies gains substantially over \evikalm{} on the same datasets: CDK2 ($-7.9\% \to -42.4\%$), BACE ($-6.9\% \to -38.9\%$), LD50 ($-5.1\% \to -38.2\%$) (Table~\ref{tab:main}). The progression from \evikal{} to \evikalm{} to \evikalmpg{} reveals that neighbor selection quality---not the fusion algorithm---is the bottleneck for improving test-time refinement of molecular property predictions.

Our benchmark compares and evaluates all three neighbor fusion methods (\evikal{}, \evikalm{}, \evikalmpg{}) across 16 molecular datasets spanning solubility, lipophilicity, kinase inhibition, receptor binding (serotonin 5-HT2A, dopamine D2), acute toxicity, cardiac safety, photovoltaics, aqueous pKa, cytotoxicity, and quantum-chemistry targets. Results are compared against \mcd{}, a plain GP with a Tanimoto kernel, and 5-model deep ensemble baselines. The substantially lower accuracy of the plain GP baseline demonstrates that the evidential neural network's learned representation is critical for achieving strong predictive performance. Calibration is achievable without it, but accuracy is not.

Together, these methods demonstrate that a trained evidential model is not the end of the inference pipeline for refining molecular property predictions; rather, it is a starting point. The training set, which is discarded after training in standard workflows, becomes a source of a corrective signal that can be rapidly queried on demand for molecules at test time. This requires no iterative retraining, no additional labels, and no evidential architecture changes. For practitioners working with limited data, expensive assays, or deployed models that cannot be retrained, test-time neighbor fusion with deep evidential models offers a practical mechanism for improving molecular property predictions and uncertainty calibration entirely at test time.

\section{Background and Related Work}
\label{sec:background}

\paragraph{Evidential regression.}
Placing a Normal-Inverse-Gamma prior on the likelihood parameters $(\mu, \sigma^2)$ of a Gaussian observation model yields a closed-form predictive distribution from a single forward pass (\citet{amini2020deep}). A four-output neural network head predicts the Normal-Inverse-Gamma parameters $(\gamma, \nu, \alpha, \beta)$, from which the predictive mean $\gamma$, aleatoric uncertainty $\ua = \beta/(\alpha - 1)$, and epistemic uncertainty $\ue = \beta/(\nu(\alpha-1))$ are obtained directly. The loss penalizes the model for being confidently wrong. High uncertainty estimates for well-predicted samples are discouraged, while overconfident predictions for poorly predicted samples are penalized more heavily \citep{amini2020deep}. Critically, $\ua$ and $\ue$ are available from a single forward pass---no sampling or ensembling is required at inference time.

\paragraph{Gaussian processes with molecular kernels.}
A Gaussian process (GP) defines a prior over functions and produces a closed-form posterior given training observations \citep{rasmussen2006gaussian}. For molecules, the natural kernel is the Tanimoto similarity over Extended Connectivity Fingerprints (ECFP4; \citealt{rogers2010ecfp})---binary bit vectors that encode the presence of circular substructures within a search radius of each heavy atom. The Tanimoto similarity $\mathrm{sim}(x, x') = |x \cap x'| / |x \cup x'|$ between two fingerprints $x$ and $x'$ is a valid kernel on binary vectors \citep{ralaivola2005graph} and often correlates with property similarity in organic molecules \citep{rogers2010ecfp}. \evikalmpg{} uses the Tanimoto kernel as the GP covariance function, so structurally similar training molecules contribute more to the posterior while correlations between neighbors are accounted for.

\paragraph{Uncertainty calibration.}
A calibrated uncertainty interval refers to a prediction-centered interval that is sized to accurately represent where the ground truth could exist \citep{gruich2023clarifying}. More specifically, the uncertainty interval is calibrated if its empirical coverage matches its nominal level. A 90\% interval should ideally contain the ground truth 90\% of the time (PICP@90\%, ``prediction interval coverage probability at 90\%''). A well-calibrated method achieves PICP@90\% $\approx 0.90$ over the test set; values below this indicate overconfident intervals that capture fewer true labels than promised. Separately, reliability curves measure miscalibration as deviations from parity, Appendix~\ref{app:reliability}. The Expected Calibration Error (ECE) as we define it measures the mean absolute deviation from parity on these curves, averaged over 10 confidence levels spaced from 0.1 to 0.9. Unlike PICP@90\%, ECE does not distinguish between overconfidence/underconfidence and simply measures absolute deviation from perfect calibration on the reliability curve.

\paragraph{Related work.}
Methods for uncertainty-aware prediction under distribution shift include ensemble methods \citep{lakshminarayanan2017simple} and conformal prediction \cite{angelopoulos2023conformal}. Test-time neighbor correction has precedent in $k$NN language model augmentation \citep{khandelwal2020nearest}.  Our contribution is the explicit noise model derived from the evidential decomposition, rather than a learned or heuristic correction. Sequential Bayesian updating in chemistry appears in Bayesian optimization for molecular design \citep{graff2021accelerating}, where a surrogate model is updated as new assay results arrive; \evikalmpg{} is structurally related but operates post-hoc, without any surrogate retraining. \citet{scalia2020evaluating} benchmark uncertainty methods on molecular property prediction and find \mcd{} is systematically overconfident, consistent with our results across a broader set of datasets and metrics. \citet{soleimany2021evidential} apply evidential deep learning to molecular graphs, using the aleatoric/epistemic decomposition as an active-learning acquisition signal; \evikalmpg{} uses the same decomposition as input to Kalman noise matrices for test-time neighbor fusion from a pre-trained model. The closest predecessor is deep kernel learning (DKL; \citealt{wilson2016deep}). \evikalmpg{} differs: the model head that predicts Normal-Inverse-Gamma parameters provides explicit $P_0 = \ue$ and heteroscedastic noise $R_k$ that DKL does not model, and it is post-hoc (i.e., applicable to any pre-trained evidential model without kernel co-training).

\section{Neighbor Fusion at Test Time: Methodology Overview}
\label{sec:method}

\noindent Neighbor fusion improves the predicted property of a given molecule by finding training set neighbors that are structurally similar to the molecule. After neighbors are selected, their true labels and associated uncertainties are fused with the current molecule prediction to refine the property estimate. The training dataset serves as a corrective signal for the evidential model's post-training predictions.

We refer to the molecule whose properties are being refined as the query molecule $q$ (Figure~\ref{fig:overview}a). Structurally similar training neighbors are selected using Tanimoto similarity of ECFP4 fingerprints, which encode molecular structure as bit vectors (2048-bit, capturing neighborhoods up to 2 bonds away). Tanimoto similarity measures bit overlap of these fingerprints, which in turn measures shared structural similarity. With an established similarity metric for neighbor selection, how should a neighbor's label influence the query's predicted property, and how much should it be trusted? The answer depends on the quality of each neighbor as an observation of the query's true property, which in turn depends on how structurally close the neighbor is and how uncertain the query's prediction already is (Figure~\ref{fig:method_overview}). We formalize this using a noise model that assigns each neighbor an observation uncertainty (Eq.~\eqref{eq:Rk}, Figure~\ref{fig:method_overview}b).

If a query's neighbors are treated as noisy observations of the true query property, then these observations should, at most, be only as certain as the query prediction. Our assumed noise model for neighbor observations formalizes this by using the query's aleatoric uncertainty $u_a^q$ as the minimum noise or minimum uncertainty for a given neighbor. Beyond the aleatoric floor, uncertainty grows monotonically with structural dissimilarity to the query, forming a heteroskedastic noise model in which the most structurally similar neighbors are the most reliable observations.

\evikal{} is a scalar Kalman filter that assumes this noise model and processes chosen neighbors sequentially, treating each as an independent observation of the query's true property and inheriting the posterior from the previous step (Eq.~\eqref{eq:gain}--\eqref{eq:var}). The evidential model's epistemic uncertainty $u_e^q$ initializes the prior $P_0$, and neighbor labels refine the estimate one at a time. \evikal{} is effective on smooth QSAR landscapes but has a structural limitation in that inter-neighbor correlations are never accounted for. Correlated neighbors are counted as independent evidence and collectively over-influence the prediction, a failure mode demonstrated on FreeSolv (Figure~\ref{fig:method_comparison}).

\evikalm{} resolves this by processing all $K$ neighbors simultaneously as a batch Gaussian process (Figure~\ref{fig:method_overview}, panels b and c). The Tanimoto covariance kernel encodes inter-neighbor correlations, automatically down-weighting structurally redundant neighbors in proportion to their mutual similarity. \evikalm{} shares \evikal{}'s noise model $R_k$ but replaces sequential updates with a single linear solve over the full neighbor covariance, eliminating the independence assumption. Both variants select neighbors based on Tanimoto similarity, which reliably proxies property similarity on smooth QSAR landscapes but does not guarantee property relevance. A structurally close neighbor may carry an uninformative label if the shared fingerprint features do not predict the target property.

\evikalmpg{} addresses this point at the selection stage. A lightweight property-distance model (PropDist),
trained once on fingerprint-label pairs from the same training set, re-ranks the top-500 Tanimoto
candidates by property relevance before the GP posterior is applied, Eq.~\eqref{eq:propdist}--\eqref{eq:a2sim}. The GP posterior and noise model are unchanged; only the neighbor selection changes. By selecting neighbors whose labels are consistent with the assumed noise model, \evikalmpg{} substantially improves both accuracy and calibration and is the recommended variant.

Fig.~\ref{fig:method_overview} illustrates the full inference pipeline for the Gaussian process used by \evikalmpg{} and \evikalm{}. An evidential neural network encodes the query molecule and outputs evidential parameters decomposing uncertainty into aleatoric and epistemic components, initializing the Gaussian process prior (panel a). The $K$ selected neighbors are treated as noisy sensors with observation noise $R_k$ scaling with structural dissimilarity (panel b). Their labels and mutual correlations are assembled and fused via a single GP batch solve, yielding a refined posterior mean and variance (panel c).

A key observation from panel (c) is that the neighbor covariance matrix $\mathbf{K}_{\mathrm{mat}}$ (blue) is dense, with off-diagonal entries encoding inter-neighbor correlations via Tanimoto similarity and diagonal entries equal to the Gaussian prior variance $P_0$. The observation noise matrix $\mathrm{diag}(R_k)$ (orange) is purely diagonal, with each entry reflecting how noisy a neighbor is as an observation of the query's property. The observation noise augments the diagonal of the prior covariance, yielding $\mathbf{K}_{\mathrm{obs}}$, the matrix that the Gaussian process inverts to compute the refined posterior.

\begin{figure*}[!ht]
\centering
\includegraphics[height=0.80\textheight, keepaspectratio]{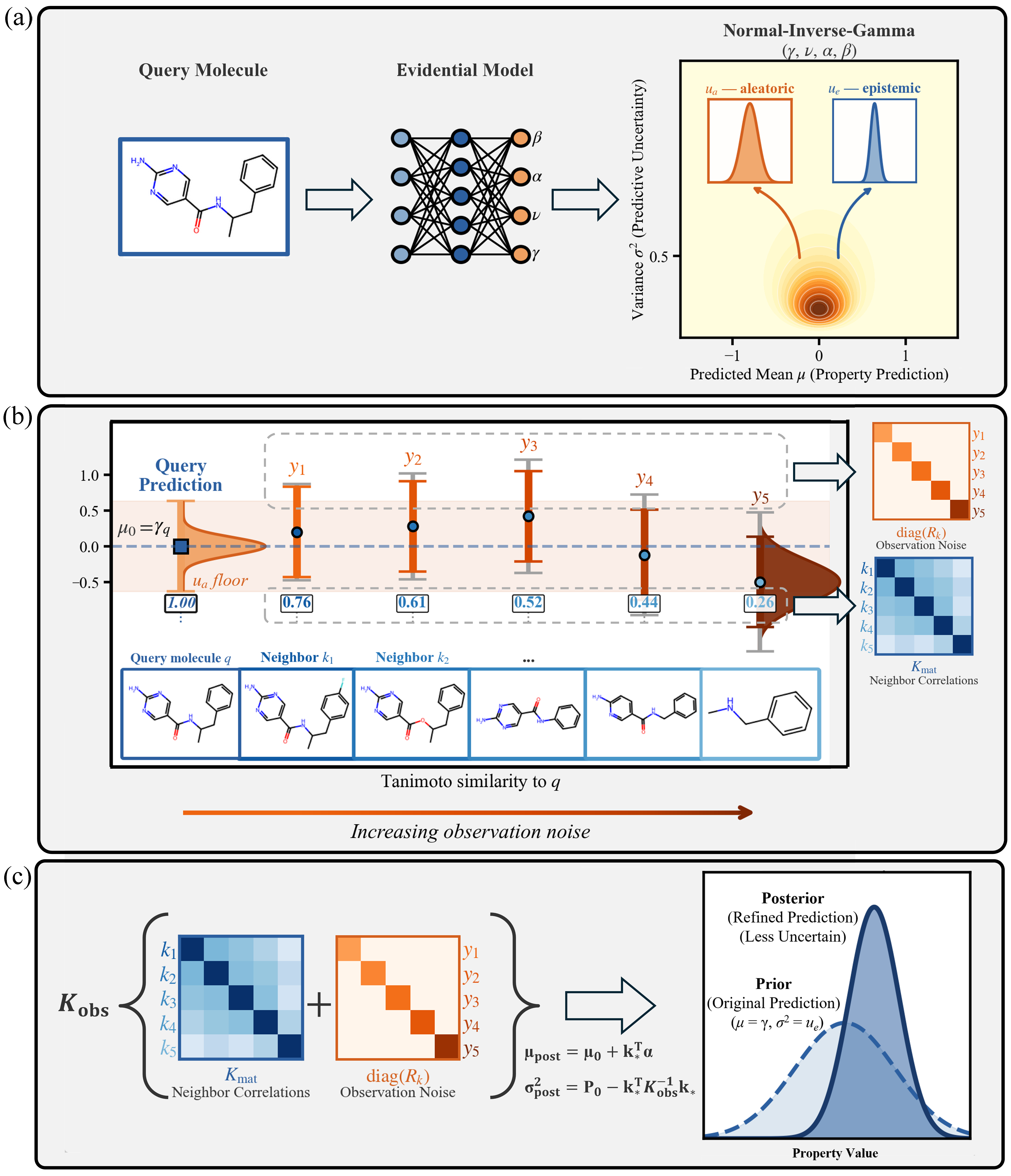}
\caption{\textbf{Gaussian process framework for neighbor fusion, shared by \evikalm{} and
\evikalmpg{}.}
  \textbf{(a)~The evidential inference pipeline for a query molecule $q$.} An evidential neural network (AttentiveFP) encodes the molecular structure and passes the embedding to a four-output evidential head that predicts the Normal-Inverse-Gamma parameters $(\gamma,\nu,\alpha,\beta)$; the predictive mean $\mu_0 = \gamma_q$ and epistemic uncertainty $P_0 = u_e^q$ derived from these parameters initialize the prior. \textbf{(b)~Neighbor selection and observation noise model.} Each of the $K$ training neighbors is treated as a noisy observation of the query's true property $y_q$; the observation noise $R_k$ grows with structural dissimilarity (Eq.~\eqref{eq:Rk}, gradient bar). Gray error bars refer to additional observation noise beyond the minimum noise, as defined by Eq.~\eqref{eq:Rk} based on Tanimoto dissimilarity. \evikalm{} selects neighbors via Tanimoto similarity (blue numbers); \evikalmpg{} uses the same selection but re-ranks neighbors by learned property distance (Section \ref{sec:a2_evikalm}). \textbf{(c)~GP batch posterior computation.} The $K{\times}K$ neighbor kernel matrix $\mathbf{K}_{\mathrm{mat}}$ and diagonal observation noise matrix $\mathrm{diag}(R_k)$ are summed to form $\mathbf{K}_{\mathrm{obs}}$, which is inverted in a single linear solve to yield the posterior mean $\mu_\mathrm{post}$ and variance $\sigma^2_\mathrm{post}$. This GP inference step is identical in both \evikalm{} and \evikalmpg{}; only the neighbor selection strategy before fusion differs.}
\label{fig:method_overview}
\end{figure*}

\subsection{Evidential Regression Head}
\label{sec:evid_head}

\noindent We use AttentiveFP \citep{xiong2020attentivefp} as the molecular encoder followed by an evidential regression model head
that outputs the parameters of a Normal-Inverse-Gamma distribution ($\gamma$, $\nu$, $\alpha$, $\beta$) per-molecule. The molecular property prediction $\hat{y}$ as well as the aleatoric ($u_a$) and epistemic ($u_e$) uncertainties are derived from these parameters:
%
\begin{equation}
  \hat{y} = \gamma, \quad
  \ua = \frac{\beta}{\alpha - 1} = \sigma_a^2, \quad
  \ue = \frac{\beta}{\nu(\alpha - 1)} = \sigma_e^2
  \label{eq:ue_ua}
\end{equation}

Both uncertainty quantities are variances. Aleatoric uncertainty $\ua$ captures the irreducible uncertainty of the molecular property prediction---uncertainty that cannot be removed simply by observing more examples of the same structural type.
Epistemic uncertainty $\ue$ measures how uncertain the model is in a molecular property prediction based on the knowledge encoded in its own model parameters---the uncertainty that shrinks when more relevant data is observed during training and additionally during test-time neighbor fusion.

We constrain the learning of $\nu$, $\alpha$, and $\beta$ to ensure finite, interpretable uncertainty estimates of model predictions:
\begin{equation}
  \nu > 0, \qquad
  \beta > 0, \qquad
  \alpha > 1
  \label{eq:ue_ua_constraints}
\end{equation}

\noindent The $\nu$ and $\beta$ constraints ensure that the epistemic uncertainty of a property prediction is interpretable (i.e., positive and finite), while the added $\alpha$ constraint ensures an interpretable aleatoric uncertainty. These constraints are enforced at the model architecture level via softplus activation functions with offsets, guaranteeing through the transform that all three parameters stay in their valid ranges during both training and test-time inference.

The evidential regression loss is a Normal-Inverse-Gamma negative log-likelihood $\mathcal{L}_{\mathrm{NLL}}$ plus an evidence regularizer $\mathcal{L}_{\text{reg}}$:
\begin{equation}
  \mathcal{L} = \mathcal{L}_{\mathrm{NLL}} + \lambda\,\mathcal{L}_{\text{reg}}, \qquad
  \mathcal{L}_{\text{reg}} = |\hat{y} - y|\,(2\nu + \alpha)
  \label{eq:evi_loss}
\end{equation}
where $\lambda$ is a regularization hyperparameter that prevents the model from accumulating spurious evidence on high-error predictions. We set $\lambda = 0.01$ throughout. Sensitivity of $\lambda$ is reported in Appendix~\ref{app:hyperparams}.

\subsection{From Evidential Decomposition to Kalman Noise Model}
\label{sec:kalman_noise}

\noindent The evidential head provides initial predictions $(\gamma_q, u_a^q, u_e^q)$ for a query molecule $q$ but only sees the query's structure. At test time, we refine this estimate by treating nearby
training molecules as noisy observations of the query's true property. This naturally fits a Kalman
filter framework \citep{kalman1960new}. The query's evidential model prediction and associated epistemic uncertainty serve as the prior to refine. Each neighbor's label becomes a noisy observation of the query molecule's property, with the noise increasing with the structural dissimilarity to the query. For a set of $K$ neighbors retrieved by Tanimoto similarity, we parameterize the observation noise as:
\begin{equation}
R_k = u_a^q + C(1 - \mathrm{sim}_k)^2,
\label{eq:Rk}
\end{equation}

The base noise $u_a^q$ is the irreducible aleatoric uncertainty of the query molecule---even a structurally identical neighbor's label is only as certain as the query molecule $q$ if we treat all neighbors as observations of the query. The dissimilarity term $C(1 - \mathrm{sim}_k)^2$ penalizes structural distance; $C > 0$ is a hyperparameter tuned on the validation set and sets the maximum observation noise for a training neighbor. A completely dissimilar neighbor $k$ has Tanimoto similarity $\mathrm{sim}_k = 0$ and therefore contributes maximal observation noise $u_a^q + C$. 

Neighbor fusion aims to refine the initial query property prediction and epistemic uncertainty. Necessarily, we initialize the Kalman filter \evikal{} directly from the evidential model's output. The initial mean $\mu$ is set to the evidential model's predicted query property ($\mu_0 = \gamma_q$). The initial uncertainty $P_0$ is set to the epistemic variance of the evidential query prediction ($P_0 = u_e^q$). As each neighboring observation is incorporated through iterative Kalman updates, the epistemic uncertainty $P_k$ at neighbor step $k$ shrinks with evidence, while the aleatoric component $u_a^q$ persists unchanged by definition. Updates are sequential. At each iteration, the filter uses the refined property/uncertainty estimates inherited by the last neighbor seen ($k-1$), as well as the current neighbor observation $k$.
\begin{equation}\label{eq:gain}
K_k     = \frac{P_{k-1}}{P_{k-1} + R_k}
\end{equation}
\begin{equation}\label{eq:mean}
\mu_k   = \mu_{k-1} + K_k (y_k - \mu_{k-1})
\end{equation}
\begin{equation}\label{eq:var}
P_k     = (1 - K_k) P_{k-1}
\end{equation}

This is \evikal{}. The Kalman gain $K_k$ in Eq.~\eqref{eq:gain} determines how much
each neighbor's label $y_k$ shifts the query property estimate in Eq.~\eqref{eq:mean}. When epistemic uncertainty $P_{k-1}$ is
large, then $K_k \approx 1$, maximizing the shift toward the neighbor. When uncertainty is low, $K_k
\approx 0$, leaving the estimate nearly unchanged. This automatic reweighting ensures uncertain
predictions are flexible while confident predictions are stable. 

By constraining the evidential parameters to Eq.~\eqref{eq:ue_ua_constraints}, we ensure the filter's gain in Eq.~\eqref{eq:gain} always receives positive, finite, and therefore interpretable uncertainty estimates. Thus, the update of the query property in Eq.~\eqref{eq:mean} remains numerically stable. The query estimate updates toward the neighbor's label in proportion to the Kalman gain scaled by the observation noise model $R_k$ (Eq.~\eqref{eq:Rk}). The epistemic uncertainty
$P_k$ shrinks with each neighbor incorporated---reflecting that additional observations reduce the
refined query property's epistemic uncertainty (Eq.~\eqref{eq:var}), thereby increasing confidence in the prediction. While a reduction is guaranteed at each step, its magnitude ranges from negligible (high observation noise $R_k$ or QSAR rough
relationships) to substantial (low noise and QSAR smooth relationships).

An adaptive outlier filter filters out neighbors whose property distance to the query is a statistical outlier. This filter prevents selecting mislabeled or atypical molecules that are so far away in property distance that they are not well-modeled as observations of the query, despite a high Tanimoto similarity. For \evikal{}, each neighbor $k$ is accepted as a neighbor for query property refinement if:
\begin{equation}\label{eq:gate}
|y_k - \mu_{k-1}| < \sigma_{\mathrm{gate}} \cdot \sqrt{P_{k-1} + R_k}
\end{equation}

This condition ensures that the property distance between the neighbor label $y_k$ and the current refined query prediction $\mu_{k-1}$ does not exceed the uncertainty of the current refined prediction ($P_{k-1}$) plus the added observation noise of the new neighbor ($R_k$), expressed as property distance ($\sqrt{P_{k-1} + R_k}$). When epistemic uncertainty $P_{k-1}$ is high, the tolerance widens and more neighbors contribute; when uncertainty is low, the tolerance tightens and outliers are rejected. This adaptive filtering prevents high-confidence estimates from being derailed by inconsistent observations. The gate threshold $\sigma_{\mathrm{gate}}$ is tuned on the validation set.

\subsection{\evikalm{}: Batch Posterior with Tanimoto Covariance}
\label{sec:gp_evikalm}

\noindent Sequential Kalman updates~\eqref{eq:gain}--\eqref{eq:var} classically treat incoming observations (i.e., neighbors) as independent.
Correlated neighbors break this assumption and can amplify errors (Section~\ref{sec:results_main}). The GP posterior handles correlations exactly under the specified covariance model (Fig.~\ref{fig:method_overview}c).
Let $\mathbf{k}_*$ be the vector of covariances between the query and each neighbor
under the Tanimoto kernel scaled by the prior variance:
$(\mathbf{k}_*)_k = P_0 \cdot \mathrm{sim}(q, x_k)$.
Let $\mathbf{K}_{\mathrm{obs}}$ be the $K\times K$ neighbor-neighbor covariance matrix
plus observation noise:
$({\bf K}_{\mathrm{obs}})_{ij} = P_0 \cdot \mathrm{sim}(x_i, x_j) + R_i\,\mathbf{1}[i=j]$.
The \evikalm{} posterior is:
\begin{align}
  \mu_{\mathrm{post}} &= \mu_0 + \mathbf{k}_*^\top \mathbf{K}_{\mathrm{obs}}^{-1}
                         (\mathbf{y} - \mu_0 \mathbf{1}),
                         \label{eq:gp_mean} \\
  \sigma^2_{\mathrm{post}} &= P_0 - \mathbf{k}_*^\top \mathbf{K}_{\mathrm{obs}}^{-1} \mathbf{k}_*.
                         \label{eq:gp_var}
\end{align}

This is the standard GP conditional distribution with prior mean $\mu_0 = \gamma_q$,
prior variance $P_0 = \ue^q$, Tanimoto kernel, and heteroscedastic observation noise $R_k$.
When $K=1$ and $\mathrm{sim}(q,x_1)=1$, Eq.~\eqref{eq:gp_mean}--\eqref{eq:gp_var} reduce to the scalar Kalman updates of \evikal{} (Eq.~\eqref{eq:gain}--\eqref{eq:var}, Appendix~\ref{app:k1_equiv}). This demonstrates that \evikalm{} is a generalization of \evikal{} that accounts for correlated neighbor effects. We also apply the adaptive outlier filter for \evikalm{}; the filter is applied before the linear batch solve, using the initial prior $P_0$ (i.e., the evidential model's epistemic uncertainty for a query property):
\begin{equation}\label{eq:gate_gp}
|y_k - \mu_0| < \sigma_{\mathrm{gate}} \cdot \sqrt{P_0 + R_k}
\end{equation}
\begin{algorithm}[t]
\caption{\evikalm{} inference for a query molecule $q$}
\label{alg:gpevikalm}
\begin{algorithmic}[1]
\REQUIRE Query SMILES $q$; trained evidential model $f_\theta$; training set $\{(x_i, y_i)\}$;
         hyperparameters $K, C, \sigma_{\mathrm{gate}}$
\STATE $(\gamma_q, \nu_q, \alpha_q, \beta_q) \leftarrow f_\theta(q)$
\STATE Compute $\ua^q$, $\ue^q$ via~\eqref{eq:ue_ua}; set $\mu_0 = \gamma_q$, $P_0 = \ue^q$
\STATE Compute ECFP4 fingerprint $\mathbf{fp}_q$
\STATE Retrieve $K$ nearest neighbors $\{(x_k, y_k, \mathrm{sim}_k)\}$ by Tanimoto similarity to $\mathbf{fp}_q$
\STATE Compute $R_k$ for each neighbor via~\eqref{eq:Rk}
\STATE Apply outlier filter: discard neighbor $k$ if $|y_k - \mu_0| \geq \sigma_{\mathrm{gate}}\sqrt{P_0 + R_k}$
\STATE Compute $\mathbf{k}_*$, $\mathbf{K}_{\mathrm{obs}}$ from retained neighbors
\STATE Clip: $R_k \leftarrow \max(R_k,\, \varepsilon_{\min})$; add jitter $\varepsilon I$ to $\mathbf{K}_{\mathrm{obs}}$ \hfill\COMMENT{ensures numerical stability; $\varepsilon=10^{-6}$, $\varepsilon_{\min}=10^{-4}$}
\STATE Solve $\mathbf{K}_{\mathrm{obs}}\,\boldsymbol{\alpha} = (\mathbf{y} - \mu_0\mathbf{1})$
\STATE $\mu_{\mathrm{post}} \leftarrow \mu_0 + \mathbf{k}_*^\top \boldsymbol{\alpha}$;
       $\;\sigma^2_{\mathrm{post}} \leftarrow P_0 - \mathbf{k}_*^\top \mathbf{K}_{\mathrm{obs}}^{-1} \mathbf{k}_*$
\ENSURE Prediction $\mu_{\mathrm{post}}$; uncertainty $\sigma^2_{\mathrm{post}}$
\end{algorithmic}
\end{algorithm}

\subsection{Why Neighbor Fusion Helps}
\label{sec:theory}
\noindent The batch update of Eq.~\eqref{eq:gp_mean}--\eqref{eq:gp_var} is not an ad hoc correction but exact Bayesian conditioning. The evidential property prediction and epistemic variance act as the prior, and the neighbor labels enter as noisy measurements of the query property. When this local model holds, \evikalm{} is optimal under squared loss, and the epistemic uncertainty can only shrink, strictly so whenever the neighbors are informative (Appendix~\ref{app:theorems}, Theorem~\ref{theorem:risk_reduction}).

In practice, the local observation model is only approximate, and fusion helps when the gap between the neighbor labels and the query's true property stays small relative to the prior error still available to correct (Appendix~\ref{app:theorems}, Proposition~\ref{prop:misspecification}). This explains the method's behavior across the benchmark; label gaps widen under activity cliffs, sparse structural coverage, or property-irrelevant neighbors, Section~\ref{sec:results}. The smoothness diagnostic of the next section measures these label gaps at the dataset level, and property-supervised selection (\evikalmpg{}, Section~\ref{sec:a2_evikalm}) narrows them by choosing neighbors whose labels track the query's property.

\subsection{QSAR Smoothness: A Diagnostic for When \evikalm{} Helps}
\label{sec:smoothness}

\noindent Both \evikal{} and \evikalm{} require that structural similarity predicts property similarity. On smooth QSAR landscapes, structurally close neighbors report consistent label values. On rough QSAR landscapes (e.g., activity cliffs, quantum-electronic properties), a structurally close neighbor may report a qualitatively different property value, and no amount of neighbor covariance modeling compensates for an observation model that does not match the landscape (Section~\ref{sec:results_main}).

We operationalize landscape smoothness with the QSAR smoothness ratio, a dataset-wide metric:
\begin{equation}
  s = \frac{\mathrm{median}_{q \in \mathcal{D}_\mathrm{test}} |y_q - y_{q,1}|}{\sigma_{\mathrm{train}}},
  \label{eq:smoothness}
\end{equation}
where $y_{q,1}$ is the label of the Tanimoto-nearest training neighbor of $q$
and $\sigma_{\mathrm{train}}$ is the training label standard deviation. This quantity refers to the typical property distance between the closest neighbor of each test query, normalized by the amount the molecular property varies in the training dataset. A low value of $s$ implies that the most similar training neighbor for any given query is typically close in property distance. A high value of $s$ implies a typically large property distance. QSAR smoothness $s$ is a measure of the dispersion of property distances between queries and their closest neighbors. The median---as opposed to the mean---is used to prevent outlier skew. 

Both \evikal{} and \evikalm{} rely on neighbors to refine the query estimate; therefore, trust is placed on
the similarity metric (Tanimoto similarity, $\mathrm{sim}$) to identify training neighbors that are effective observations of the query. A necessary condition is that neighbors to queries must carry more corrective signal than noise; their property distances to the query must be smaller than the dataset's overall property variation, giving the lower bound $s < 1$. When $s \geq 1$, neighbors differ too widely in large property distances to provide reliable signal about the query. Because the quantity $s$ uses the closest neighbor defined by the similarity metric, this theoretical criterion is tied to the similarity metric and independent of the evidential model choice.

Compared to the evidential model baseline in our benchmark, prediction refinement is usually reliable when $s < 0.65$. This empirical threshold is determined by selecting the value of $s$ that typically satisfies the effective signal-to-noise ratio of the Kalman update on the evidential model property predictions. Eq.~\eqref{eq:var} rearranges to give how much a refined query property changes at each iteration, $\Delta\mu =K_k (y_k - \mu_{k-1})$. This prediction correction helps only when the query's neighbors provide more corrective signal than observation noise:
\begin{equation}
\mathrm{SNR}_{\mathrm{eff}} = K_{\mathrm{gain}} \cdot \frac{1 - s^2}{s^2} \geq 1
\label{eq:SNRapprox}
\end{equation}

The ratio $\frac{1 - s^2}{s^2}$ is a signal-to-noise ratio---a ratio of variances. As $s$ increases, the typical property distance between the most structurally similar neighbor and a given query grows larger by definition; necessarily, increasing $s^2$ represents increasing noise with respect to property distance to the query. Likewise, the complement of $s^2$ (i.e., $1 - s^2$) oppositely represents signal. When $\mathrm{SNR}_{\mathrm{eff}} > 1$, the Kalman-weighted signal in the numerator exceeds the noise, improving predictions. When $\mathrm{SNR}_{\mathrm{eff}} < 1$, noise dominates. Theory and validation of criterion given in Appendix~\ref{app:snr_eff_validation}. 

The theoretical criterion $s < 1$ and empirical criterion $\mathrm{SNR}_{\mathrm{eff}}$ (usually satisfied by $s < 0.65$ for our model choice/hyperparameters) set the expectation on whether \evikal{} and \evikalm{} will typically improve molecular properties on a given dataset. Since $K_{\mathrm{gain}}$ dictates the $\mathrm{SNR}_{\mathrm{eff}}$ criterion and $K_{\mathrm{gain}}$ depends on the model configuration, the empirical value of $s$ that satisfies $\mathrm{SNR}_{\mathrm{eff}}$ should be reassessed for other architectures (Appendix~\ref{app:snr_eff_validation}). Beyond $s = 0.65$ in our benchmark, neighbors typically differ too much in property space to reliably improve predictions.

\subsection{Property-Supervised Similarity (\evikalmpg{})}
\label{sec:a2_evikalm}

\noindent Two molecules with identical Tanimoto similarity to the query may differ substantially in their usefulness as query observations. When the structural features driving that similarity are uninformative about the target property, the neighbor's label deviates from the query's true value by more than the noise model $R_k$ anticipates. The Kalman update then over-trusts that neighbor, biasing the posterior refined prediction. \evikalmpg{} addresses this at the query neighbor selection stage (Fig.~\ref{fig:method_overview}b) by changing the choice of neighbors. By learning which structural differences co-vary with property differences, it preferentially suggests neighbors whose labels are consistent with the assumed noise model.

We train an MLP, PropDist, to predict the absolute property gap $|y_i - y_j|$ from fingerprint features of molecule pairs drawn from the training set:
\begin{equation}
  \mathrm{PropDist}(x_i, x_j) = \mathrm{MLP}_\phi\!\bigl([\mathbf{fp}_i \wedge \mathbf{fp}_j;\; \mathbf{fp}_i \oplus \mathbf{fp}_j]\bigr),
  \label{eq:propdist}
\end{equation}
where $\wedge$ and $\oplus$ denote bitwise AND (shared features) and XOR (differing features) of the ECFP4 fingerprints, concatenated to a 4096-dimensional input. The architecture uses three hidden layers (256, 128, 64) with LayerNorm, ReLU, and Dropout(0.2), trained on 800k pairs per dataset (MSE loss, seeds 0--3). PropDist is trained once per dataset from the same training molecules as the evidential model, using no test labels. More training details given in Appendix~\ref{app:hyperparams}.3.

The property-guided similarity combines the structural gate of Tanimoto with PropDist's property-relevance score:
\begin{equation}
  \mathrm{sim}^{\mathrm{PG}}(q, x_k) = \mathrm{Tanimoto}(q, x_k) \cdot \exp\!\bigl(-\mathrm{PropDist}(q, x_k)\bigr),
  \label{eq:a2sim}
\end{equation}
Tanimoto gates out structurally foreign molecules; PropDist re-ranks within the accessible pool by how property-predictive the shared features are. At inference, a GPU-batched Tanimoto prescreen retrieves the top-500 candidates, PropDist scores those 500, and \evikalmpg{} selects the top-$K$. PropDist identifies  which neighbors are informative for selection; Tanimoto provides the kernel structure that ensures well-behaved GP inference. When comparing \evikalmpg{} to \evikalm{}, only the neighbor selection changes. PropDist adds one GPU forward pass over 500 candidates per query, a small overhead relative to the evidential forward pass that dominates inference cost (Appendix~\ref{app:runtime}).


\section{Experimental Setup}
\label{sec:experiments}

\paragraph{Datasets.}
We evaluate across seven domains of molecular property prediction (sixteen datasets total).
\textbf{Solubility and lipophilicity} (scaffold split): ESOL (water solubility, 1128 molecules), FreeSolv (hydration free energy, 642 molecules), and Lipophilicity (log $D$, 4200 molecules) \citep{wu2018moleculenet}. \textbf{Drug safety and receptor binding} (random split): BACE (enzyme inhibition, 1513 molecules) \citep{wu2018moleculenet}, CDK2 (kinase inhibition, 2060 molecules), hERG (cardiac ion channel, 8362 molecules), 5-HT2A (serotonin receptor, 5388 molecules), and D2 (dopamine receptor, 8433 molecules) \citep{gaulton2012chembl}. \textbf{Organic photovoltaics}: HOPV (power conversion efficiency, 340 molecules) \citep{lopez2016harvard}. \textbf{Aqueous pKa} (8745 molecules from the OPERA pKa training set~\citep{mansouri2019opera}, random split). \textbf{Toxicity}: LD50 acute oral toxicity (7385 molecules \citep{zhu2009quantitative}, random split). \textbf{High-throughput screens}: Thermosol (kinetic solubility, 1763 molecules) \citep{wu2018moleculenet}, NCI-60 mean cytotoxicity \citep{reinhold2012ncialmanac} (19126 molecules). \textbf{Quantum chemistry} (random split): QM7 atomization energies \citep{blum2009970, rupp2012fast}, QM8 first singlet excitation energies (E$_1^{\text{CC2}}$; 21786 molecules \citep{ramakrishnan2015electronic}), and QM9 HOMO--LUMO gap \citep{ramakrishnan2014qm9} (130,831 molecules in full dataset; we use 120,000 in a 100k/10k/10k random split). QSAR smoothness ratios span $s = 0.136$ (QM8, QM9) to $s = 0.834$ (QM7, roughest). Dataset statistics and data curation details are in Appendix~\ref{app:datasets}.

\paragraph{Baselines.}
\textbf{\mcd{}} \citep{gal2016dropout}: 50 stochastic forward passes with dropout rate.  Mean and variance of outputs serve as the prediction and the uncertainty, respectively. \textbf{\evid{}}: AttentiveFP with Normal-Inverse-Gamma head, no test-time inference (Section~\ref{sec:evid_head}). \textbf{Sequential \evikal{}}: scalar Kalman filter with outlier filter, Eq.~\eqref{eq:gain}--\eqref{eq:var}. \textbf{\evikalm{}}: batch GP posterior, Eq.~\eqref{eq:gp_mean}--\eqref{eq:gp_var} (ours) with outlier filter. \textbf{GP-Tanimoto}: MLL-optimized zero-mean GP with $\sigma_f^2 \cdot \mathrm{Tanimoto}$ kernel plus a WhiteKernel noise term, fit to the same training set.

\paragraph{Metrics.}
RMSE and MAE measure predictive accuracy. PICP@90\% measures calibration: what fraction of test labels fall within the predicted 90\% prediction interval (target: 0.90). ECE is the expected calibration error (Section~\ref{sec:background}). NLL is the negative log-likelihood under the predictive distribution \citep{gneiting2007strictly, lakshminarayanan2017simple}. All experiments report mean $\pm$ std over 5 random seeds with fixed dataset splits.

\paragraph{Hyperparameters.}
\evikalm{} uses $K = 5$ neighbors; \evikalmpg{} uses $K = 50$ (the larger candidate pool allows PropDist to surface property-relevant neighbors beyond the Tanimoto top-5; see Appendix~\ref{app:ablation_K}). $C$ and $\sigma_{\mathrm{gate}}$ are tuned on the validation set per dataset via grid search ($C \in \{0.1, 0.5, 1.0, 2.0, 5.0, 10.0, 20.0, 50.0, 100.0, 200.0\}$; $\sigma_{\mathrm{gate}} \in \{0.0, 0.5, 1.0, 2.0, 3.0\}$; $\sigma_{\mathrm{gate}} = 0.0$ disables the innovation gate). AttentiveFP uses the published hyperparameters from \citet{xiong2020attentivefp}; training details and ablations over $K$ and $C$ are in Appendix~\ref{app:hyperparams}.


\section{Results}
\label{sec:results}

\subsection{Neighbor Selection: From Structural Similarity to Property Relevance}
\label{sec:results_main}

\begin{figure*}[!ht]
\centering
\includegraphics[width=\linewidth]{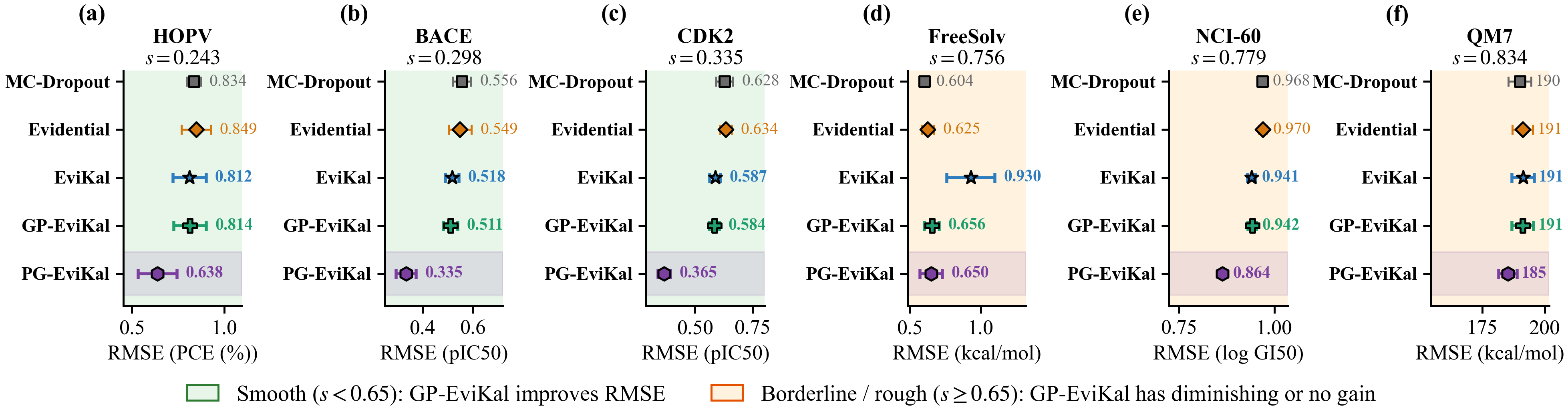}
\caption{
\textbf{Method comparison across six representative datasets} (RMSE, 5 seeds;
full comparison in Figure~\ref{fig:method_comparison_full}).
Datasets span key cases: HOPV (smooth, small-dataset case), CDK2 and BACE (smooth, largest \evikalmpg{} gains), FreeSolv (sparse structural coverage of neighbors that fails neighbor fusion),
NCI-60 (adequate structural coverage of neighbors despite similar roughness to FreeSolv), and QM7 (rough). QM7 highlights that topological molecular representations (the atom/bond graph the model encodes and the ECFP4 fingerprints used for neighbor retrieval) are too coarse for effective neighbor selection and fusion on certain quantum chemical properties.
\evikalmpg{} (purple) achieves the best RMSE across all cases except the FreeSolv dataset.
}
\label{fig:method_comparison}
\end{figure*}

\noindent Not all structurally similar neighbors are effective for refining a query's property. Neighbor selection is critical. Selection based on Tanimoto similarity assumes that structural similarity predicts property closeness between a query and a neighbor. This holds on smooth QSAR landscapes (Figure~\ref{fig:method_comparison}) but breaks down when molecules with high structural similarity have vastly different properties (i.e., activity cliffs \citep{maggiora2006outliers, vantilborg2022exposing}) or when structurally distant molecules share property-relevant features. 

\evikal{} and \evikalm{} suffer from this limitation. \evikalmpg{} addresses it through a two-stage approach: (1) use Tanimoto to quickly retrieve a large candidate pool (top 500 neighbors) and (2) re-rank candidates by a learned property-distance metric {$\mathrm{sim}^{\mathrm{PG}}$} before selecting the final $K$ neighbors. Tanimoto similarity selects the neighbor candidate pool; $\mathrm{sim}^{\mathrm{PG}}$ selects the most relevant neighbors from that pool. The payoff is substantial on smooth datasets, where the property-guided selection of \evikalmpg{} amplifies gains compared to \evikalm{}, dramatically for CDK2 ($-7.9\% \to -42.4\%$), BACE ($-6.9\% \to -38.9\%$), and HOPV ($-4.0\% \to -24.9\%$) (Figure~\ref{fig:method_comparison}a--c). On rough or sparse landscapes, property-guided selection prevents the over-selection of structurally similar but property-irrelevant neighbors.

\noindent \textbf{Smooth regimes ($s < 0.65$).}
Tanimoto-selected neighbors are reasonably reliable in this regime, so \evikal{} and \evikalm{} consistently improve RMSE. Intuitively, a low $s$ means the typical distance from a query to its most similar
training neighbor is small, implying both structural and property closeness. Such neighbors help only when they carry more corrective signal than noise---a condition quantified by the effective signal-to-noise ratio $\mathrm{SNR}_{\mathrm{eff}}$ (Eq.~\ref{eq:SNR}). The empirical threshold $s < 0.65$ marks when the signal component of this ratio exceeds the noise component across the benchmark for the given evidential model and hyperparameters (Figure~\ref{fig:smoothness_analysis}a). Beyond this threshold, observations become noisier than informative (Figure~\ref{fig:smoothness_analysis}b). We confirm that this criterion is satisfied for eleven out of twelve smooth datasets where \evikalm{} improves RMSE; theoretical derivation and empirical validation are in Appendix~\ref{app:snr_eff_validation}, and a dataset-by-dataset reading of both panels of Figure~\ref{fig:smoothness_analysis} is given in Appendix~\ref{app:smoothness}.

\evikalmpg{} amplifies this substantially: eleven of twelve smooth datasets improve, ranging from modest (ESOL: $-9.4\%$) to dramatic (CDK2, BACE, LD50: $-38$ to $-42\%$) (Table~\ref{tab:main}). The twelfth, QM9, is unchanged. Together with QM8 (Table~\ref{tab:main}), these two datasets represent an accuracy ceiling case. The evidential model choice, AttentiveFP, has a topological molecular-graph representation that is exhausted, and the evidential model has already learned essentially all the information that representation can encode. Neighbors add little regardless of the selection strategy because subtle electronic-structure effects tied to orbital interactions and charge distribution are captured neither by the graph the model encodes nor by the Tanimoto fingerprints used to retrieve neighbors; both describe only local connectivity and discrete atom/bond features, not long-range electronic interactions. No amount of neighbor fusion can extract a signal that the representation was never designed to capture.

\begin{figure*}[!ht]
\centering
\includegraphics[width=\linewidth]{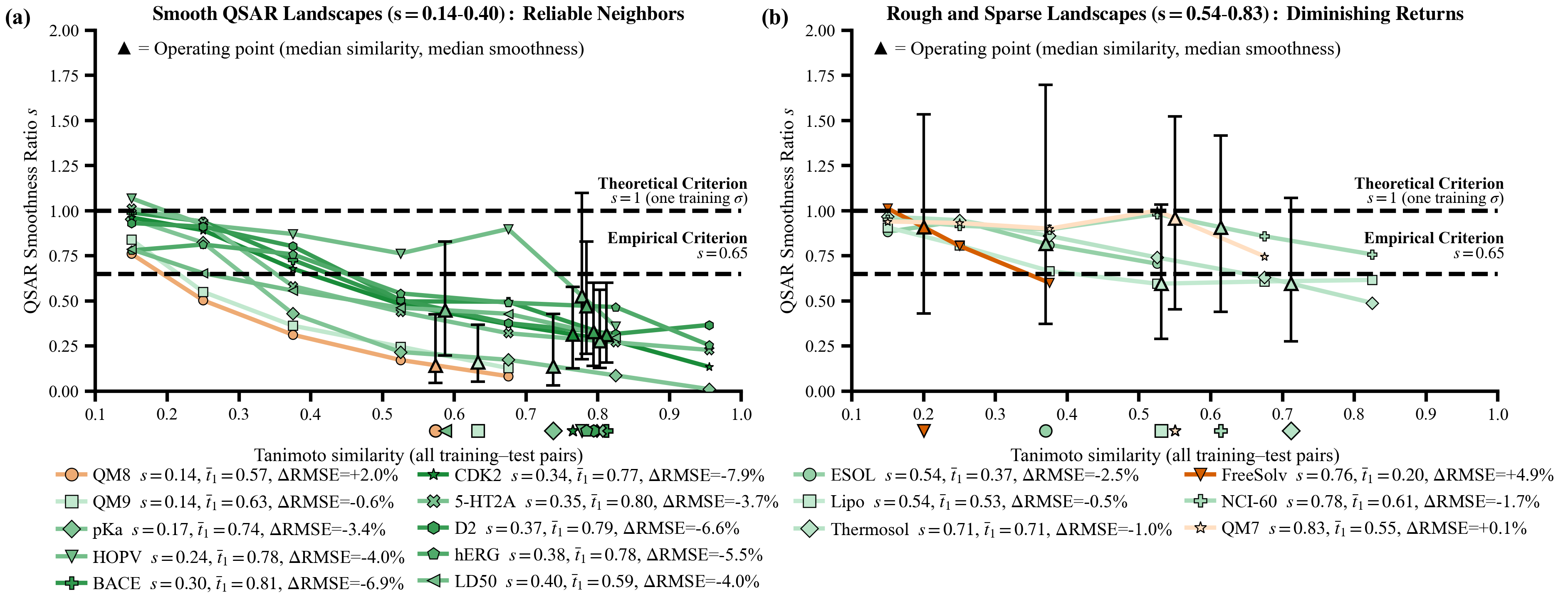}
\caption{
\textbf{When are neighbors reliable?}
Each curve plots the median QSAR smoothness across all test query molecules, binned by Tanimoto similarity. The triangles ($\blacktriangle$) represent the actual \evikalm{} operating point for each dataset: the median similarity and property distance between a test query and its closest neighbor when performing neighbor fusion. Operating point bars represent the interquartile range, or how much the property distance between query and closest neighbor varies at the typical neighbor similarity seen. Markers below x-axis line up with a given dataset's operating point. \textbf{(a) Smooth Datasets:} Operating points sit in a zone of high neighbor similarity and low property distance, where neighbors are reliable teachers for query property refinement. \textbf{(b) Sparse and Rough Datasets:} FreeSolv's operating point sits far left with high structural dissimilarity between closest neighbors and given queries; NCI-60 has similar QSAR roughness compared to FreeSolv, but the measured similarity between closest neighbors and queries is much higher. The contrast in $\Delta{RMSE}$ demonstrates the importance of structurally similar and available neighbors for fusion.
}
\label{fig:smoothness_analysis}
\end{figure*}

\noindent \textbf{Sparse and rough regimes ($s > 0.65$).}
FreeSolv ($s=0.756$) shows why Tanimoto-only selection fails. Its neighbors are structurally sparse as measured by the typical Tanimoto similarity between a query and closest neighbor (median top-1 Tanimoto $= 0.20$, Figure~\ref{fig:smoothness_analysis}b), so even property-guided re-ranking cannot recover much signal (+4.0\%, Table~\ref{tab:main}). Yet, NCI-60 at nearly identical smoothness ($s=0.779$) but denser neighbor coverage (top-1 Tanimoto $= 0.61$, Figure~\ref{fig:smoothness_analysis}b) achieves $-10.9\%$ improvement. This contrast demonstrates that structural availability, not landscape roughness alone, determines whether neighbor fusion helps. Where neighbors exist, property-guided selection makes them count; where they do not exist, no method recovers the missing information.
\begin{figure*}[!ht]
\centering
\includegraphics[width=\linewidth]{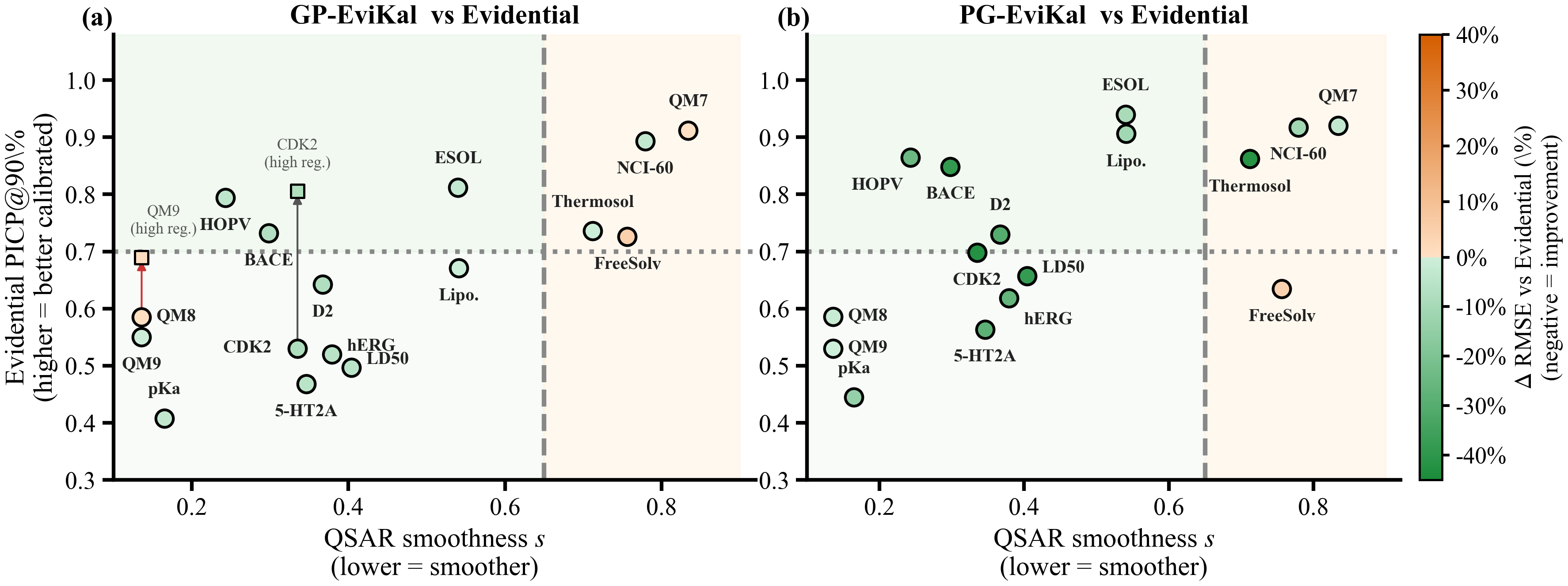}
\caption{
\textbf{Property-guided neighbor selection outperforms structural similarity alone.}
Neighbor fusion succeeds when the QSAR landscape is smooth ($s < 0.65$). \textbf{(a)} \evikalm{} uses Tanimoto structural similarity alone to select neighbors and shows consistent gains compared to the evidential baseline but fails on rough landscapes. Regularization ablations (arrows) show that improving calibration of the evidential model alone does not drive neighbor fusion accuracy. \textbf{(b)} \evikalmpg{} adds learned property distance to neighbor selection, amplifying gains substantially ($-$38 to $-$42\% on CDK2, BACE, LD50). Tanimoto captures only mutual structural similarity (\evikalm); property distance additionally captures which structurally similar molecules chosen are informative for property refinement (\evikalmpg). \evikalmpg{}'s performance over \evikalm{} shows that neighbor selection largely drives accuracy gains.
}
\label{fig:diagnostic}
\end{figure*}

\noindent \textbf{Calibration cost.}
All methods compress predictive intervals (by design: $\sigma^2_\mathrm{post} < P_0$). On well-calibrated priors (PICP@90\% $\geq 0.70$), compression is modest ($\leq 7$\%) and leaves intervals useful for decision-making. On overconfident priors, increasing regularization ($\lambda$) recovers calibration while preserving RMSE gains, Appendix~\ref{app:hyperparams}.5.

Figure~\ref{fig:diagnostic} maps RMSE improvement of \evikalm{} and \evikalmpg{} compared to the evidential model baseline across QSAR smoothness and calibration. The upper-left quadrant (smooth, well-calibrated) is the clear win zone for \evikalmpg{}, where it delivers substantial gains ($-38$ to $-42\%$). The advantage extends beyond this quadrant: across all quadrants, \evikalmpg{} improves or matches \evikalm{} by prioritizing informative neighbors over structurally similar ones. The FreeSolv outlier (rough landscape, $s=0.756$) shows the fundamental limit. When the structure--property landscape is too QSAR rough, no neighbor selection strategy recovers accuracy. There are no neighbors to find. Elsewhere, property-guided re-ranking ensures that available neighbors are ranked by property relevance rather than by structure alone.

\subsection{Calibration: How Property-Guided Selection Improves Evidential Regression}
\label{sec:results_calibration}

\noindent A refined prediction is more actionable when it is accompanied by a trustworthy estimate of its own error. Having established that property-guided neighbor selection sharpens accuracy (Section~\ref{sec:results_main}), we now examine whether the same mechanism also yields calibrated uncertainty. Across the benchmark, baseline calibration varies widely. \mcd{} is consistently overconfident, so the uncertainty intervals centered on its predictions severely undercover the ground truth across most of the benchmark, achieving PICP@90\% between 0.09 and 0.72, with most datasets below 0.54 (Table~\ref{tab:main}). Evidential regression is substantially better calibrated through its structured aleatoric/epistemic decomposition, outperforming \mcd{} on 13 of 16 datasets (Table~\ref{tab:main}). Even so, evidential regression alone remains overconfident on several datasets (CDK2, hERG, pKa), reaching only 0.41--0.53 PICP@90\%. This overconfidence can be corrected by adjusting the evidential hyperparameter $\lambda$ (Appendix~\ref{app:hyperparams}.5), but we hold $\lambda$ fixed for comparability across the benchmark.

Property-guided neighbor selection (\evikalmpg{}) improves calibration of an overconfident evidential model. By selecting neighbors that are both structurally and property-relevant, Kalman updates refine the posterior meaningfully rather than incorporating noisy signals. Where Tanimoto similarity alone pairs structurally similar but property-dissimilar neighbors, \evikalmpg{} better balances which neighbors inform the posterior. \evikalmpg{} achieves better calibration than base evidential regression on 14 of the 16 benchmark datasets (Table~\ref{tab:main}; the reliability curves are in Appendix~\ref{app:reliability}).
Together, structured uncertainty decomposition and property-guided neighbor selection improve both accuracy and calibration.

This calibration is what distinguishes \evikalmpg{} from the simpler alternative of simply averaging the labels of the neighbors. As analyzed in Appendix~\ref{app:averaging}, simple neighbor averaging can exceed \evikalmpg{} on point prediction RMSE when the evidential prior is inaccurate, but simple averaging only returns a bare point estimate. \evikalmpg{} instead provides both a refined prediction and a calibrated interval that quantifies how far the ground truth is expected to lie from that prediction. This point prediction gap, however, is not fundamental. A standard Gaussian process refinement that lets the neighbors bias the GP's prior mean rather than holding it fixed at the evidential model prediction closes the gap \citep{matheron1963principles,cressie1993statistics,rasmussen2006gaussian}, matching simple averaging on the affected datasets and surpassing it on most, while retaining the calibrated uncertainty interval (Appendix~\ref{app:averaging}).

\subsection{Why the Neural Mean Is Essential}
\label{sec:results_gp}
\noindent A plain Gaussian process with a Tanimoto kernel achieves near-perfect calibration but has insufficient accuracy relative to the evidential neural network (Figure~\ref{fig:gp_tanimoto}). Thus, a plain GP Tanimoto kernel alone is insufficient; it must be anchored to a global mean function that is well-fitted to the structure--property trend.

\evikalm{} combines neural mean accuracy with GP covariance structure, providing both precision and principled uncertainty (Deep Ensemble comparison: Appendix~\ref{app:ensemble}). However, the method may select structurally similar but property-dissimilar
neighbors---poor choices for neighbor fusion. \evikalmpg{} uses the same \evikalm{} framework and inherits its advantages but extends Tanimoto-only neighbor ranking with learned property distance. By doing so, property-guided selection better recovers both accuracy and calibration where Tanimoto-only selection fails.

\begin{figure}[t]
\centering
\includegraphics[width=0.90\linewidth]{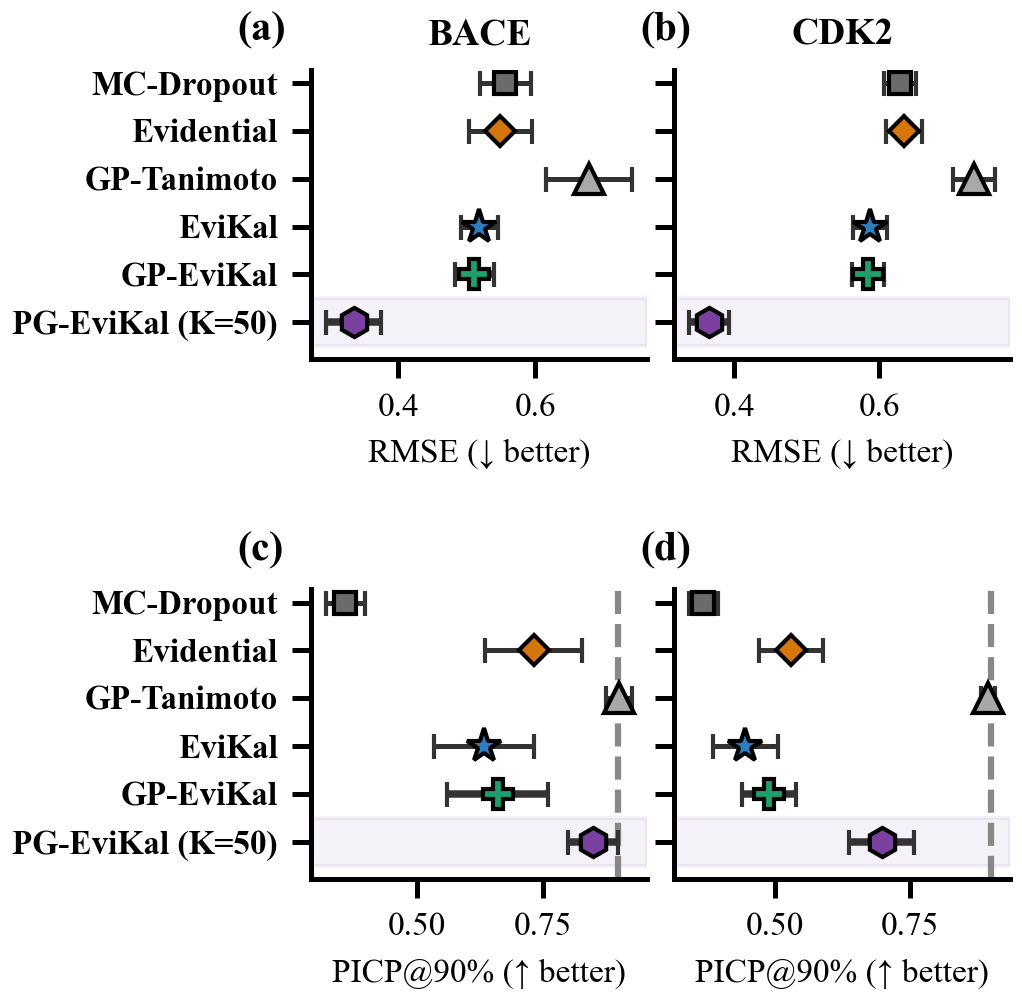}
\caption{
  \textbf{Why use an evidential neural network over a plain Gaussian process for neighbor fusion?}
  The evidential model is much more accurate than the plain GP. \evikalmpg{} (shaded purple) leverages \evikalm{} with property-guided neighbor selection, becoming substantially more accurate than the base evidential model and much closer in calibration to the plain GP.
}
\label{fig:gp_tanimoto}
\end{figure}

\subsection{\evikalmpg{}: Property-Supervised Neighbor Selection}
\label{sec:results_a2}
\noindent Does learning the property distance between a query and neighbors improve neighbor selection enough to help on rougher landscapes? Property-guided selection enables \evikalmpg{} to improve on 14 of 16 datasets at $K=50$ (Table~\ref{tab:main}). On smooth datasets where \evikalm{} already helped, the gains amplify substantially. Thermosol transforms from barely moving ($-0.7\%$ with Tanimoto) to $-41.6\%$ with learned property distance. This reveals the core problem: Tanimoto was selecting structurally similar but property-irrelevant neighbors that degraded the posterior. NCI-60 shows the same pattern. At $K=5$ neighbors, Tanimoto selection yields minimal gain; at $K=50$ neighbors, a larger candidate pool allows PropDist to identify informative neighbors even on a QSAR rough landscape.

FreeSolv is the sole failure ($+4.0\%$). The limitation is data scarcity: 513 training molecules provide insufficient examples for PropDist to learn generalizable property distances. Structural sparsity (median top-1 neighbor similarity = 0.20) compounds this---every candidate is distant, so no re-ranking strategy gives rise to meaningfully better alternatives. The core insight is that improvement is driven by neighbor quality, not quantity. The PropDist MLP utilized by \evikalmpg{} does not increase the neighbors $K$. Rather, the predicted property distance is used to reorder the neighbor set of Tanimoto-prescreen candidates to prioritize property-informative molecules. The GP posterior then fuses them exactly as before (Eq.~\eqref{eq:gp_mean}--\eqref{eq:gp_var}).

\subsection{Online \evikalmpg{}: Test-Time Assay Incorporation Without Retraining}
\label{sec:results_online}

\noindent \evikalmpg{} incorporates assay results without retraining (Figure~\ref{fig:online_kalman_summary}). As new data arrives, a pre-trained evidential model applies posterior updates to refine the predictions at test time.
This assay scenario matters when effective retraining is infeasible, whether due to limited data, computational constraints, training instability, or the need for immediate refinement from new data.

For this assay scenario, practitioners hold a seed model trained on 20\% of data and receive experimental results (assay measurements) in eight sequential batches. Figure~\ref{fig:online_kalman_summary} shows four representative datasets; full results in Appendix~\ref{app:additional} (Figure \ref{fig:online_kalman}).
At each round, online neighbor fusion incorporates the new batch through posterior updates.
We compared this against retraining the model from scratch on all accumulated data (Figure~\ref{fig:online_kalman_summary}, dashed gray baseline vs.\ squares). 

On smooth landscapes (Figure~\ref{fig:online_kalman_summary}a,b) property-guided neighbor selection (purple hexagons) matches or beats retraining, especially with small seed sets. HOPV is the clearest example. 54 molecules are too few for the model to learn well, yet \evikalmpg{} reaches -41.3\% error compared to the seed model without any retraining. On rough landscapes (Figure~\ref{fig:online_kalman_summary}c,d), retraining eventually wins by refining the encoder, which is an advantage that these posterior updates cannot replicate. 

The takeaway is straightforward: when retraining is unstable or infeasible, property-guided neighbor fusion provides a practical path forward.
On smooth landscapes with small seed sets, \evikalmpg{} succeeds where gradient-based fine-tuning fails. On rough landscapes, retraining dominates once data accumulates.
\evikalmpg{} in this assay scenario fills a real gap; practitioners can incorporate new experimental data without retraining.


\section{Discussion}
\label{sec:discussion}

\begin{figure*}[!t]
\centering
\includegraphics[width=1.0\linewidth]{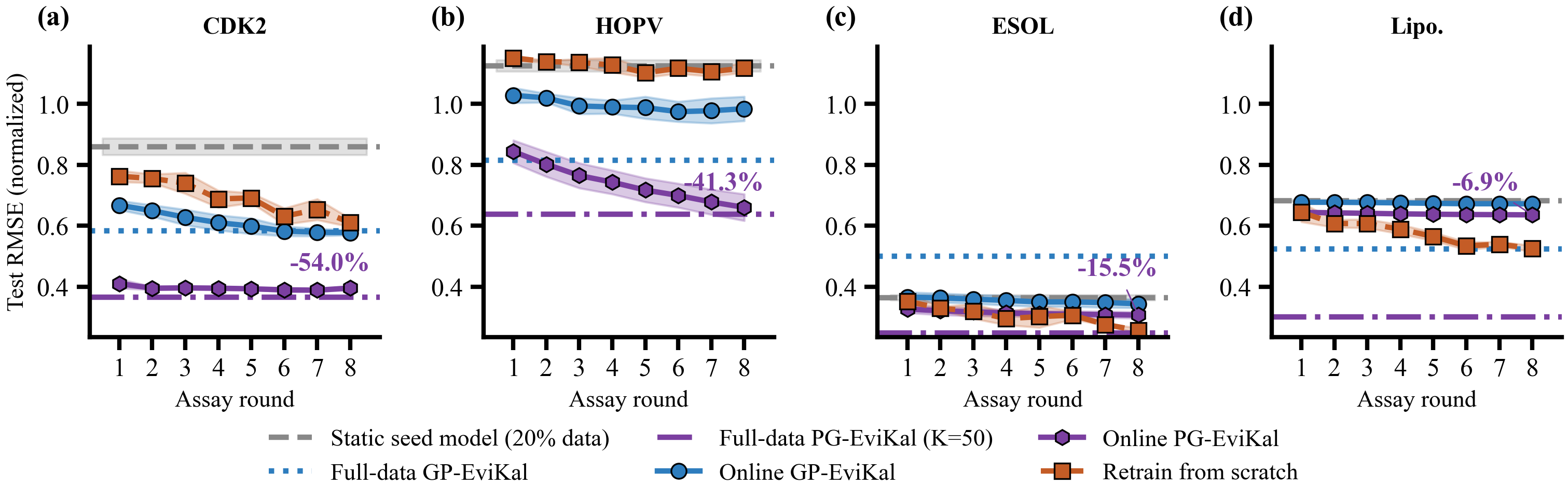}
\caption{\textbf{\evikalmpg{} for sequential assay incorporation without retraining.} Starting from a seed evidential model (20\% of data), online inference applies posterior updates as assay batches arrive over eight rounds, without retraining. Property-guided neighbor selection prioritizes informative neighbors, with opposing outcomes across QSAR landscapes. Percentages inset refer to test set RMSE reduction of \evikalmpg{} compared to the seed model baseline. \textbf{(a) CDK2:} on smooth landscapes, property-guided selection substantially outperforms retraining. \textbf{(b) HOPV:} 54 seed molecules are too few for stable fine-tuning, yet online \evikalmpg{} succeeds without any gradient updates. \textbf{(c, d) ESOL and Lipophilicity:} On rough landscapes, retraining dominates by refining encoder representations, something the posterior updates cannot match.}
\label{fig:online_kalman_summary}
\end{figure*}

\paragraph{When to use \evikalmpg{}.}
\evikalmpg{} is the recommended variant: it extends \evikalm{} by replacing Tanimoto neighbor selection with a learned property-distance metric and improves RMSE on 14 of 16 datasets (Table~\ref{tab:main}). PropDist training requires only the training molecules used for the evidential model (no test labels, no retraining) and should be the default when the training set is large enough for the
metric to generalize ($\gtrsim 1{,}000$ molecules).

Two characteristics determine whether neighbor fusion will help (Figure~\ref{fig:diagnostic}). The QSAR smoothness ratio $s$ measures whether structurally similar molecules tend to have similar properties. When $s < 0.65$ in our benchmark, neighbors reliably inform refinement. Above this, they become too unpredictable to reliably sharpen the posterior. The evidential model's calibration (PICP@90\%) indicates whether the prior uncertainty is well-estimated. When PICP@90\% $\geq 0.70$ in our benchmark, neighbor fusion improves accuracy and maintains uncertainty quality. Below this threshold, RMSE still improves, but corrections compress the posterior more aggressively. Calibration recovery requires careful regularization tuning before deployment (Section~\ref{sec:results_calibration}). Together, these two criteria constitute an explicit domain of applicability for neighbor fusion in the OECD sense: $s$ (with the underlying $\mathrm{SNR}_{\mathrm{eff}}$ condition and the neighbor coverage it depends on) bounds the chemical neighborhoods in which a training label is an informative observation of a query, while PICP@90\% bounds the model states in which the evidential prior can be trusted; queries outside these bounds lie outside the domain where test-time refinement is expected to help. These thresholds are empirical and should be validated on each dataset in practice.

\paragraph{Computational cost.}
\evikalmpg{} is practical. PropDist adds one GPU-batched forward pass over prescreen candidates and a GP solve, bringing total inference time to roughly 3× a single-model forward pass (Appendix~\ref{app:runtime}). This is substantially cheaper than \mcd{}. Both \evikalm{} and PropDist are post-hoc refinements; training is identical to the evidential baseline.

\paragraph{Limitations.}
Despite \evikalmpg{}'s effectiveness on 14 of 16 datasets, several limitations warrant attention. The noise model assumes structural dissimilarity and aleatoric uncertainty contribute independently to observation noise. On datasets where this assumption breaks down---such as those with prevalent activity cliffs \citep{vantilborg2022exposing, stumpfe2012exploring}---\evikalmpg{} provides less benefit than the smoothness diagnostic predicts.

Neighbor fusion is fundamentally limited by the feature representation. Both the AttentiveFP molecular graph the base model encodes and the ECFP4 fingerprints used for neighbor retrieval are topological representations of a molecule; they encode discrete structural features (atom types, bond connectivity, functional groups) but cannot capture well the continuous variations in electronic structure that drive quantum-chemical properties \citep{faber2017prediction, gilmer2017neural}. On QM8 and QM9, properties depend on subtle orbital interactions and electron distributions that such topological representations cannot represent in fine detail. The evidential model learns to predict these properties from its molecular-graph embedding as best as possible, but Tanimoto-similar neighbors share the same topological blind spot and offer no additional information about these unrepresented effects. On such datasets, improving the molecular representation itself is necessary \citep{schutt2018schnet}.

The smoothness threshold ($s < 0.65$) and calibration threshold (PICP@90\% $\geq 0.70$) are coupled to the choice of evidential neural network and should be re-assessed on each dataset before deployment. Similarly, the GP posterior assumes independent measurement noise per neighbor. Systematic batch effects in experimental data can violate this assumption and bias the posterior.

A counterintuitive failure mode occurs on datasets where the evidential model's epistemic uncertainty is paradoxically low for out-of-distribution queries. Here, \evikalmpg{} applies the smallest correction precisely where caution is most warranted (Appendix~\ref{app:ood}). Finally, neighbor fusion is a test-time refinement. It cannot substitute for retraining when new data introduces qualitatively different chemical scaffolds or regions of chemical space. It is most effective for property correction on the in-domain chemical space the evidential model was trained on.


\section{Conclusions \& Future Work}
\label{sec:conclusion}

\noindent Molecular property models are trained and then used to make test predictions, yet the training set remains available at inference time. We show that evidential models, which decompose predictive uncertainty into aleatoric and epistemic components, can exploit this information. When a molecular structure-property relationship is smooth (i.e., when structurally similar molecules tend to have similar properties), similar training molecules act as noisy observations that refine a query's property estimate without any retraining or architectural changes.

The key insight is that one can leverage the uncertainty decomposition of evidential neural networks to model this fusion. If a training neighbor is modeled as an observation of the query property, then the aleatoric uncertainty of the query sets a floor: even a perfectly similar neighbor inherits this same measurement noise and cannot reduce uncertainty below it. Epistemic uncertainty, by contrast, represents model ignorance that neighbors can reduce. This naturally leads to treating neighbors as information sources. Neighbors serve as a corrective signal for a molecule's evidential property prediction and epistemic uncertainty.

Structurally similar neighbors are information sources, but which neighbors matter most? Tanimoto similarity, a standard choice, captures only structural overlap. However, the QSAR smoothness assumption depends on whether shared structures actually predict the target property. \evikalmpg{} learns property distance from the training set, recognizing that structural and property relevance are equally important for refining predictions. Across 16 datasets, \evikalmpg{} improves accuracy on 14 of 16 (Table~\ref{tab:main}) and recovers calibration where the Tanimoto-only selection of \evikalm{} fails. On borderline datasets like Thermosol and NCI-60, \evikalmpg{} recovers meaningful gains by prioritizing property-relevant neighbors---revealing that selection quality, not the fusion algorithm, was the bottleneck.

\evikalmpg{} uses a drop-in Gaussian process on top of the pre-trained evidential model, as well as a pre-trained MLP for predicting property distance for neighbor re-ranking. At test time, no retraining is required to refine molecular property predictions of the evidential model. On smooth landscapes receiving assay data sequentially, \evikalmpg{} matches or exceeds full retraining without gradient updates. This work shows that evidential regression is more than a calibration tool. The uncertainty decomposition is an actionable inference resource. Evidential uncertainty, properly decomposed, carries information that neighbors can help resolve, transforming uncertainty estimates into an active inference resource for refining molecular properties.

\paragraph{Future work.}
One avenue of exploration is whether property-supervised similarity generalizes beyond ECFP4 fingerprints. Foundation models trained on large chemical datasets \citep{chithrananda2020chemberta, ross2022large} could, in principle, provide richer structural representations tailored to a property by using the model's learned embeddings. Rather than using the similarity of molecular fingerprints, the similarity of a query to a neighbor could be measured between a model's learned embeddings. More importantly, foundation models, which are usually trained on large chemical spaces, enable access to potentially denser chemical neighborhoods and therefore potentially higher quality sets of neighbors for fusion. With similarity measured across learned embeddings and an abundance of similar neighbors per query, the QSAR smoothness assumption may hold better, and therefore, the property refinement may be more robust.

Another avenue is to explore unifying the PropDist training with the evidential objective end-to-end, allowing both to jointly optimize. FreeSolv remains the principal failure case---limited by training-pair scarcity, not landscape roughness. Together with Thermosol and NCI-60, these three datasets form natural test beds to extend neighbor fusion.

\paragraph{Data availability.}
All sixteen benchmark datasets, the trained evidential model checkpoints (five seeds each), the property-distance neighbor-selection models, per-dataset normalization statistics, and the aggregated result files are archived at \href{https://doi.org/10.5281/zenodo.21287083}{10.5281/zenodo.21287083}, together with per-dataset metadata and SHA-256 checksums.

\paragraph{Code availability.}
All experiments, baselines, and the QSAR smoothness diagnostic are implemented in an open-source repository at \url{https://github.com/CGruich/PG-EviKal}~\href{https://github.com/CGruich/PG-EviKal}{\faGithub}, released under the MIT license. The repository includes all code, the configuration files to reproduce every result in Table~\ref{tab:main}, and a self-contained tutorial notebook that walks through both \evikalmpg{} and the biased-mean Gaussian process of Appendix~\ref{app:averaging}; the trained model checkpoints are provided in the data archive above. The software is archived at \href{https://doi.org/10.5281/zenodo.21287045}{10.5281/zenodo.21287045}, a version-independent DOI that resolves to all releases; this paper corresponds to release v1.0.0. All reported results were produced with Python 3.10.20, PyTorch 2.2.2 (CUDA 12.1), PyTorch Geometric 2.5.2, RDKit 2023.09.6, DeepChem 2.8.0, NumPy 1.26.4, SciPy 1.15.2, scikit-learn 1.7.2, pandas 2.3.3, and Matplotlib 3.10.9.

\paragraph{Acknowledgments.}
C.G. acknowledges support from the NSF Graduate Research Fellowship Program under Grant No. DGE 1841052. All authors also acknowledge NSF Grant No. 2435696 for financial support.

\paragraph{Contributions.}
Conceptualization, C.G.; Methodology, C.G., W.Y.; Code, C.G.; Visualization, C.G.; Resources, C.G., Y.W., B.R.G.; Data Curation, C.G.; Writing—Original Draft, C.G.; Writing—Review \& Editing, C.G., W.Y., Y.W., B.R.G.

\paragraph{Conflict of Interest.}
The authors declare no competing financial interest.

\paragraph{ORCID IDs.}~\\
\noindent Cameron Gruich \orcidlink{0000-0002-3801-1296} \\
\noindent Weichi Yao \orcidlink{0000-0002-3412-5317} \\
\noindent Yixin Wang \orcidlink{0009-0008-1816-4982} \\
\noindent Bryan R. Goldsmith  \orcidlink{0000-0002-6617-4842}

\FloatBarrier
\bibliography{refs}
\bibliographystyle{icml2026}

\appendix
\onecolumn
\raggedbottom  


\section{Datasets and Data Curation}
\label{app:datasets}

\noindent Table~\ref{tab:datasets} summarizes the sixteen molecular datasets used in this work.
Scaffold splits use Bemis--Murcko frameworks~\citep{bemis1996properties};
random splits are uniformly random $80/10/10$ partitions.
The QSAR smoothness ratio $s$ (Eq.~\ref{eq:smoothness} in the main text) is computed
on the test set of seed 0 using ECFP4 fingerprints with radius 2 and 2048 bits.

\subsection{Data curation}
\label{app:curation}

\noindent All sixteen datasets are featurized through a common pipeline. SMILES strings are parsed
and canonicalized with RDKit~\citep{rdkit}. Structures that fail RDKit sanitization are discarded,
and the RDKit canonical SMILES of each surviving structure is used as its identity key. Records
with a missing structure or a missing label are dropped. Where a source reports replicate
measurements for the same canonical SMILES, the replicates are aggregated to their median value
before splitting. The 5-HT2A and D2 sets are additionally desalted by retaining the largest organic
fragment of each structure, removing counter-ions that are predominantly hydrochloride; the
remaining datasets are used as supplied by their sources. Molecular fingerprints are ECFP4 (Morgan,
radius 2, 2048 bits) computed from these canonical structures, and the same structures are supplied
to the evidential neural network.

\paragraph{ChEMBL bioactivity datasets (CDK2, hERG, 5-HT2A, D2).}
Activities are restricted to IC$_{50}$ measurements reported with an exact relation ($=$), in nM
units, from binding assays (ChEMBL \texttt{assay\_type} $=$ B) against single-protein targets.
The pChEMBL value is used where reported; otherwise, the standard value is converted to
pIC$_{50}$. Activities outside pIC$_{50} \in [3, 12]$ are discarded as lying beyond the reliable
dynamic range of the assays. Replicate measurements of the same canonical structure are
aggregated to their median pIC$_{50}$.

\paragraph{pKa.}
Records are taken from the OPERA pKa training set~\citep{mansouri2019opera}, filtered to the
$[0, 14]$ pKa range and aggregated to the median pKa per canonical structure.

\paragraph{Public benchmark datasets.}
ESOL, FreeSolv, Lipophilicity, BACE, Thermosol, QM7, QM8, and QM9 are used as distributed by
MoleculeNet/DeepChem~\citep{wu2018moleculenet} and are curated upstream by those repositories.
LD50 is the acute oral toxicity set of \citet{zhu2009quantitative}. HOPV is deduplicated by
canonical SMILES~\citep{lopez2016harvard}. NCI-60 is the mean log GI$_{50}$ across cell
lines~\citep{reinhold2012ncialmanac}.

\paragraph{Scope of standardization.}
Beyond the steps above, structures are used as supplied by their source repositories. Fingerprints
and molecular graphs are computed from the same sanitized, canonicalized structures. Both the
neighbor retrieval fingerprints and the model inputs are derived from these curated structures, so
a query and its retrieved neighbors are always compared in a single consistent representation.

\subsection{Datasets}
\label{app:dataset_descriptions}

\begin{table}[h]
\centering
\caption{Dataset statistics. $s$ = QSAR smoothness ratio (Eq.~\ref{eq:smoothness});
         PICP@90\% is the 5-seed mean evidential baseline.
         Split sizes are from seed 0; other seeds differ by $<5$ molecules.
         QM8 ($s=0.136$) and QM9 ($s=0.136$) are the two accuracy-ceiling cases; QM7 ($s=0.834$) is the roughest dataset.}
\label{tab:datasets}
\small
\setlength{\tabcolsep}{4pt}
\begin{tabular}{lcccccccc}
\toprule
Dataset & Property & Unit & Split & $N_\mathrm{train}$ & $N_\mathrm{val}$ & $N_\mathrm{test}$ & $s$ & PICP@90\% \\
\midrule
QM8         & E$_1$ (CC2, excitation)  & eV    & random   & 17429  & 2179  & 2178  & 0.136 & 0.585 \\
pKa         & Aqueous pKa              & pKa unit & random & 6996  & 874   & 875   & 0.165 & 0.408 \\
HOPV        & Power conversion eff.    & \%    & random   & 272    & 34    & 34    & 0.243 & 0.794 \\
BACE        & pIC$_{50}$ (BACE1)       & ---   & random   & 1210   & 151   & 152   & 0.298 & 0.732 \\
CDK2        & pIC$_{50}$ (CDK2)        & ---   & random   & 1648   & 206   & 206   & 0.335 & 0.530 \\
5-HT2A      & pIC$_{50}$ (5-HT2A)      & ---   & random   & 4310   & 539   & 539   & 0.346 & 0.468 \\
D2          & pIC$_{50}$ (D2R)         & ---   & random   & 6746   & 843   & 844   & 0.367 & 0.642 \\
hERG        & pIC$_{50}$ (hERG)        & ---   & random   & 6690   & 836   & 836   & 0.379 & 0.520 \\
LD50        & Acute oral toxicity      & log mol/kg & random & 5908 & 738 & 739  & 0.404 & 0.497 \\
ESOL        & Water solubility         & log mol/L & scaffold & 902  & 112  & 114   & 0.540 & 0.812 \\
Lipo.       & log $D$ (pH 7.4)         & ---   & scaffold & 3360   & 420   & 420   & 0.541 & 0.671 \\
Thermosol   & Kinetic solubility       & log $\mu$M & random & 1410 & 176  & 177   & 0.712 & 0.736 \\
FreeSolv    & Hydration free energy    & kcal/mol & scaffold & 513  & 64   & 65    & 0.756 & 0.726 \\
NCI-60      & Mean log GI$_{50}$       & log $\mu$M & random & 15301 & 1913 & 1912 & 0.779 & 0.893 \\
QM7         & Atomization energy       & kcal/mol & random & 5470  & 684   & 684   & 0.834 & 0.912 \\
QM9         & HOMO--LUMO gap           & eV    & random   & 100000 & 10000 & 10000 & 0.136 & 0.550 \\
\bottomrule
\end{tabular}
\end{table}

\paragraph{QM8.}
The QM8 dataset~\citep{ramakrishnan2015electronic} contains 21786 small organic molecules (up to 8 heavy atoms: C, N, O, F)
from the GDB-17 chemical space, with electronic spectra computed at multiple levels of theory
using time-dependent DFT and coupled-cluster (CC2) theory.
We use the lowest singlet excitation energy $E_1^{\mathrm{CC2}}$ as the prediction target.
Despite being a quantum-chemical property, QM8 has the lowest QSAR smoothness ratio in our benchmark
($s = 0.136$). Electronic excitation energies are dominated by the conjugation network
and chromophore group, both of which AttentiveFP captures well from the molecular graph.
Yet, QM8 does not benefit from \evikalm{} ($+0.5\%$, essentially neutral) due to a base model accuracy
ceiling ($R^2 \approx 0.977$). The neural prior learns the property to the extent its topological molecular-graph representation permits, and that representation is a coarse, discretized signal for the fine electronic structure these properties depend on; the Tanimoto-similar (ECFP4) neighbors used for fusion inherit the same limitation, so they add no signal.
This illustrates that the QSAR smoothness ratio is necessary but not sufficient for \evikalm{} benefit.
The base model must also have room to improve.

\paragraph{pKa.}
The aqueous pKa dataset contains 8745 drug-like compounds from the OPERA pKa training set~\citep{mansouri2019opera}
with measured acid dissociation constants (pKa) in the range 0--14.
We use the median pKa per SMILES for multi-protic compounds.
pKa is the smoothest drug-like dataset in our benchmark ($s = 0.165$). Neighboring compounds
share functional groups that dominate the acid--base equilibrium,
making ECFP4-based neighbor labels highly predictive.
Despite this, the evidential model is overconfident (PICP@90\% = 0.41), placing pKa
in the ``smooth but overconfident'' quadrant of the diagnostic (Figure~\ref{fig:diagnostic}). 
\evikalm{} improves RMSE by $-3.5\%$ but calibration
does not recover without increased evidence regularization.

\paragraph{HOPV.}
The Harvard Organic Photovoltaic Dataset~\citep{lopez2016harvard} contains
340 organic photovoltaic molecules (after RDKit standardization) with experimentally
measured power conversion efficiency (PCE).
The small size and chemically heterogeneous structures make this the worst \mcd{} result
on any smooth dataset (PICP@90\% = 0.177).
Despite heterogeneity, the QSAR landscape is smooth ($s = 0.243$), likely because
PCE is dominated by a small number of shared structural motifs across OPV candidates.

\paragraph{BACE and CDK2.}
Both datasets contain pIC$_{50}$ measurements (negative log of half-maximal inhibitory
concentration) from ChEMBL~\citep{gaulton2012chembl}.
BACE inhibitors share a common pharmacophore, producing high intra-dataset Tanimoto
similarity (median test-to-train: 0.812 for BACE random split); this is the primary
reason \evikalm{} performs particularly well here ($-6.9\%$), the third largest improvement in the benchmark behind CDK2 ($-7.9\%$) and D2 ($-7.6\%$).
CDK2 inhibitors are slightly more diverse (smoothness 0.335 vs.\ 0.298 for BACE)
but still meet the smooth QSAR criterion.

\paragraph{5-HT2A and D2 receptors.}
These two datasets contain pIC$_{50}$ measurements for serotonin receptor 5-HT2A and dopamine
receptor D2R, curated from ChEMBL~\citep{gaulton2012chembl}.
Both are central nervous system (CNS) GPCR targets relevant to antipsychotic drug design.
The 5-HT2A dataset (5388 molecules, $s = 0.346$) and D2 dataset (8433 molecules, $s = 0.367$)
show smooth QSAR landscapes consistent with other GPCR-binding pIC$_{50}$ targets (hERG, CDK2).
Salt forms (predominantly hydrochloride counter-ions) were removed by retaining the largest
organic fragment of each structure before fingerprint computation.

\paragraph{LD50.}
The LD50 dataset from Zhu et al.~\citep{zhu2009quantitative} contains 7385 drug-like molecules
with measured acute oral median lethal dose (LD50) converted to $-\log_{10}(\mathrm{mol/kg})$.
Despite the mechanistic diversity of toxic endpoints, the dataset is dominated by drug-like
organic compounds forming congeneric series, yielding a smooth landscape ($s = 0.404$).
The dataset was accessed from the Therapeutics Data Commons (TDC) collection~\citep{huang2021therapeutics}.

\paragraph{hERG.}
The hERG cardiac ion channel inhibition dataset (8362 molecules, pIC$_{50}$ from ChEMBL~\citep{gaulton2012chembl}) is the largest random-split drug-safety bioactivity dataset in our benchmark.
Despite its size, the smooth QSAR landscape ($s = 0.379$) allows \evikalm{} to achieve $-5.5\%$ RMSE improvement,
confirming that the benefit is driven by structural landscape geometry, not dataset size.

\paragraph{ESOL, FreeSolv, Lipophilicity.}
These three MoleculeNet benchmark datasets~\citep{wu2018moleculenet} use scaffold splits,
which specifically test cross-scaffold generalization.
Scaffold splits reduce test-to-train Tanimoto similarity relative to random splits,
which partially explains the weaker (but still positive) \evikal{} gains on ESOL and
Lipophilicity compared to the random-split drug safety datasets (Appendix~\ref{app:additional}).
FreeSolv is the failure case, but the mechanism is structural sparsity rather than
landscape roughness. Test molecules find their nearest training neighbor at only
0.20 Tanimoto similarity on average ($s = 0.756$, above the reliable-benefit
threshold of $s < 0.65$, Figure~\ref{fig:smoothness_analysis}b).
At that operating point, the observation noise is so large that each Kalman update
injects more error than it corrects, producing the sequential \evikal{} catastrophe
(+48.7\% RMSE). \evikalm{}'s batch posterior limits but does not eliminate this (+4.9\%) due to the Tanimoto covariance kernel that handles correlated neighbor effects.

\paragraph{QM7.}
The QM7 dataset contains 6{,}838 small organic molecules (up to 7 heavy atoms: C, N, O, S) with DFT-computed atomization energies at the PBE0/tier-2 level~\citep{rupp2012fast}.
SMILES strings are obtained from the DeepChem/MoleculeNet distribution and converted to molecular graphs
with standard ECFP4 fingerprints.
We use a random 80/10/10 split.
The QSAR smoothness ratio $s = 0.834$ places QM7 in the borderline regime ($0.65 \leq s < 1.0$).
Structurally similar molecules share scaffold fragments that partially predict atomization energy,
but the property also depends on fine geometric and electronic details invisible to ECFP4.
The evidential model is well-calibrated (PICP@90\% $= 0.91$), so the calibration condition is met.
However, with median top-1 Tanimoto similarity of only 0.55, the noise injected on neighbors by our Tanimoto dissimilarity noise model is in many cases too large for reliable Kalman corrections on these molecules.
\evikalm{} yields $+0.05\%$ RMSE---a noise-level gain close to zero---confirming
the diagnostic prediction. QM7's QSAR landscape is rough globally, and at its operating point the nearest neighbors are too dissimilar to provide useful structural signal for \evikalm{} property refinement.

\paragraph{Thermosol.}
The Thermosol dataset~\citep{wu2018moleculenet} contains 1763 drug-like molecules with kinetic solubility measurements (log $\mu$M)
from a high-throughput kinetic nephelometry assay, accessed via the DeepChem library.
The dataset spans a structurally diverse collection of pharmaceutical compounds with
$s = 0.712$ and median top-1 Tanimoto similarity $= 0.71$ (random 80/10/10 split).
Thermosol sits above the smooth QSAR threshold ($s = 0.712 > 0.65$), placing it in the
borderline zone. However, \evikalm{} still yields a small but consistent $-0.7\%$ RMSE improvement across 5 seeds,
placing Thermosol at the borderline edge of the benefit regime.
The dataset is also included in the online \evikal{} simulation (Section~\ref{sec:results_online}),
representing a realistic HTS solubility screening scenario where compounds are assayed in batches.

\paragraph{NCI-60.}
The NCI-60 dataset~\citep{reinhold2012ncialmanac} is derived from the NCI Developmental Therapeutics Program's 60-cell-line GI$_{50}$ panel (concentration that inhibits growth by 50\%), one of the most widely used phenotypic anticancer screens.
We computed the mean log GI$_{50}$ (in $\mu$M) across all cell lines for which a valid measurement
existed, yielding 19126 molecules.
The QSAR smoothness ratio $s = 0.779$ reflects the target's mechanistic heterogeneity. Structurally
similar compounds can act through distinct cytotoxic mechanisms on different cell lines, producing
higher label variance and therefore lower QSAR smoothness across structural neighbors than single-target bioactivity assays.

Despite this roughness, the dense structural coverage (median top-1 Tanimoto similarity $= 0.61$)
enables \evikalm{} to achieve $-2.9\%$ RMSE improvement. The reason \evikalm{} helps on NCI-60 but not
FreeSolv---despite nearly identical smoothness ratios---is visible in the operating point triangles of
Figure~\ref{fig:smoothness_analysis}. NCI-60's neighbors are genuinely close in structure (top-1 sim $=
0.61$), so even on a rough landscape they carry enough signal to correct the prediction. FreeSolv's
neighbors are far away (top-1 sim $= 0.20$), placing the operating point in a region where property
gaps are large and neighbors are too dissimilar to be useful sensors. Structural coverage, not
roughness alone, determines whether neighbor fusion helps.

The evidential baseline is well-calibrated on NCI-60 (PICP@90\% $= 0.89$), consistent with
the diverse training distribution that regularizes the evidential prior.
NCI-60 is included in the online \evikalmpg{} simulation (Section~\ref{sec:results_online}) as a
borderline dataset. Retraining wins decisively here because diverse cytotoxic mechanisms require
updated encoder representations that the posterior updates cannot replicate.

\paragraph{QM9.}
We use the HOMO--LUMO gap target from the QM9 quantum chemistry dataset~\citep{ramakrishnan2014qm9},
containing 130,831 small organic molecules computed at the B3LYP/6-31G(2df,p) level of theory. 120,000
molecules were used in a 100,000/10,000/10,000 random split. QM9 has a smooth QSAR landscape ($s =
0.136$, nearly identical to QM8). At high Tanimoto similarity, ECFP4 fingerprints reliably track
HOMO--LUMO gap similarity because molecules that are highly structurally similar tend to share the same
conjugation patterns, ring systems, and substituents that collectively determine frontier orbital
energies. This local structural agreement at the operating point Tanimoto similarities
(Figure~\ref{fig:smoothness_analysis}a) is sufficient
for the QSAR smoothness assumption to hold.

With 100k training molecules, however, the evidential model reaches an accuracy ceiling because
its topological molecular-graph representation (discretized atom and bond features) can only encode so much meaningful information about
electronic structure. QM9 reaches $R^2 \approx 0.977$ under the same hyperparameters as QM8 (their coefficients of determination are virtually identical), and its small mean epistemic variance ($\bar{u}_e \approx 0.006\,\sigma_{\mathrm{train}}^2$, i.e.\ $\approx 0.009$~eV$^2$) indicates the model has absorbed most of the variance that this representation can encode.
The 0.189 eV RMSE is chemically imprecise for HOMO--LUMO gap prediction, but the
representation-inaccessible gains leave negligible room for refinement, like QM8. \evikalm{} yields
$-0.6\%$ RMSE (evidential $0.189 \pm 0.008$ eV vs.\ \evikalm{} $0.188 \pm 0.007$ eV), which is
noise-level and near zero. A high-regularization ablation ($\lambda = 0.05$) worsens the evidential
baseline and yields $+0.7\%$.
\begin{table*}[t]
\centering
\caption{
Main results across sixteen molecular datasets (5 seeds, mean $\pm$ std). Ordered by QSAR smoothness $s$ (Eq.~\ref{eq:smoothness}). \textbf{Bold} marks the best value in each column among all methods except
GP-Tanimoto, which attains high calibration only by widening its intervals at the cost of large prediction error. Deep Ensemble ($M=5$) shown for HOPV and BACE; full results in Appendix~\ref{app:ensemble}.
\dag GP-Tanimoto on BACE/CDK2 only. RMSE and $\pm$std are normalized by the training-set standard deviation $\sigma_{\mathrm{train}}$ (reported per dataset in the first column) for all datasets; multiply by
$\sigma_{\mathrm{train}}$ to recover physical units.
}
\label{tab:main}
\small
\setlength{\tabcolsep}{4pt}
\begin{tabular}{llccccc}
\toprule
Dataset ($s$) & Method & RMSE $\downarrow$ & $\pm$std & PICP@90\% $\uparrow$ & PICP@95\% $\uparrow$ &
ECE $\downarrow$ \\
\midrule
\multirow{6}{*}{\shortstack[l]{QM8\\($s=0.136$)\\($\sigma_{\mathrm{train}}=0.0438$)}}
& \mcd{}     & 0.149 & 0.007 & \textbf{0.721} & \textbf{0.782} & \textbf{0.106} \\
& \evid{}    & 0.151 & 0.007 & 0.585 & 0.652 & 0.199 \\
& \evikal{}    & 0.151 & 0.005 & 0.584 & 0.651 & 0.200 \\
& \evikalm{} & 0.151 & 0.005 & 0.584 & 0.651 & 0.199 \\
& \evikalmpg{}& 0.149 & 0.007 & 0.586 & 0.654 & 0.198 \\
& \evikalmpg{} ($K=50$) & \textbf{0.146} & 0.007 & 0.586 & 0.653 & 0.197 \\
\midrule
\multirow{6}{*}{\shortstack[l]{QM9\\($s=0.136$)\\($\sigma_{\mathrm{train}}=1.26$)}}
& \mcd{}     & \textbf{0.147} & 0.007 & \textbf{0.671} & \textbf{0.740} & \textbf{0.146} \\
& \evid{}    & 0.150 & 0.006 & 0.550 & 0.620 & 0.222 \\
& \evikal{}    & 0.150 & 0.005 & 0.525 & 0.594 & 0.236 \\
& \evikalm{} & 0.149 & 0.006 & 0.545 & 0.614 & 0.224 \\
& \evikalmpg{}& 0.148 & 0.006 & 0.528 & 0.597 & 0.230 \\
& \evikalmpg{} ($K=50$) & 0.150 & 0.006 & 0.530 & 0.599 & 0.228 \\
\midrule
\multirow{6}{*}{\shortstack[l]{pKa\\($s=0.165$)\\($\sigma_{\mathrm{train}}=2.31$)}}
& \mcd{}     & 0.575 & 0.035 & \textbf{0.542} & \textbf{0.604} & \textbf{0.215} \\
& \evid{}    & 0.588 & 0.049 & 0.408 & 0.464 & 0.288 \\
& \evikal{}    & 0.565 & 0.031 & 0.364 & 0.419 & 0.315 \\
& \evikalm{} & 0.568 & 0.039 & 0.393 & 0.448 & 0.294 \\
& \evikalmpg{}& 0.559 & 0.047 & 0.437 & 0.504 & 0.271 \\
& \evikalmpg{} ($K=50$) & \textbf{0.506} & 0.047 & 0.445 & 0.513 & 0.263 \\
\midrule
\multirow{7}{*}{\shortstack[l]{HOPV\\($s=0.243$)\\($\sigma_{\mathrm{train}}=2.32$)}}
& \mcd{}     & 0.834 & 0.038 & 0.177 & 0.241 & 0.409 \\
& Deep Ens.  & 1.437 & 0.232 & 0.335 & 0.394 & 0.344 \\
& \evid{}    & 0.849 & 0.080 & 0.794 & 0.871 & 0.076 \\
& \evikal{}    & 0.812 & 0.090 & 0.724 & 0.812 & 0.107 \\
& \evikalm{} & 0.814 & 0.089 & 0.771 & 0.841 & 0.079 \\
& \evikalmpg{}& 0.740 & 0.099 & 0.848 & 0.912 & 0.088 \\
& \evikalmpg{} ($K=50$) & \textbf{0.638} & 0.105 & \textbf{0.865} & \textbf{0.923} & \textbf{0.075} \\
\midrule
\multirow{8}{*}{\shortstack[l]{BACE\\($s=0.298$)\\($\sigma_{\mathrm{train}}=1.34$)}}
& \mcd{}           & 0.556 & 0.037 & 0.359 & 0.428 & 0.316 \\
& Deep Ens.        & 0.515 & 0.083 & 0.482 & 0.554 & 0.255 \\
& \evid{}          & 0.549 & 0.046 & 0.732 & 0.790 & 0.104 \\
& GP-Tanimoto\dag  & 0.678 & 0.062 & 0.901 & ---   & 0.072 \\
& \evikal{}          & 0.518 & 0.027 & 0.633 & 0.679 & 0.158 \\
& \evikalm{}       & 0.511 & 0.029 & 0.661 & 0.738 & 0.128 \\
& \evikalmpg{}      & 0.406 & 0.043 & 0.819 & 0.891 & 0.091 \\
& \evikalmpg{} ($K=50$) & \textbf{0.335} & 0.040 & \textbf{0.849} & \textbf{0.908} & \textbf{0.080} \\
\midrule
\multirow{7}{*}{\shortstack[l]{CDK2\\($s=0.335$)\\($\sigma_{\mathrm{train}}=1.27$)}}
& \mcd{}           & 0.628 & 0.037 & 0.368 & 0.424 & 0.319 \\
& \evid{}          & 0.634 & 0.025 & 0.530 & 0.590 & 0.225 \\
& GP-Tanimoto\dag  & 0.731 & 0.029 & 0.894 & ---   & 0.032 \\
& \evikal{}          & 0.587 & 0.024 & 0.445 & 0.504 & 0.272 \\
& \evikalm{}       & 0.584 & 0.022 & 0.490 & 0.550 & 0.253 \\
& \evikalmpg{}      & 0.472 & 0.024 & 0.678 & 0.762 & 0.159 \\
& \evikalmpg{} ($K=50$) & \textbf{0.365} & 0.027 & \textbf{0.698} & \textbf{0.776} & \textbf{0.147} \\
\midrule
\multirow{6}{*}{\shortstack[l]{5-HT2A\\($s=0.346$)\\($\sigma_{\mathrm{train}}=1.19$)}}
& \mcd{}     & 0.557 & 0.016 & 0.411 & 0.477 & 0.295 \\
& \evid{}    & 0.567 & 0.021 & 0.468 & 0.540 & 0.264 \\
& \evikal{}    & 0.543 & 0.014 & 0.365 & 0.424 & 0.328 \\
& \evikalm{} & 0.538 & 0.014 & 0.371 & 0.427 & 0.323 \\
& \evikalmpg{}& 0.474 & 0.024 & 0.555 & 0.631 & 0.219 \\
& \evikalmpg{} ($K=50$) & \textbf{0.409} & 0.050 & \textbf{0.564} & \textbf{0.640} & \textbf{0.206} \\
\midrule
\multirow{6}{*}{\shortstack[l]{D2\\($s=0.367$)\\($\sigma_{\mathrm{train}}=1.00$)}}
& \mcd{}     & 0.615 & 0.023 & 0.399 & 0.467 & 0.300 \\
& \evid{}    & 0.638 & 0.049 & 0.642 & 0.703 & 0.160 \\
& \evikal{}    & 0.596 & 0.028 & 0.440 & 0.506 & 0.280 \\
& \evikalm{} & 0.589 & 0.029 & 0.508 & 0.579 & 0.240 \\
& \evikalmpg{}& 0.515 & 0.049 & 0.715 & 0.783 & 0.145 \\
& \evikalmpg{} ($K=50$) & \textbf{0.445} & 0.041 & \textbf{0.730} & \textbf{0.797} & \textbf{0.135} \\
\midrule
\end{tabular}
\end{table*}

\begin{table*}[t]
\centering
\small
\setlength{\tabcolsep}{4pt}
\begin{tabular}{llccccc}
\toprule
\multicolumn{7}{l}{\textit{Table~\ref{tab:main} continued}} \\
\midrule
Dataset ($s$) & Method & RMSE $\downarrow$ & $\pm$std & PICP@90\% $\uparrow$ & PICP@95\% $\uparrow$ &
ECE $\downarrow$ \\
\midrule
\multirow{6}{*}{\shortstack[l]{hERG\\($s=0.379$)\\($\sigma_{\mathrm{train}}=0.916$)}}
& \mcd{}     & 0.649 & 0.005 & 0.358 & 0.414 & 0.322 \\
& \evid{}    & 0.628 & 0.026 & 0.520 & 0.583 & 0.234 \\
& \evikal{}    & 0.597 & 0.013 & 0.397 & 0.454 & 0.308 \\
& \evikalm{} & 0.593 & 0.013 & 0.411 & 0.468 & 0.296 \\
& \evikalmpg{}& 0.535 & 0.018 & 0.603 & 0.671 & 0.194 \\
& \evikalmpg{} ($K=50$) & \textbf{0.462} & 0.028 & \textbf{0.618} & \textbf{0.684} & \textbf{0.182} \\
\midrule
\multirow{6}{*}{\shortstack[l]{LD50\\($s=0.404$)\\($\sigma_{\mathrm{train}}=0.962$)}}
& \mcd{}     & 0.626 & 0.029 & 0.362 & 0.431 & 0.324 \\
& \evid{}    & 0.631 & 0.040 & 0.497 & 0.558 & 0.244 \\
& \evikal{}    & 0.618 & 0.040 & 0.328 & 0.374 & 0.337 \\
& \evikalm{} & 0.599 & 0.032 & 0.441 & 0.504 & 0.274 \\
& \evikalmpg{}& 0.521 & 0.037 & 0.643 & 0.718 & 0.174 \\
& \evikalmpg{} ($K=50$) & \textbf{0.390} & 0.044 & \textbf{0.657} & \textbf{0.731} & \textbf{0.163} \\
\midrule
\multirow{6}{*}{\shortstack[l]{ESOL\\($s=0.540$)\\($\sigma_{\mathrm{train}}=2.07$)}}
& \mcd{}     & 0.484 & 0.013 & 0.330 & 0.389 & 0.347 \\
& \evid{}    & 0.512 & 0.016 & 0.812 & 0.861 & 0.065 \\
& \evikal{}    & 0.496 & 0.014 & 0.809 & 0.856 & 0.072 \\
& \evikalm{} & 0.500 & 0.014 & 0.814 & 0.860 & 0.076 \\
& \evikalmpg{}& 0.500 & 0.039 & 0.930 & 0.965 & 0.041 \\
& \evikalmpg{} ($K=50$) & \textbf{0.464} & 0.032 & \textbf{0.940} & \textbf{0.972} & \textbf{0.037} \\
\midrule
\multirow{6}{*}{\shortstack[l]{Lipo.\\($s=0.541$)\\($\sigma_{\mathrm{train}}=1.20$)}}
& \mcd{}     & 0.546 & 0.016 & 0.354 & 0.407 & 0.329 \\
& \evid{}    & 0.526 & 0.011 & 0.671 & 0.734 & 0.144 \\
& \evikal{}    & 0.525 & 0.011 & 0.634 & 0.701 & 0.171 \\
& \evikalm{} & 0.524 & 0.010 & 0.648 & 0.712 & 0.156 \\
& \evikalmpg{}& 0.519 & 0.053 & 0.898 & 0.936 & 0.067 \\
& \evikalmpg{} ($K=50$) & \textbf{0.467} & 0.031 & \textbf{0.906} & \textbf{0.942} & \textbf{0.061} \\
\midrule
\multirow{6}{*}{\shortstack[l]{Thermosol\\($s=0.712$)\\($\sigma_{\mathrm{train}}=0.972$)}}
& \mcd{}     & 0.730 & 0.049 & 0.308 & 0.362 & 0.351 \\
& \evid{}    & 0.754 & 0.050 & 0.736 & 0.797 & 0.117 \\
& \evikal{}    & 0.748 & 0.073 & 0.592 & 0.678 & 0.213 \\
& \evikalm{} & 0.749 & 0.069 & 0.658 & 0.728 & 0.176 \\
& \evikalmpg{}& 0.618 & 0.055 & 0.848 & 0.909 & 0.099 \\
& \evikalmpg{} ($K=50$) & \textbf{0.440} & 0.076 & \textbf{0.862} & \textbf{0.920} & \textbf{0.088} \\
\midrule
\multirow{6}{*}{\shortstack[l]{FreeSolv\\($s=0.756$)\\($\sigma_{\mathrm{train}}=3.28$)}}
& \mcd{}     & \textbf{0.604} & 0.019 & 0.326 & 0.379 & 0.338 \\
& \evid{}    & 0.625 & 0.044 & \textbf{0.726} & \textbf{0.785} & \textbf{0.101} \\
& \evikal{}    & 0.930 & 0.168 & 0.723 & 0.785 & 0.106 \\
& \evikalm{} & 0.656 & 0.052 & 0.723 & 0.785 & 0.104 \\
& \evikalmpg{}& 0.675 & 0.099 & 0.627 & 0.698 & 0.221 \\
& \evikalmpg{} ($K=50$) & 0.650 & 0.080 & 0.634 & 0.705 & 0.216 \\
\midrule
\multirow{6}{*}{\shortstack[l]{NCI-60\\($s=0.779$)\\($\sigma_{\mathrm{train}}=0.0391$)}}
& \mcd{}     & 0.968 & 0.012 & 0.089 & 0.110 & 0.457 \\
& \evid{}    & 0.970 & 0.011 & 0.893 & 0.942 & \textbf{0.012} \\
& \evikal{}    & 0.941 & 0.011 & 0.774 & 0.844 & 0.094 \\
& \evikalm{} & 0.942 & 0.011 & 0.851 & 0.904 & 0.036 \\
& \evikalmpg{}& 0.947 & 0.011 & 0.912 & 0.955 & 0.033 \\
& \evikalmpg{} ($K=50$) & \textbf{0.864} & 0.012 & \textbf{0.917} & \textbf{0.959} & 0.031 \\
\midrule
\multirow{6}{*}{\shortstack[l]{QM7\\($s=0.834$)\\($\sigma_{\mathrm{train}}=222$)}}
& \mcd{}     & 0.855 & 0.020 & 0.182 & 0.215 & 0.413 \\
& \evid{}    & 0.860 & 0.018 & 0.912 & 0.956 & 0.015 \\
& \evikal{}    & 0.860 & 0.020 & 0.902 & 0.949 & 0.012 \\
& \evikalm{} & 0.860 & 0.019 & 0.908 & 0.955 & 0.014 \\
& \evikalmpg{}& 0.855 & 0.018 & 0.917 & 0.961 & 0.013 \\
& \evikalmpg{} ($K=50$) & \textbf{0.833} & 0.017 & \textbf{0.920} & \textbf{0.963} & \textbf{0.012} \\
\bottomrule
\end{tabular}
\end{table*}
\section{Hyperparameter Details}
\label{app:hyperparams}

\subsection{AttentiveFP Training}

\noindent All evidential models use the AttentiveFP architecture~\citep{xiong2020attentivefp}
with the published default hyperparameters: hidden size 200, 3 graph attention layers,
2 output layers, dropout 0.1, batch size 200.
We replace the final linear layer with a 4-output Normal-Inverse-Gamma head (Section~\ref{sec:evid_head}).
Training uses Adam with learning rate $10^{-3}$, weight decay $10^{-5}$,
and early stopping on validation RMSE with patience 50 epochs (maximum 300 epochs).
The evidence regularization coefficient is $\lambda = 0.01$ throughout,
except for the CDK2 and QM9 $\lambda = 0.05$ variants discussed in Section~\ref{sec:results_main} and Appendix~\ref{app:hyperparams}. Atomic features for AttentiveFP comprise one-hot encodings for atomic number ($1$--$118$), degree ($0$--$10$), formal charge ($-5$--$+5$), hydrogen count ($0$--$8$), and hybridization (SP, SP\textsuperscript{2}, SP\textsuperscript{3}, SP\textsuperscript{3}D, SP\textsuperscript{3}D\textsuperscript{2}), plus boolean flags for aromaticity and ring membership. Bond features encode bond type (single, double, triple, aromatic) and flags for conjugacy and ring membership.

\subsection{\evikalm{} Hyperparameter Tuning}
\label{app:tuning}

\noindent The hyperparameter grid is:
\begin{itemize}[leftmargin=1.5em, topsep=2pt, itemsep=0pt]
  \item $C \in \{0.1,\, 0.5,\, 1.0,\, 2.0,\, 5.0,\, 10.0,\, 20.0,\, 50.0,\, 100.0,\, 200.0\}$
  \item $\sigma_\mathrm{gate} \in \{0.0,\, 0.5,\, 1.0,\, 2.0,\, 3.0\}$
        ($\sigma_\mathrm{gate} = 0.0$ disables the outlier filter)
  \item $K = 5$ fixed (ablation over $K$ in Table~\ref{tab:ablation_K} below)
\end{itemize}

\noindent Tuning is performed on the validation set of seed 0.
To make the grid efficient, the attentive model is run once to produce the test set
and validation set predictions and neighbor lists;
the grid then sweeps over $(C, \sigma_\mathrm{gate})$ without re-running the model.
Wall-clock time for the full 50-point grid (including neighbor precomputation) is
under 3 minutes for all datasets except QM9, where we subsample 2000 validation points
and the sweep takes approximately 7 minutes. The same tuned hyperparameters are applied to all 5 seeds; we do not re-tune per seed.
This is conservative: per-seed tuning would likely show modestly better results
but introduces the risk of overfitting to seed-specific validation fluctuations. Table~\ref{tab:hyperparams} reports the selected hyperparameters per dataset.

\begin{table}[h]
\centering
\caption{Selected hyperparameters per dataset (tuned on seed-0 validation set via grid search),
         ordered by QSAR smoothness $s$.
         Kalman = sequential \evikal{}; GP = \evikalm{}.
         $\sigma_\mathrm{gate} = 0.0$ means no outlier filtering.}
\label{tab:hyperparams}
\small
\setlength{\tabcolsep}{5pt}
\begin{tabular}{lcccc}
\toprule
Dataset & \multicolumn{2}{c}{Sequential \evikal{}} & \multicolumn{2}{c}{\evikalm{}} \\
\cmidrule(lr){2-3} \cmidrule(lr){4-5}
        & $C$ & $\sigma_\mathrm{gate}$ & $C$ & $\sigma_\mathrm{gate}$ \\
\midrule
QM8    & 10.0   & 0.5 & 10.0  & 0.5 \\
QM9    & 0.1    & 3.0 & 0.1   & 3.0 \\
pKa    & 1.0    & 2.0 & 0.5   & 0.0 \\
HOPV   & 10.0   & 1.0 & 10.0  & 1.0 \\
BACE   & 1.0    & 3.0 & 1.0   & 3.0 \\
CDK2   & 0.5    & 0.0 & 0.1   & 0.0 \\
5-HT2A & 1.0    & 0.0 & 0.1   & 0.0 \\
D2     & 0.5    & 0.0 & 0.1   & 0.0 \\
hERG   & 1.0    & 0.0 & 0.1   & 0.0 \\
LD50   & 0.1    & 0.0 & 0.1   & 0.0 \\
ESOL   & 5.0    & 1.0 & 0.1   & 2.0 \\
Lipo.  & 2.0    & 0.5 & 2.0   & 1.0 \\
Thermosol & 5.0 & 3.0 & 2.0   & 3.0 \\
FreeSolv  & 200.0 & 0.0 & 50.0 & 0.5 \\
NCI-60 & 20.0   & 2.0 & 20.0  & 2.0 \\
QM7    & 200.0  & 0.5 & 200.0 & 0.5 \\
\bottomrule
\end{tabular}
\end{table}

\paragraph{Interpretation.}
All sixteen datasets were tuned via the same 50-point grid search on the seed-0 validation set.
The selected hyperparameters follow interpretable patterns.
High $C$ on FreeSolv (200 for sequential, 50 for GP) and QM7 ($C=200$) indicates
heavy penalization of dissimilar neighbors---effectively deweighting all but
the most similar molecules.
FreeSolv has sparse structural coverage (median top-1 Tanimoto similarity $= 0.20$);
QM7 has borderline landscape smoothness ($s = 0.834$), so distant neighbors are unreliable
in both cases.
NCI-60 also selects high $C = 20$ with innovation gate $\sigma_\mathrm{gate} = 2.0$.
The mechanistically heterogeneous cytotoxicity landscape benefits from conservative
neighbor incorporation and moderate outlier rejection.
At the other extreme, smooth GPCR and kinase datasets (CDK2, D2, hERG, 5-HT2A)
select very low $C = 0.1$ for \evikalm{}. The highly smooth QSAR landscape means
even structurally distant neighbors carry reliable label information.
LD50 similarly selects $C = 0.1$. Despite endpoint mechanistic diversity, the dataset
is dominated by drug-like congeneric series with a smooth QSAR.
QM8 selects $C = 10.0$ with modest outlier filtering ($\sigma_\mathrm{gate} = 0.5$).
Even though QM8 is smooth ($s = 0.136$), the accuracy ceiling ($R^2 \approx 0.977$)
means the property corrections adds noise; high $C$ minimizes neighbor influence,
limiting degradation to $+0.5\%$.
Thermosol and BACE select $\sigma_\mathrm{gate} = 3.0$.
Thermosol sits in the borderline zone ($s = 0.712$) where occasional distant neighbors
produce noisy updates, and BACE benefits from rejecting only extreme label outliers
while exploiting the high intra-dataset Tanimoto similarity.
The gate on sequential pKa ($\sigma_\mathrm{gate} = 2.0$) provides modest filtering
against tautomer and ionization-state ambiguities in the ChEMBL data.
QM9 selects $C = 0.1$---the minimum---because
the tuner finds that aggressive neighbor incorporation produces negligible
improvement when the accuracy ceiling binds ($R^2 \approx 0.977$, $s = 0.136$);
the low $C$ minimally perturbs the already near-perfect prior.

\subsection{PropDist Training Details}
\label{app:propdist_hparams}

\noindent PropDist is a symmetric MLP trained to predict the absolute property gap $|y_i - y_j|$
between molecule pairs drawn from the training set.
The inputs are bitwise AND and XOR of the two ECFP4 fingerprints (every 2048 bits),
concatenated to a 4096-dimensional vector.
Table~\ref{tab:propdist_arch} summarizes the architecture and training configuration.
Pairs are sampled uniformly from the training set without replacement; pairs from seeds 0--3
are pooled into a single combined dataset of 800k pairs and shuffled for training.
The model that achieves the lowest training MSE across all 80 epochs is saved; this is the checkpoint used at inference.
No validation set is used for PropDist training (since the evidential training labels
fully define the pair targets, there is no risk of label leakage from the test set).

\begin{table}[h]
\centering
\caption{PropDist architecture and training configuration (identical across all 16 datasets).}
\label{tab:propdist_arch}
\small
\begin{tabular}{ll}
\toprule
Hyperparameter & Value \\
\midrule
Input features & 4096 (ECFP4 AND $\|$ XOR) \\
Hidden layers & 256 -- 128 -- 64 \\
Normalization & LayerNorm after each linear layer \\
Activation & ReLU \\
Dropout & 0.2 (after each hidden layer) \\
Output & Scalar $\hat{d} \geq 0$ (absolute property gap, z-scored units) \\
\midrule
Loss & MSE on $|y_i - y_j|$ \\
Optimizer & Adam ($\mathrm{lr} = 10^{-3}$, weight decay $= 10^{-4}$) \\
LR schedule & Cosine annealing ($T_{\max} = 80$ epochs) \\
Gradient clip & Max norm 1.0 \\
Epochs & 80 \\
Training pairs & 200{,}000 per seed $\times$ 4 seeds $= 800{,}000$ total \\
Batch size & 512 \\
\midrule
Parameters & $\approx 361{,}000$ \\
\bottomrule
\end{tabular}
\end{table}

At inference, PropDist is applied to the top-500 Tanimoto candidates (prescreen step),
scoring each candidate's expected property distance to the query.
The PG similarity score $\mathrm{sim}^{\mathrm{PG}}(q, x_k) = \mathrm{Tanimoto}(q, x_k) \cdot \exp(-\hat{d}_{qk})$
is then used to select the top-$K$ neighbors.
PropDist inference over 500 candidates requires a single GPU-batched forward pass
($\approx 0.4$~ms on an RTX 3090), small relative to the evidential forward pass that dominates inference cost (Table~\ref{tab:runtime}).
\noindent \evikalm{} performance is largely insensitive to $K$ above 5. The GP posterior correctly
deweights distant Tanimoto neighbors, so adding more marginally-similar molecules does not help.
We use $K = 5$ as the \evikalm{} default that balances coverage and $K \times K$ matrix inversion cost.

\subsection{Ablation: Number of Neighbors $K$ (\evikalm{})}
\label{app:ablation_K}

\noindent Table~\ref{tab:ablation_K} reports \evikalm{} RMSE on BACE and ESOL for $K \in \{1, 5, 10, 20, 50\}$ neighbors,
with other hyperparameters fixed at the values in Table~\ref{tab:hyperparams}.
Results are from seed 0.

\begin{table}[h]
\centering
\caption{Ablation over number of neighbors $K$ for \evikalm{} (seed 0, Tanimoto neighbor selection).
         Evidential baseline: BACE 0.549, ESOL 0.512.}
\label{tab:ablation_K}
\small
\begin{tabular}{lcc}
\toprule
$K$ & BACE RMSE & ESOL RMSE \\
\midrule
1   & 0.527 & 0.503 \\
5   & 0.511 & 0.500 \\
10  & 0.511 & 0.500 \\
20  & 0.512 & 0.501 \\
50  & 0.514 & 0.502 \\
\bottomrule
\end{tabular}
\end{table}

\textbf{\evikalmpg{} uses $K = 50$.}
PropDist neighbor selection changes the $K$-sensitivity picture qualitatively.
With Tanimoto-only selection, neighbors 6--50 are structurally distant and carry little signal
for the target property; the GP posterior deweights them, and adding more does not help.
With PropDist re-ranking, neighbors 6--50 are property-relevant---they may differ
structurally from the query but share the features that predict the target.
At $K=5$, \evikalmpg{} reduces \evikalm{} RMSE by a median of 5.3\% across all sixteen datasets
(13 of 16; ESOL tied, FreeSolv and NCI-60 degrade); at $K = 50$, the same PropDist scores a larger pool and the median improvement grows
to 16.3\% (15 of 16).
Table~\ref{tab:pg_K_comparison} (Appendix~\ref{app:pg_K_comparison}) shows the full K=5 vs K=50
comparison across all datasets.
The $50 \times 50$ GP solve adds negligible overhead because the PropDist prescreen
(top-500 Tanimoto candidates, then top-$K$ by PropDist score) is the computational bottleneck.

\subsection{Ablation: Evidence Regularization $\lambda$}

\noindent The QM9 $\lambda = 0.05$ experiment (Table~\ref{tab:ablation_lambda}) shows that $\lambda$
controls the calibration of the evidential prior, which in turn affects \evikalm{}.
Table~\ref{tab:ablation_lambda} compares two values on CDK2 and QM9.
The CDK2 result confirms that calibration improvement does not come at the cost of
RMSE. Both $\lambda$ values produce essentially the same \evikalm{} accuracy gain.
The QM9 result confirms the accuracy-ceiling diagnostic: improving calibration alone
(PICP@90\%: $0.550 \to 0.690$) does not rescue \evikalm{} accuracy gains on QM9.
QM9's failure mode is the accuracy ceiling. With $s = 0.136$ the landscape is smooth,
but 100k training molecules drive $R^2 \approx 0.977$ and a small mean epistemic variance ($\bar{u}_e \approx 0.006\,\sigma_{\mathrm{train}}^2$), leaving no room for refinement.
Higher regularization ($\lambda=0.05$) worsens the evidential baseline RMSE, so \evikalm{}
yields $+0.7\%$ rather than $-0.6\%$. Base model saturation is the binding constraint,
not calibration or landscape roughness.

\begin{table}[h]
\centering
\caption{Effect of evidence regularization $\lambda$ on evidential calibration
         and \evikalm{} RMSE. \evikalm{} is evaluated with the hyperparameters
         selected for each $\lambda$ variant separately. $\Delta{RMSE}$ refers to \evikalm{} accuracy change relative to the evidential baseline.}
\label{tab:ablation_lambda}
\small
\setlength{\tabcolsep}{5pt}
\begin{tabular}{llccc}
\toprule
Dataset & $\lambda$ & Evid.\ PICP@90\% & \evikalm{} RMSE & $\Delta$RMSE \\
\midrule
CDK2 & 0.01  & 0.530 & 0.584 & $-7.9\%$ \\
CDK2 & 0.05  & 0.806 & 0.613 & $-7.5\%$ \\
\midrule
QM9  & 0.01  & 0.550 & 0.188 & $-0.6\%$ \\
QM9  & 0.05  & 0.690 & 0.193 & $+0.7\%$ \\
\bottomrule
\end{tabular}
\end{table}

\section{Additional Experimental Results}
\label{app:additional}


\subsection{Scaffold Split Results for CDK2 and hERG}
\label{app:scaffold}

\noindent Table~\ref{tab:main} uses random splits for CDK2 and hERG and scaffold splits for the MoleculeNet datasets.
A concern is that random splits inflate train--test Tanimoto similarity,
artificially favoring \evikalm{} neighbor retrieval.
Table~\ref{tab:scaffold} shows results on scaffold splits for CDK2 and hERG,
together with the median top-1 Tanimoto similarity between test and training molecules
under each split protocol.
Both split conditions use $K=5$ neighbors throughout.

\begin{table}[h]
\centering
\caption{\evikalm{} on scaffold vs.\ random splits for CDK2 and hERG (5 seeds, mean $\pm$ std).
         Median top-1 Tanimoto (seed 0) quantifies how much neighbor similarity is lost by scaffolding.
         \evikalm{} still outperforms evidential on both datasets despite lower test-to-train similarity.}
\label{tab:scaffold}
\small
\setlength{\tabcolsep}{4pt}
\begin{tabular}{llccccc}
\toprule
Dataset & Method & Med.\ Tan. & RMSE $\downarrow$ & $\pm$std & PICP@90\% $\uparrow$ & ECE $\downarrow$ \\
\midrule
\multirow{2}{*}{CDK2 (random)}
 & \evid{}    & 0.765 & 0.634 & 0.025 & 0.530 & 0.225 \\
 & \evikalm{} & 0.765 & \textbf{0.584} & 0.022 & 0.490 & 0.253 \\
\midrule
\multirow{2}{*}{CDK2 (scaffold)}
 & \evid{}    & 0.629 & 0.852 & 0.019 & 0.508 & 0.237 \\
 & \evikalm{} & 0.629 & \textbf{0.796} & 0.025 & 0.516 & 0.247 \\
\midrule
\multirow{2}{*}{hERG (random)}
 & \evid{}    & 0.784 & 0.628 & 0.026 & 0.520 & 0.234 \\
 & \evikalm{} & 0.784 & \textbf{0.593} & 0.013 & 0.411 & 0.296 \\
\midrule
\multirow{2}{*}{hERG (scaffold)}
 & \evid{}    & 0.658 & 0.821 & 0.018 & 0.633 & 0.174 \\
 & \evikalm{} & 0.658 & \textbf{0.758} & 0.012 & 0.567 & 0.219 \\
\bottomrule
\end{tabular}
\end{table}

\noindent Scaffold splitting reduces median top-1 Tanimoto similarity from 0.765 to 0.629 for CDK2
and from 0.784 to 0.658 for hERG, confirming that test molecules are genuinely less similar
to their training neighbors under the scaffold protocol.

The CDK2 RMSE improvement falls from $-7.9\%$ (random) to $-6.6\%$ (scaffold), consistent
with the expectation that lower neighbor similarity reduces \evikalm{} gains.
The hERG result is reversed. Scaffold improvement ($-7.7\%$) exceeds random ($-5.5\%$).
The mechanistic explanation is that scaffold splitting hurts the evidential baseline more
than it hurts \evikalm{}.
Under scaffold splits, the neural network must generalize across scaffold cores it has
not seen---a harder task that increases absolute RMSE from 0.628 to 0.821.
\evikalm{} partially compensates by retrieving training neighbors that share local
functional group patterns with the test molecule, even when the scaffold cores differ.
ECFP4 fingerprints capture circular substructures at radius 2, so two molecules can
share a high Tanimoto score even with different Bemis--Murcko scaffolds, as long as
their R-groups or heterocyclic side chains overlap.
The larger absolute \evikalm{} correction on scaffold hERG (0.821 to 0.758, $\Delta = -0.063$)
versus random hERG ($\Delta = -0.035$) reflects this. When the base model is more uncertain
(higher evidential RMSE), \evikalm{}'s test-time neighborhood correction has more room to help.


\subsection{Deep Ensemble Comparison}
\label{app:ensemble}

\noindent Deep ensembles~\citep{lakshminarayanan2017simple} of $M=5$ independently trained
MSE models (same backbone, different random initializations and data shuffle) provide
a strong uncertainty baseline that requires $5\times$ training compute and $5\times$
inference cost.
Table~\ref{tab:ensemble} reports results across the eight assessed datasets.

\begin{table}[h]
\centering
\caption{Deep Ensemble ($M=5$) vs.\ \evikalm{} (5 data seeds, mean $\pm$ std; ordered
         by QSAR smoothness $s$, smoothest first).
         Ensembles can match \evikalm{} RMSE on large smooth datasets but are unreliable
         on small datasets (HOPV: $+69\%$ vs.\ evidential) and sensitive to unlucky
         initializations (pKa seed 0: RMSE=1.21 vs.\ mean 0.53 for remaining seeds).}
\label{tab:ensemble}
\small
\setlength{\tabcolsep}{3pt}
\begin{tabular}{lllcccc}
\toprule
Dataset & $s$ & Method & RMSE $\downarrow$ & $\pm$std & PICP@90\% $\uparrow$ & ECE $\downarrow$ \\
\midrule
\multirow{3}{*}{pKa}    & \multirow{3}{*}{0.165}
 & Deep Ens.  & 0.692 & 0.263 & 0.843 & 0.048 \\
 & & \evid{}    & 0.588 & 0.049 & 0.408 & 0.288 \\
 & & \evikalm{} & \textbf{0.568} & 0.039 & 0.393 & 0.294 \\
\midrule
\multirow{3}{*}{HOPV}   & \multirow{3}{*}{0.243}
 & Deep Ens.  & 1.437 & 0.232 & 0.335 & 0.344 \\
 & & \evid{}    & 0.849 & 0.080 & 0.794 & 0.076 \\
 & & \evikalm{} & \textbf{0.814} & 0.089 & 0.771 & 0.079 \\
\midrule
\multirow{3}{*}{BACE}   & \multirow{3}{*}{0.298}
 & Deep Ens.  & 0.515 & 0.083 & 0.482 & 0.255 \\
 & & \evid{}    & 0.549 & 0.046 & 0.732 & 0.104 \\
 & & \evikalm{} & \textbf{0.511} & 0.029 & 0.661 & 0.128 \\
\midrule
\multirow{3}{*}{CDK2}   & \multirow{3}{*}{0.335}
 & Deep Ens.  & \textbf{0.520} & 0.121 & 0.589 & 0.190 \\
 & & \evid{}    & 0.634 & 0.025 & 0.530 & 0.225 \\
 & & \evikalm{} & 0.584 & 0.022 & 0.490 & 0.253 \\
\midrule
\multirow{3}{*}{hERG}   & \multirow{3}{*}{0.379}
 & Deep Ens.  & \textbf{0.447} & 0.047 & 0.543 & 0.225 \\
 & & \evid{}    & 0.628 & 0.026 & 0.520 & 0.234 \\
 & & \evikalm{} & 0.593 & 0.013 & 0.411 & 0.296 \\
\midrule
\multirow{3}{*}{ESOL}   & \multirow{3}{*}{0.540}
 & Deep Ens.  & \textbf{0.429} & 0.042 & 0.359 & 0.324 \\
 & & \evid{}    & 0.512 & 0.016 & 0.812 & 0.065 \\
 & & \evikalm{} & 0.500 & 0.014 & 0.814 & 0.076 \\
\midrule
\multirow{3}{*}{Lipo.}  & \multirow{3}{*}{0.541}
 & Deep Ens.  & \textbf{0.399} & 0.077 & 0.656 & 0.164 \\
 & & \evid{}    & 0.526 & 0.011 & 0.671 & 0.144 \\
 & & \evikalm{} & 0.524 & 0.010 & 0.648 & 0.156 \\
\midrule
\multirow{3}{*}{FreeSolv} & \multirow{3}{*}{0.756}
 & Deep Ens.  & 0.711 & 0.178 & 0.656 & 0.158 \\
 & & \evid{}    & 0.625 & 0.044 & 0.726 & 0.101 \\
 & & \evikalm{} & \textbf{0.656} & 0.052 & 0.723 & 0.104 \\
\bottomrule
\end{tabular}
\end{table}

Four patterns emerge from the full comparison.
First, ensembles improve RMSE substantially on large smooth datasets (CDK2: $-18\%$,
hERG: $-29\%$, ESOL: $-16\%$, Lipo: $-24\%$ vs.\ evidential) at $5\times$ training
and inference cost. Second, ensembles fail catastrophically on the smallest dataset: HOPV ($N_\text{train} = 272$) sees ensemble RMSE increase by $+69\%$ vs.\ the evidential baseline, while \evikalm{} reduces it by $-4.0\%$. With a training set of only 272 molecules, the five ensemble members cannot be reliably trained--- the variance across seeds ($\pm$0.232) exceeds the target RMSE itself. Third, even a large training set provides no guarantee of ensemble stability: pKa ($N_\text{train} = 6996$, $s = 0.165$) produces a catastrophic outlier on seed 0 (RMSE = 1.21 vs.\ 0.49--0.66 for the remaining four seeds), yielding a mean RMSE of 0.692 $\pm$ 0.263 that is worse than the single-model evidential baseline (0.588).

\evikalm{} achieves 0.568 $\pm$ 0.039 on pKa---more accurate and more stable. The pKa failure mode (unlucky initialization causing a collapsed local minimum) differs from the HOPV failure mode (insufficient data), but both produce unreliable ensemble behavior that \evikalm{} avoids by construction. Fourth, and most importantly for drug discovery applications, all deep ensemble configurations are poorly calibrated (PICP@90\% = 0.34--0.84 vs.\ target 0.90). pKa is the sole dataset where the ensemble approaches the calibration target (PICP@90\% = 0.843 vs.\ target 0.90). The high seed-to-seed variance (RMSE std = 0.263) suggests the ensemble mean is partially driven by the outlier seed's large errors, which may inflate the effective ensemble predictive variance toward calibrated coverage; per-seed calibration analysis would be needed to confirm this mechanism. The evidential baseline (PICP@90\% = 0.41--0.81 on smooth datasets) and \evikalm{}'s maintained calibration provide more reliable uncertainty estimates despite requiring only a single trained model.


\subsection{Out-of-Distribution Detection}
\label{app:ood}

\noindent We evaluate the ability of uncertainty measures to detect structurally novel (OOD)
molecules.
A model is trained on the random-split training set; the random test set serves as
in-distribution and the scaffold test set (molecules on structurally distinct scaffolds)
as the OOD set.
We compute AUROC of each uncertainty measure as a binary OOD detector
(0 = in-distribution, 1 = OOD).

\begin{table}[h]
\centering
\caption{OOD detection AUROC (random test = in-distribution; scaffold test = OOD).
         \mcd{} variance is the strongest OOD detector on BACE; epistemic uncertainty
         and GP posterior variance are more consistent across datasets.}
\label{tab:ood}
\small
\begin{tabular}{lcccc}
\toprule
Dataset & $u_e$ (Evid.) & $u_\text{total}$ & GP var. & MCD var. \\
\midrule
BACE & 0.445 & 0.452 & 0.414 & \textbf{0.675} \\
CDK2 & \textbf{0.603} & 0.599 & 0.536 & 0.551 \\
\bottomrule
\end{tabular}
\end{table}

The BACE result ($u_e = 0.445$) indicates that the Normal-Inverse-Gamma evidential model assigns
lower epistemic uncertainty to scaffold-OOD molecules than to in-distribution ones.
This reflects the diversity of the random-split training set: congeneric series with high
within-distribution potency variation make the model genuinely uncertain about many
in-distribution molecules, while scaffold-OOD molecules occupy internally coherent
families the Normal-Inverse-Gamma evidential regularizer assigns only moderate uncertainty.
The CDK2 result ($u_e = 0.603$) is the more typical case: epistemic uncertainty is higher
for cross-scaffold molecules, producing above-chance OOD detection.
\mcd{}'s strong AUROC on BACE (0.675) comes from sharper activation boundaries
at unseen scaffolds, but at $50\times$ inference cost (Section~\ref{app:runtime}).


\subsection{Computational Cost}
\label{app:runtime}

\noindent Table~\ref{tab:runtime} reports wall-clock inference time measured on an NVIDIA RTX 3090
(median over 50 repetitions, 10 warmup; BACE random-split training set with 1210 training
molecules).

\begin{table}[h]
\centering
\caption{Inference time per query molecule (RTX 3090, median over 50 reps).
         \evikalm{} and \evikalmpg{} add only a ${\approx}3\times$ overhead vs.\ a single evidential forward pass,
         compared to $49.8\times$ for \mcd{} with $T=50$ passes.}
\label{tab:runtime}
\small
\begin{tabular}{lcc}
\toprule
Method & Time (ms/mol) & Relative overhead \\
\midrule
Evidential (batch=1)   & 2.3  & 1.0$\times$ \\
Evidential (batch=64)  & 0.04 & ---          \\
\mcd{} ($T=50$, batch=1)  & 115 & 49.8$\times$ \\
\mcd{} ($T=50$, batch=64) & 1.9 & ---          \\
Tanimoto lookup ($K=5$)       & 0.8 & ---          \\
GP solve ($K=5$)              & 0.02 & ---         \\
PropDist prescreen ($500$ cand.) & 0.4 & ---       \\
GP solve ($K=50$)             & 0.1  & ---          \\
\textbf{\evikalm{} ($K=5$, batch=1)}    & \textbf{6.9} & \textbf{3.0$\times$} \\
\textbf{\evikalmpg{} ($K=50$, batch=1)} & \textbf{7.4} & \textbf{3.2$\times$} \\
\midrule
Deep Ensemble ($M=5$) & $5\times$ (train) & 5.0$\times$ (infer.) \\
\bottomrule
\end{tabular}
\end{table}

\noindent \evikalm{}'s $3\times$ overhead over the evidential baseline (6.9 vs.\ 2.3 ms) decomposes into a single evidential forward pass (2.3 ms), the ECFP4 Tanimoto lookup against the training molecules (0.8 ms), and the $5\times5$ GP posterior solve (0.02 ms), with the small remainder from query fingerprint generation. \evikalmpg{} reuses this entire pipeline and adds only two steps, a single PropDist forward pass over the top-500 Tanimoto prescreen candidates ($\approx0.4$ ms) and a larger $50\times50$ GP assembly and solve ($\approx0.1$ ms). Property-guided selection therefore costs roughly half a millisecond more than \evikalm{} per query, a total of $\approx7.4$ ms ($\approx3\times$ the forward pass). Both variants stay far below \mcd{}'s $49.8\times$ and, unlike a deep ensemble, require only a single trained model.

\noindent \textbf{Scaling to large reference sets.} Per query, every step above is constant in the dataset size except the neighbor lookup, which is $\mathcal{O}(N_\text{train})$ because it scans the query fingerprint against all training fingerprints in one dense GPU operation. This scan is cheap in practice. On the RTX 3090 the lookup stays near $1$ ms per query for training sets up to $10^5$ molecules and reaches only $\approx10$ ms at $10^6$, so a reference set a hundred times larger than any benchmark here adds only a few milliseconds. Batching queries collapses the scan into a single matrix product and amortizes it below $0.05$ ms per molecule. Concretely, scoring a $100{,}000$ molecule screening library with \evikalmpg{} takes about twelve minutes processed one molecule at a time, or under a minute with batched inference, on a single GPU. Beyond $N_\text{train}\sim10^5$ molecules, approximate nearest-neighbor indexing (e.g.\ FAISS, \citealt{johnson2019billion}) feasibly reduces the neighbor lookup from linear to sublinear algorithmic time complexity.


\subsection{Method Comparison: All Sixteen Datasets}
\label{app:method_comparison_full}

\noindent Figure~\ref{fig:method_comparison_full} extends the six-panel summary in the main text
(Figure~\ref{fig:method_comparison}) to all sixteen datasets, ordered by QSAR smoothness $s$.

\begin{figure*}[h]
\centering
\includegraphics[width=\linewidth]{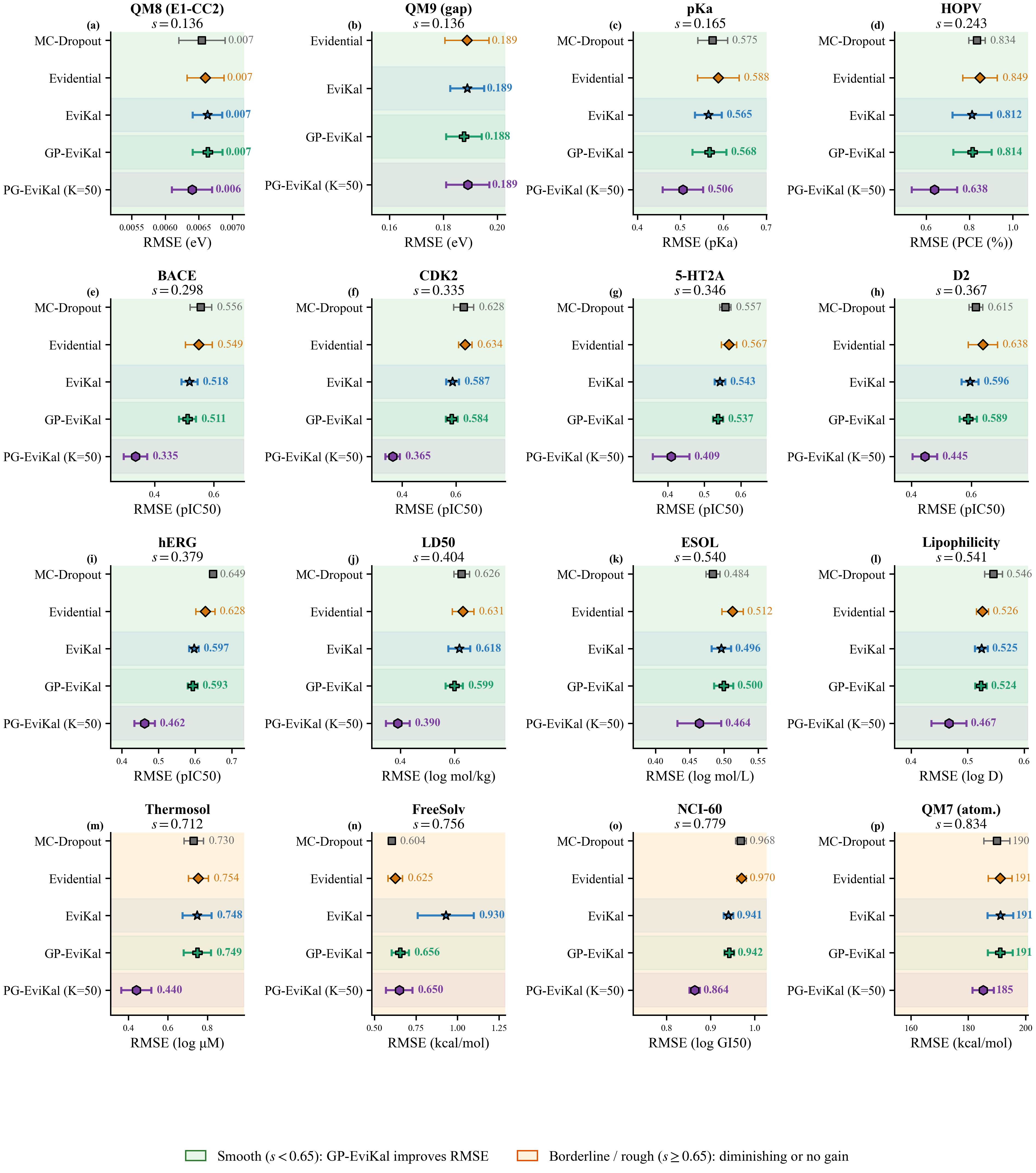}
\caption{
  \textbf{Method comparison across all sixteen datasets} (RMSE, 5 seeds),
  in order of increasing QSAR smoothness $s$.
  On smooth datasets ($s < 0.65$), \evikalmpg{} (purple) achieves the lowest RMSE,
  with \evikalm{} (green) giving smaller but consistent reductions relative to the
  evidential baseline.
  On the borderline datasets Thermosol ($s=0.712$) and NCI-60 ($s=0.779$),
  \evikalm{} still improves modestly ($-0.7\%$, $-2.9\%$),
  and on the roughest dataset QM7 ($s=0.834$) it is essentially unchanged ($+0.05\%$).
  The two quantum-chemistry datasets QM8 and QM9 (both $s=0.136$) are the exception among
  smooth landscapes. Their base evidential models already sit at a representation-imposed
  accuracy ceiling leaving little room for refinement (\evikalm{}: $+0.5\%$ on QM8, $-0.6\%$ on QM9).
  \mcd{} (gray) and \evid{} (orange) are the non-fusion baselines.
}
\label{fig:method_comparison_full}
\end{figure*}


\subsection{Online \evikalmpg{}: All Thirteen Datasets}
\label{app:online_kalman}

\noindent Figure~\ref{fig:online_kalman} shows the full thirteen-dataset online \evikalmpg{} experiment
described in Section~\ref{sec:results_online}.

\begin{figure*}[h]
  \centering
  \includegraphics[width=\linewidth]{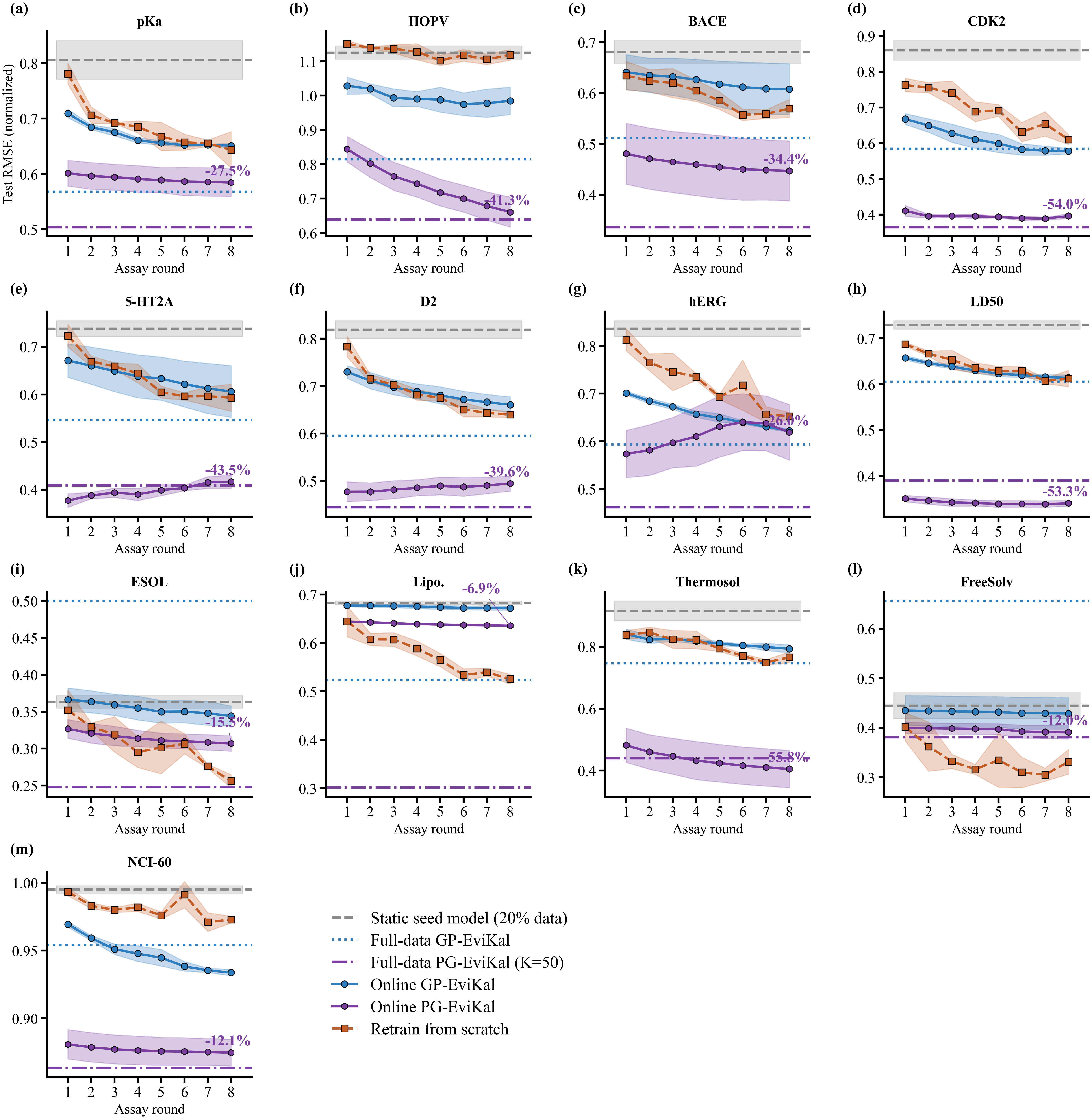}
  \caption{
    \textbf{Online \evikalm{} and \evikalmpg{}: sequential assay incorporation without retraining, all thirteen datasets}
    (QM7, QM8, and QM9 excluded as non-sequential assay scenarios).
    All percentages are relative to a seed model trained on 20\% of the data.
    Dashed gray = static seed model; orange squares = retrain-from-scratch oracle
    (patience 15 epochs, 3 simulations); dotted teal = full-data \evikalm{} reference;
    dash-dot purple = full-data \evikalmpg{} reference;
    blue circles = online \evikalm{}; purple hexagons = online \evikalmpg{}.
    Annotated percentage = \evikalmpg{} gain at round~8 vs.\ seed model.
    On smooth datasets ($s < 0.65$), \evikalmpg{} substantially exceeds both \evikalm{} and the retrain oracle;
    on near-threshold datasets (ESOL, Lipo., FreeSolv, NCI-60), retraining wins decisively
    because additional data improves encoder representations that neighbor fusion cannot replicate.
  }
  \label{fig:online_kalman}
\end{figure*}


\subsection{\evikalm{} vs.\ Sequential \evikal{}: Full Comparison}
\label{app:robustness}

\noindent Figure~\ref{fig:robustness} shows the RMSE change relative to the evidential baseline for
both \evikalm{} and sequential \evikal{} across all sixteen datasets.

\begin{figure*}[h]
\centering
\includegraphics[width=\linewidth]{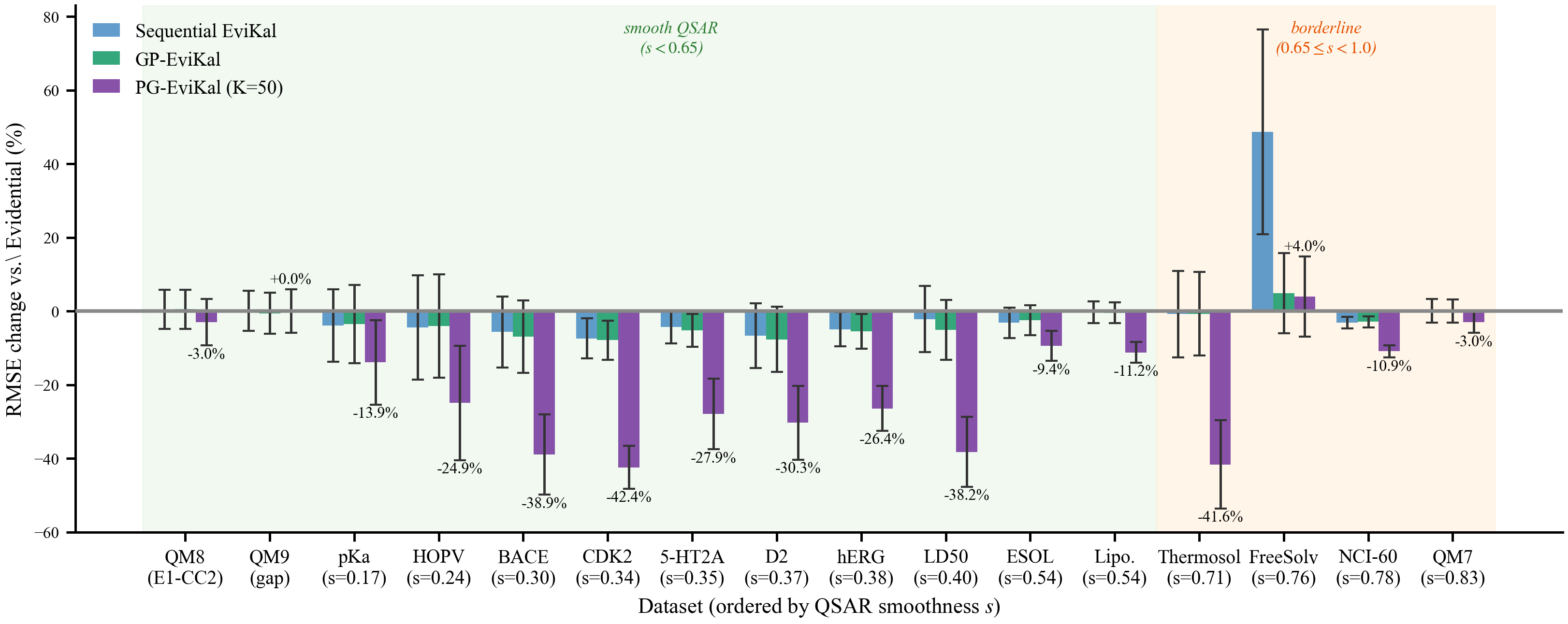}
\caption{
  \textbf{Sequential \evikal{} vs.\ \evikalm{} vs.\ \evikalmpg{}: RMSE change relative to evidential baseline
  across all sixteen datasets ordered by QSAR smoothness $s$.}
  Green background = smooth ($s < 0.65$); light-orange = borderline ($0.65 \leq s < 1.0$).
  Negative bars = improvement; positive = degradation. Error bars show propagated seed variation
  ($\pm1\sigma$).
  Sequential \evikal{} (blue) fails catastrophically on FreeSolv ($+48.7\%$)
  because correlated neighbors are treated as independent measurements;
  \evikalm{} (green) reduces this to $+4.9\%$ by accounting for neighbor correlations in the batch
  posterior.
  \evikalmpg{} (purple) replaces Tanimoto neighbor selection with PropDist-supervised similarity
  and achieves the largest gains on smooth datasets: CDK2 $-42.4\%$, BACE $-38.9\%$, LD50 $-38.2\%$.
  QM8 ($s=0.136$, $-3.0\%$) and QM9 ($s=0.136$, $0.0\%$) show negligible or noise-level change
  due to the accuracy ceiling ($R^2\approx0.977$, Appendix~\ref{app:datasets}); Thermosol ($s=0.712$) and NCI-60 ($s=0.779$) lie in the borderline zone yet still yield consistent gains (\evikalm{}: $-0.7\%$/$-2.9\%$; \evikalmpg{}: $-41.6\%$/$-10.9\%$) owing to adequate structural coverage.
  QM7 ($s=0.834$) shows near-zero \evikalm{} change ($+0.05\%$) and only a marginal \evikalmpg{} gain
  ($-3.0\%$), consistent with borderline smoothness.
}
\label{fig:robustness}
\end{figure*}


\subsection{\evikalmpg{} Calibration: Three-Way PICP Comparison}
\label{app:pg_picp}

\noindent \evikalm{} compresses predictive intervals by construction ($\sigma^2_\mathrm{post} < P_0$),
so PICP@90\% decreases relative to the evidential baseline on most datasets.
Table~\ref{tab:pg_picp} shows how \evikalmpg{} (K=50) changes this picture relative to
both the evidential baseline and Tanimoto \evikalm{} (K=50).

\begin{table}[h]
\centering
\caption{PICP@90\% across all 16 datasets: Evidential, Tanimoto \evikalm{} ($K=50$),
    and \evikalmpg{} ($K=50$).
    $\Delta$ columns show change vs.\ the evidential baseline.
    Datasets ordered by QSAR smoothness $s$ (smoothest first).
    Tanimoto K=50 compresses PICP on 15 of 16 datasets; \evikalmpg{} K=50 recovers
    and raises PICP@90\% above the evidential baseline on 14 of 16 datasets.
    \textdagger{} ESOL, Lipo., and FreeSolv rows use random-split test sets
    (the \evikalmpg{} K=50 evaluation pipeline does not apply scaffold splits for these datasets);
    Evidential PICP@90\% values therefore differ from Table~\ref{tab:main}.}
\label{tab:pg_picp}
\small
\setlength{\tabcolsep}{4pt}
\begin{tabular}{lcccccc}
\toprule
Dataset & $s$ & Evidential & \shortstack{\evikalm{}\\($K=50$, Tan.)} & $\Delta_\text{Tan}$ & \shortstack{\evikalmpg{}\\($K=50$)} & $\Delta_\text{PG}$ \\
\midrule
QM8       & 0.136 & 0.585 & 0.584 & $-0.1$\% & 0.586 & $+0.1$\% \\
QM9       & 0.136 & 0.550 & 0.508 & $-4.2$\% & 0.530 & $-2.0$\% \\
pKa       & 0.165 & 0.408 & 0.338 & $-7.0$\% & 0.445 & $+3.7$\% \\
HOPV      & 0.243 & 0.794 & 0.782 & $-1.2$\% & \textbf{0.865} & $+7.1$\% \\
BACE      & 0.298 & 0.732 & 0.641 & $-9.1$\% & \textbf{0.849} & $+11.7$\% \\
CDK2      & 0.335 & 0.530 & 0.448 & $-8.3$\% & \textbf{0.698} & $+16.8$\% \\
5-HT2A    & 0.346 & 0.468 & 0.337 & $-13.1$\% & \textbf{0.564} & $+9.6$\% \\
D2        & 0.367 & 0.642 & 0.479 & $-16.3$\% & \textbf{0.730} & $+8.8$\% \\
hERG      & 0.379 & 0.520 & 0.358 & $-16.2$\% & \textbf{0.618} & $+9.9$\% \\
LD50      & 0.404 & 0.497 & 0.403 & $-9.4$\% & \textbf{0.657} & $+16.0$\% \\
ESOL$^\dagger$      & 0.540 & 0.853 & 0.897 & $+4.4$\% & \textbf{0.940} & $+8.7$\% \\
Lipo.$^\dagger$     & 0.541 & 0.888 & 0.853 & $-3.5$\% & 0.906 & $+1.9$\% \\
Thermosol & 0.712 & 0.736 & 0.598 & $-13.8$\% & \textbf{0.862} & $+12.7$\% \\
FreeSolv$^\dagger$  & 0.756 & 0.637 & 0.591 & $-4.7$\% & 0.634 & $-0.3$\% \\
NCI-60    & 0.779 & 0.893 & 0.836 & $-5.7$\% & 0.917 & $+2.4$\% \\
QM7       & 0.834 & 0.912 & 0.905 & $-0.7$\% & 0.920 & $+0.8$\% \\
\midrule
\multicolumn{3}{l}{\textit{Improves vs.\ Evidential}} & \multicolumn{2}{l}{1 of 16 (ESOL only)} & \multicolumn{2}{l}{14 of 16} \\
\bottomrule
\end{tabular}
\end{table}

\noindent Two patterns persist. First, Tanimoto \evikalm{} at $K=50$ systematically compresses posteriors.
The K=50 neighbor pool provides substantial evidence that shifts the posterior mean
toward neighbor labels while contracting posterior variance, regardless of whether
the neighbors are truly property-informative.
On overconfident priors (pKa, 5-HT2A, D2, hERG: evidential PICP@90\% $\leq 0.52$),
the additional compression from 50 Tanimoto neighbors is severe ($-8$--$-16$~\%).
ESOL is the exception. Its above-average calibration and high structural coverage of neighbors
mean that Tanimoto K=50 actually improves calibration (PICP@90\% $= 0.853 \to 0.897$ on the random-split test set) because 50 similar and property-consistent neighbors sharpen the posterior without miscalibrating it.

Second, \evikalmpg{} K=50 substantially reverses the uncertainty interval compression from Tanimoto-only neighbor selection and exceeds the evidential baseline on 14 of 16 datasets.
The calibration restoration is mechanistically tied to neighbor quality.
PropDist selects neighbors whose labels agree closely with the query's true property.
The posterior mean is pulled toward the truth more precisely, and the posterior variance contracts proportionately.
The result is a posterior whose interval coverage is better realized; the mean
is more accurate, and the variance reflects the true residual uncertainty.
The largest restorations occur on drug-like smooth datasets with overconfident priors:
CDK2 ($0.530 \to 0.448 \to 0.698$), LD50 ($0.497 \to 0.403 \to 0.657$), Thermosol ($0.736 \to 0.598 \to 0.862$),
and BACE ($0.732 \to 0.641 \to 0.849$).
On these datasets, \evikalmpg{} is simultaneously the best RMSE and substantially better calibrated
than the evidential baseline.

FreeSolv and QM9 are the two exceptions where \evikalmpg{} PICP falls slightly below the evidential baseline
($-0.3$~\% and $-2.0$~\% respectively).
The FreeSolv pattern is consistent with the PropDist training-pair scarcity failure.
With only $\approx 513$ training molecules, PropDist cannot reliably learn property distances,
so neighbor selection remains uncertain and the posterior is not well-targeted.
QM9's slight shortfall ($-2.0$~\%) reflects the accuracy ceiling of the model's topological representation (Appendix~\ref{app:datasets}):
the evidential prior is already near-optimal ($R^2 \approx 0.977$),
so any neighbor-based correction---property-guided or not---adds small perturbations.

\subsection{\evikalmpg{}: $K=5$ vs.\ $K=50$ Comparison}
\label{app:pg_K_comparison}

\noindent Table~\ref{tab:pg_K_comparison} compares \evikalmpg{} performance at $K=5$ and $K=50$
across all sixteen datasets.
In contrast to \evikalm{} (which is insensitive to $K$ above 5; Appendix~\ref{app:ablation_K}),
\evikalmpg{} benefits substantially from larger $K$ because PropDist can surface property-relevant
molecules that Tanimoto would rank far outside the top-5.

\begin{table}[H]
\centering
\caption{\evikalmpg{} RMSE at $K=5$ and $K=50$ across all 16 datasets (mean $\pm$ std, 5 seeds).
         $K=50$ improves over $K=5$ on 15 of 16 datasets; QM9 and QM8 are accuracy-ceiling
         cases where both accuracy gains over the evidential baseline are noise-level.}
\label{tab:pg_K_comparison}
\small
\setlength{\tabcolsep}{4pt}
\begin{tabular}{lccccc}
\toprule
Dataset & $s$ & \evikalmpg{} ($K=5$) & \evikalmpg{} ($K=50$) & $\Delta_{K=5 \to 50}$ & vs.\ Evidential \\
\midrule
QM8       & 0.136 & 0.0065 $\pm$ 0.0003 & 0.0064 $\pm$ 0.0003 & $-0.6\%$ & $-3.0\%$ \\
QM9       & 0.136 & 0.187 $\pm$ 0.008   & 0.189 $\pm$ 0.008   & $+0.8\%$ & $0.0\%$ \\
pKa       & 0.165 & 0.559 $\pm$ 0.047   & 0.506 $\pm$ 0.047   & $-9.5\%$ & $-13.9\%$ \\
HOPV      & 0.243 & 0.740 $\pm$ 0.099   & 0.638 $\pm$ 0.105   & $-13.8\%$ & $-24.9\%$ \\
BACE      & 0.298 & 0.406 $\pm$ 0.043   & 0.335 $\pm$ 0.040   & $-17.5\%$ & $-38.9\%$ \\
CDK2      & 0.335 & 0.472 $\pm$ 0.024   & 0.365 $\pm$ 0.027   & $-22.7\%$ & $-42.4\%$ \\
5-HT2A    & 0.346 & 0.474 $\pm$ 0.024   & 0.409 $\pm$ 0.050   & $-13.7\%$ & $-27.9\%$ \\
D2        & 0.367 & 0.515 $\pm$ 0.049   & 0.445 $\pm$ 0.041   & $-13.6\%$ & $-30.3\%$ \\
hERG      & 0.379 & 0.535 $\pm$ 0.018   & 0.462 $\pm$ 0.028   & $-13.6\%$ & $-26.4\%$ \\
LD50      & 0.404 & 0.521 $\pm$ 0.037   & 0.390 $\pm$ 0.044   & $-25.1\%$ & $-38.2\%$ \\
ESOL      & 0.540 & 0.500 $\pm$ 0.039   & 0.464 $\pm$ 0.032   & $-7.2\%$  & $-9.4\%$ \\
Lipo.     & 0.541 & 0.519 $\pm$ 0.053   & 0.467 $\pm$ 0.031   & $-10.0\%$ & $-11.2\%$ \\
Thermosol & 0.712 & 0.618 $\pm$ 0.055   & 0.440 $\pm$ 0.076   & $-28.8\%$ & $-41.6\%$ \\
FreeSolv  & 0.756 & 0.675 $\pm$ 0.099   & 0.650 $\pm$ 0.080   & $-3.7\%$  & $+4.0\%$ \\
NCI-60    & 0.779 & 0.947 $\pm$ 0.011   & 0.864 $\pm$ 0.012   & $-8.8\%$  & $-10.9\%$ \\
QM7       & 0.834 & 190.0 $\pm$ 4.1     & 185.2 $\pm$ 3.7     & $-2.5\%$  & $-3.0\%$ \\
\bottomrule
\end{tabular}
\end{table}

\noindent The $K=5 \to 50$ gains are substantial across all dataset types.
On smooth drug-like datasets, PropDist at $K=50$ surfaces property-relevant neighbors
that Tanimoto would never retrieve within the top-5, producing $K=5 \to K=50$ improvements
of 14--29\% on CDK2, LD50, Thermosol, BACE, HOPV, and D2.
On borderline datasets (Thermosol, NCI-60, QM7), the larger pool allows PropDist to
compensate partially for the weaker QSAR signal. Thermosol improves $-28.8\%$ from
$K=5$ to $K=50$, pushing its vs-evidential gain from $0\%$ to $-41.6\%$.
FreeSolv remains the sole failure. Even at $K=50$, PropDist cannot learn reliable
property distances from $\approx 513$ training molecules, and both K values degrade
slightly vs.\ the \evikalm{} K=50 baseline.
QM9 and QM8 show essentially no sensitivity to $K$ in either direction,
consistent with the accuracy-ceiling failure mode where the prior is already near-optimal (Appendix~\ref{app:datasets}).

\subsection{Is the GP Fusion Necessary? Comparison to Simple Neighbor Averaging}
\label{app:averaging}

\noindent Because \evikalmpg{} ultimately predicts from a set of retrieved neighbors, it is worth asking how much of its accuracy comes from the Gaussian process fusion of Eq.~\eqref{eq:gp_mean}--\eqref{eq:gp_var} rather than from simply averaging the neighbor labels. Table~\ref{tab:averaging} answers this by placing \evikalmpg{} beside two averaging baselines over the same top-$K$ neighbors ($K=50$), one drawn from Tanimoto similarity alone (Tan-avg) and one from the property-guided re-ranking that defines \evikalmpg{} (PG-avg), together with a biased-mean GP that keeps the evidential prediction $\mu_0$ as its prior mean but allows the neighbors to correct it. We develop the biased-mean variant in the second half of this section.

\begin{table}[h]
\centering
\caption{Neighbor fusion versus simple averaging on all sixteen datasets, ordered by QSAR smoothness $s$ and reported as $z$-scored test RMSE averaged over five seeds. \evikalmpg{} is the fixed-mean GP of the main text, while the biased-mean GP anchors its prior mean at the evidential prediction $\mu_0$ and lets the neighbors estimate a shrunk offset, with the offset-prior variance $\tau_\beta^\ast$ and diagonal noise $\tau_n^\ast$ selected on validation (median over seeds) and the resulting posterior coverage in the rightmost column. PG-avg and Tan-avg average the labels of the same top-$K$ neighbors selected with and without property-guided re-ranking. Bold marks the lowest RMSE among \evikalmpg{}, the biased-mean GP, and PG-avg.}
\label{tab:averaging}
\small
\setlength{\tabcolsep}{3pt}
\begin{tabular}{lccccccrrc}
\toprule
Dataset & $s$ & Evid. & \evikalmpg{} & Biased-GP & PG-avg & Tan-avg & $\tau_\beta^\ast$ & $\tau_n^\ast$ & PICP@90 \\
\midrule
QM8       & 0.136 & 0.151 & 0.149 & \textbf{0.136} & 0.171 & 0.466 & 0.1 & 3.0  & 0.989 \\
QM9       & 0.136 & 0.150 & 0.150 & \textbf{0.148} & 0.218 & 0.411 & 0.0 & 0.3  & 0.722 \\
pKa       & 0.165 & 0.588 & 0.504 & 0.267 & \textbf{0.261} & 0.824 & 100 & 3.0  & 0.891 \\
HOPV      & 0.243 & 0.848 & 0.638 & 0.311 & \textbf{0.303} & 0.943 & 100 & 0.0  & 0.988 \\
BACE      & 0.298 & 0.549 & 0.336 & \textbf{0.223} & 0.255 & 0.744 & 3.0 & 0.1  & 0.949 \\
CDK2      & 0.335 & 0.634 & 0.365 & \textbf{0.210} & 0.225 & 0.739 & 100 & 0.3  & 0.910 \\
5-HT2A    & 0.346 & 0.567 & 0.409 & \textbf{0.260} & 0.262 & 0.715 & 100 & 0.3  & 0.821 \\
D2        & 0.367 & 0.638 & 0.444 & \textbf{0.341} & 0.347 & 0.752 & 100 & 0.3  & 0.841 \\
hERG      & 0.379 & 0.628 & 0.462 & \textbf{0.300} & 0.308 & 0.800 & 100 & 0.3  & 0.807 \\
LD50      & 0.404 & 0.631 & 0.390 & \textbf{0.255} & 0.262 & 0.727 & 100 & 0.03 & 0.821 \\
ESOL      & 0.540 & 0.330 & \textbf{0.244} & 0.247 & 0.312 & 0.768 & 0.0 & 0.0  & 0.942 \\
Lipo.     & 0.541 & 0.366 & \textbf{0.298} & 0.301 & 0.334 & 0.829 & 0.1 & 0.0  & 0.905 \\
Thermosol & 0.712 & 0.754 & 0.440 & \textbf{0.149} & 0.162 & 0.906 & 100 & 1.0  & 0.992 \\
FreeSolv  & 0.756 & 0.428 & 0.381 & \textbf{0.365} & 0.543 & 0.726 & 0.3 & 10.0 & 0.916 \\
NCI-60    & 0.779 & 0.970 & 0.863 & 0.570 & \textbf{0.567} & 0.969 & 100 & 30.0 & 0.992 \\
QM7       & 0.834 & 0.856 & 0.829 & 0.282 & \textbf{0.273} & 0.981 & 100 & 0.0  & 1.000 \\
\bottomrule
\end{tabular}
\end{table}

Averaging without property-guided selection fails outright, since Tan-avg trails the evidential baseline on every dataset and shows that the property-distance metric behind \evikalmpg{} neighbor selection is what makes retrieved neighbors trustworthy enough to combine. Once that selection is in place, however, a plain average of the neighbors is a strong predictor and attains a lower RMSE than the full GP posterior on the eleven datasets whose evidential prior is least accurate, and \evikalmpg{} regains the advantage only where the prior is already good, on the five datasets with the smallest evidential error. FreeSolv is the clean counter-case. With roughly $513$ training molecules, PropDist cannot learn reliable property distances, the neighbors are selected poorly, and the PG-avg average ($0.543$) falls behind even the evidential model itself ($0.428$). Yet, the Gaussian process, working from those same weak neighbors, still down-weights them enough to beat the average ($0.381$ against $0.543$).

This split follows directly from the error decomposition of Proposition~\ref{prop:misspecification}, in which the posterior error is $(1-a)e_0 + \mathbf{w}^\top\mathbf{d}$ for evidential prior error $e_0$ and neighbor--query property mismatch $\mathbf{d}$. A poor prior with large $e_0$ and reliable neighbors with small $\mathbf{d}$ makes the GP's reliance on $\mu_0$ a liability, so averaging wins, whereas an accurate prior reverses the balance. What averaging cannot supply is calibration, since it returns a bare point estimate. The GP in \evikalmpg{} also returns a posterior variance that is well calibrated across most of the datasets we test (Appendix~\ref{app:pg_picp}), quantifying how far the ground truth is expected to lie from the prediction. In the decision-making and active-assay settings this work targets, that interval is itself a deliverable, and no averaging scheme provides one.

The RMSE gap on the poor-prior datasets is not intrinsic to the Gaussian process but stems from two specific modeling choices, each with a standard remedy \citep{rasmussen2006gaussian,matheron1963principles,stein1999interpolation}. The first is that \evikalmpg{} holds the prior mean fixed at the evidential prediction $\mu_0$ (Figure~\ref{fig:method_overview}), so a biased $\mu_0$ pulls every posterior toward the wrong value. The standard alternative keeps $\mu_0$ as the anchor but lets the neighbors estimate a correction to it, replacing the prior mean with $\mu_0 + c$ for an offset $c$ drawn from a zero-centered Gaussian of variance $\tau_\beta$ \citep{matheron1963principles,cressie1993statistics}. That variance is a dial, since at $\tau_\beta = 0$ the offset is set to zero and therefore the method reduces exactly to \evikalmpg{}. As $\tau_\beta$ grows the neighbors are increasingly free to move the mean until, in the limit, the fusion weights are forced to sum to one. Tuning $\tau_\beta$ on validation therefore shrinks the $(1-a)e_0$ prior error term by as much as the neighbor evidence warrants without discarding the evidential mean where it is accurate.

The second choice concerns the kernel diagonal, where adding a small observation-noise variance is the standard way to discount correlated evidence \citep{cressie1993statistics,rasmussen2006gaussian}. This noise acts as a second dial. As it grows, the GP progressively discounts interneighbor correlation until, at the high noise extreme, the posterior collapses to a plain average of the neighbor labels. Inflating the prior variance $P_0$ instead, for instance through the evidential regularizer $\lambda$, does not have this effect because it scales the prior variance while leaving the diagonal noise fixed.

Tuning the offset prior $\tau_\beta$ and the noise $\tau_n$ jointly on validation produces the biased-mean GP of Table~\ref{tab:averaging}, and its behavior is systematic. On all eleven datasets where uniform averaging had beaten \evikalmpg{}, the biased-mean GP matches or beats the average, and it strictly improves on both the average and the fixed-mean GP on seven of them (e.g., CDK2 ($0.210$ against $0.225$ and $0.365$), hERG ($0.300$ against $0.308$ and $0.462$), and Thermosol ($0.149$ against $0.162$ and $0.440$)). On the remaining four datasets, it recovers the average to within $0.01$ in $z$-RMSE, driven there by the large validation-selected dials that collapse the GP toward a plain average.

Crucially, the estimator remains a Gaussian process posterior, and its $90\%$ interval attains a median PICP@90 of $0.91$ across those eleven datasets, spanning $0.81$ to $1.00$ and sitting essentially at the nominal target. This supplies the calibrated uncertainty interval of the molecular property prediction that averaging cannot. Where the prior is already accurate, the same tuning drives $\tau_\beta$ toward zero and collapses the biased mean back onto the fixed-mean GP rather than discarding it. Whether to let the neighbors bias the mean or to trust it, and how much noise to add, are thus decided on validation according to the reliability of the evidential prior.

The broader picture is that the fixed-mean GP, the biased-mean GP with added noise, and uniform averaging occupy a single spectrum of correlation discounting. Sequential \evikal{} sits at the over-counting extreme, treating correlated neighbors as independent evidence and failing catastrophically on FreeSolv ($+48.7\%$, Appendix~\ref{app:robustness}), and uniform averaging applies that same over-counting to the property-guided pool of \evikalmpg{}. \evikalmpg{} sits at the correlation-aware end through its Tanimoto kernel, which is what makes it robust where over-counting is harmful. The added noise term exposes the dial between the two regimes and, on the poor-prior datasets, discounts neighbor correlation part of the way without removing it entirely.


\subsection{Reliability Diagrams}
\label{app:reliability}

Figure~\ref{fig:reliability} shows calibration curves (predicted confidence vs.\ empirical
coverage) for all methods, complementing the PICP@90\% values reported in Table~\ref{tab:main}.

\begin{figure*}[h]
\centering
\includegraphics[width=\linewidth]{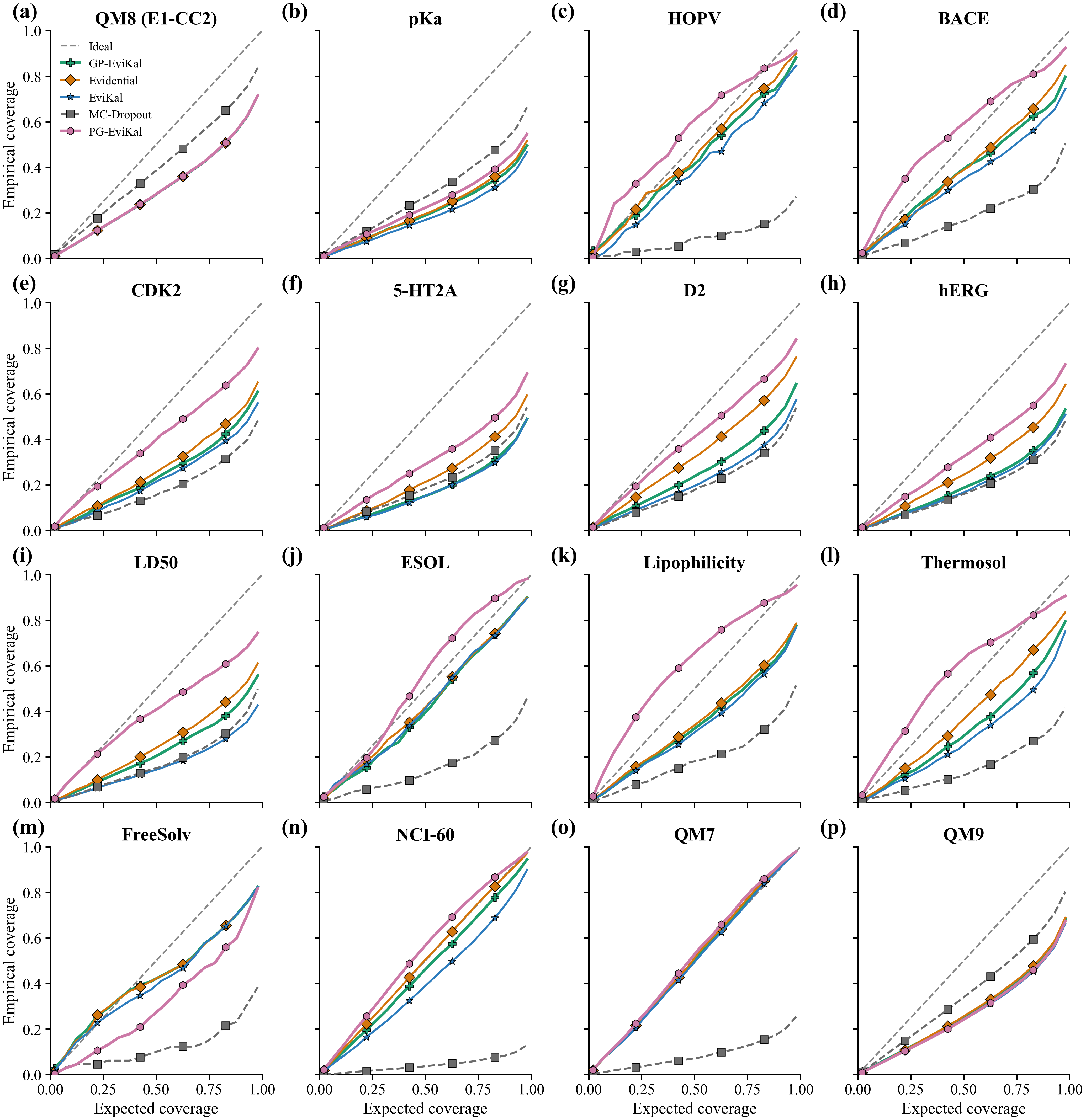}
\caption{
\textbf{Reliability diagrams} (averaged across all 5 training seeds). Diagonal is the $y=x$ target (i.e., perfect calibration). Above the diagonal represents underconfidence; below the diagonal represents
overconfidence.
\mcd{} (gray dashed) lies far below the diagonal on most datasets, indicating severe overconfidence.
\evid{} (orange) tracks the diagonal far more closely than \mcd{} but remains overconfident on several drug-like datasets (e.g., pKa, CDK2, 5-HT2A, hERG, LD50). \evikal{} (blue, sequential) and \evikalm{} (green, GP posterior) compress the predictive interval ($\sigma^2_{\mathrm{post}} < P_0$) and therefore sit at or slightly below the evidential curve. \evikalmpg{} (purple) improves calibration over the evidential baseline on 14 of 16 datasets. It reaches or exceeds the diagonal on ESOL, Lipophilicity, NCI-60, and QM7, improves substantially while remaining below the diagonal on HOPV, BACE, CDK2, 5-HT2A, D2, hERG, LD50, and Thermosol, and yields smaller gains on pKa and QM8. QM9 and FreeSolv are the two exceptions where \evikalmpg{} does not improve calibration.
}
\label{fig:reliability}
\end{figure*}

\section{Proofs and Derivations}
\label{app:proofs}

\subsection{K=1 Equivalence: Scalar Kalman Update = GP Posterior}
\label{app:k1_equiv}

\noindent We show that for a single neighbor ($K=1$), the sequential Kalman update in
Eq.~\eqref{eq:gain}--\eqref{eq:var} and the GP posterior in
Eq.~\eqref{eq:gp_mean}--\eqref{eq:gp_var} produce identical predictions in the
limit of perfect structural similarity ($\mathrm{sim}(q, x_1) \to 1$).

\noindent \textbf{Setup.} Let $\mu_0 = \gamma_q$, $P_0 = \ue^q$ be the evidential prior,
and let $(y_1, R_1)$ denote the single neighbor's label and observation noise.
The Tanimoto covariance between the query and itself is
$k(q, q) = \sigma_f^2$ (the GP prior variance), and between the query and the
single training neighbor is $k(q, x_1) = \sigma_f^2 \cdot \mathrm{sim}(q, x_1)$.
We identify $\sigma_f^2 = P_0 = \ue^q$ and set $k(x_1, x_1) = P_0$ (the neighbor's
prior variance, equal to the query's under the Tanimoto kernel).

\noindent \textbf{GP posterior ($K=1$).}
With a single training point, the GP posterior in Eq.~\eqref{eq:gp_mean}--\eqref{eq:gp_var}
reduces to:
\begin{align}
\mathbf{K}_{\mathrm{obs}} &= k(x_1, x_1) + R_1 = P_0 + R_1, \\
\mathbf{k}_* &= k(q, x_1) = P_0 \cdot \mathrm{sim}(q, x_1). \notag
\end{align}
Substituting into Eq.~\eqref{eq:gp_mean}:
\begin{align}
\mu_{\mathrm{GP}} &= \mu_0 + \frac{P_0 \cdot \mathrm{sim}(q, x_1)}{P_0 + R_1}(y_1 - \mu_0).
\end{align}
And Eq.~\eqref{eq:gp_var}:
\begin{align}
\sigma^2_{\mathrm{GP}} &= P_0 - \frac{(P_0 \cdot \mathrm{sim}(q,x_1))^2}{P_0 + R_1}.
\end{align}

\noindent \textbf{Scalar Kalman update ($K=1$).}
The Kalman gain from Eq.~\eqref{eq:gain} is:
\begin{align}
K_1 = \frac{P_0}{P_0 + R_1}.
\end{align}
The posterior mean from Eq.~\eqref{eq:mean}:
\begin{align}
\mu_1 = \mu_0 + K_1 (y_1 - \mu_0) = \mu_0 + \frac{P_0}{P_0 + R_1}(y_1 - \mu_0).
\end{align}
The posterior variance from Eq.~\eqref{eq:var}:
\begin{align}
P_1 = (1 - K_1) P_0 = \frac{R_1}{P_0 + R_1} P_0 = \frac{P_0 R_1}{P_0 + R_1}.
\end{align}

\noindent \textbf{Comparison.} The Kalman posterior mean $\mu_1$ equals the GP posterior mean
$\mu_{\mathrm{GP}}$ when $\mathrm{sim}(q, x_1) = 1$ (perfect structural similarity).
For $\mathrm{sim}(q, x_1) < 1$, the GP posterior automatically down-weights the
neighbor's influence through the off-diagonal covariance term, whereas the scalar
Kalman update relies on the observation noise $R_k \propto (1 - \mathrm{sim})^2$
to achieve the same effect. In the limit $\mathrm{sim}(q, x_1) \to 1$ (the neighbor is structurally identical to
the query), both posteriors agree exactly:
\begin{align}
\mu_{\mathrm{GP}} = \mu_1 = \mu_0 + \frac{P_0}{P_0 + R_1}(y_1 - \mu_0), \qquad
\sigma^2_{\mathrm{GP}} = P_1 = \frac{P_0 R_1}{P_0 + R_1}.
\end{align}

\noindent This shows that \evikalm{} is a strict generalization of sequential \evikal{}. For $K=1$
and $\mathrm{sim}=1$, the two are identical; for $K > 1$, the GP posterior
additionally accounts for correlations among neighbors through the full $K \times K$
covariance matrix $\mathbf{K}_{\mathrm{obs}}$, which the sequential filter ignores.

\subsection{Noise Model Derivation}
\label{app:noise_derivation}

\noindent The observation noise $R_k$ in Eq.~\eqref{eq:Rk} is defined as:
\begin{align}
R_k = \ua^q + C \cdot (1 - \mathrm{sim}(q, x_k))^2,
\end{align}
where $\ua^q = \beta_q / (\alpha_q - 1)$ is the aleatoric uncertainty of the query. The rationale is a two-component additive model for the observation error
$\epsilon_k = y_k - y_q$ (the difference between the neighbor's true label and the query's):
\begin{enumerate}
  \item \textbf{Aleatoric component} ($\ua^q$): Even a perfect structural match would still
  have measurement noise $\ua^q$ due to assay variability. This sets the floor.
  \item \textbf{Structural dissimilarity component} ($C \cdot (1 - \mathrm{sim})^2$):
  Neighbors that differ structurally may differ in property, with the squared dissimilarity
  penalizing greater structural deviation. The constant $C$ is tuned on validation and
  controls the tradeoff between exploiting similar-but-not-identical neighbors and
  rejecting dissimilar ones.
\end{enumerate}

\noindent The quadratic form $(1 - \mathrm{sim})^2$ ensures that $R_k \to \ua^q$ as
$\mathrm{sim} \to 1$ (a perfect neighbor contributes no structural noise)
and $R_k \to \ua^q + C$ as $\mathrm{sim} \to 0$ (a completely dissimilar neighbor
contributes maximum structural noise $C$).

\section{QSAR Landscape Smoothness: Empirical Evidence}
\label{app:smoothness}

\noindent \evikalm{}'s observation model assumes that a structurally similar training molecule
provides a noisy but informative signal about the query's property.
The QSAR smoothness ratio $s$ (Eq.~\ref{eq:smoothness}) distills this assumption into a single
number: the median normalized property gap at the nearest training neighbor.
But $s$ alone does not tell the whole story---equally important is how similar
that nearest neighbor actually is.
A dataset where the nearest neighbor is only 20\% structurally similar puts
\evikalm{} in a fundamentally different operating regime than one where neighbors
are 75\% similar, even if the raw $s$ value looks comparable.

Figure~\ref{fig:smoothness_analysis} (main text) shows both dimensions together.
For every test molecule we plot the Tanimoto similarity to its nearest training neighbor
(x-axis) against the corresponding normalized property gap (y-axis), averaged across
all training--test pairs in each similarity bin.
The triangle~($\blacktriangle$) marks the typical operating point---
the median top-1 Tanimoto similarity at which \evikalm{} retrieves its neighbors.

\paragraph{Smooth, well-covered datasets (left panel).}
Ten datasets have $s \leq 0.404$ and form the smooth panel:
QM8, QM9, pKa, HOPV, BACE, CDK2, 5-HT2A, D2, hERG, and LD50
(ordered by $s$ from 0.136 to 0.404).
Nearest neighbors are typically 57--81\% Tanimoto similar ($\blacktriangle$ clustered at right).
At those similarity levels the conditional median property gap is only
0.05--0.40$\,\sigma_{\mathrm{train}}$---well below the dashed $s=1$ reference line.
The curves show a clear monotone decline: as two molecules share more fingerprint bits,
their properties converge.
Nine of these ten datasets show RMSE improvement with \evikalm{};
QM8 ($+0.5\%$) is the sole exception.
QM9 ($-0.6\%$) technically improves but is noise-level, sharing QM8's accuracy-ceiling failure mode.

QM8 ($s=0.136$) is an accuracy-ceiling exception.
QM8's first singlet excitation energy is dominated by local conjugation and chromophore
structure that AttentiveFP captures well from the molecular graph; the neural model achieves
$R^2 \approx 0.977$, and structurally similar training molecules have nearly identical labels.
Kalman corrections provide no additional signal beyond what the evidential prior already captured.
This is the accuracy ceiling failure mode---distinct from landscape roughness.
The QSAR smoothness ratio $s$ is necessary but not sufficient for \evikalm{} benefit:
the base model must also have room to improve ($R^2 \lesssim 0.96$).

\paragraph{Transitional, sparse, and rough datasets (right panel).}
ESOL, Lipophilicity, Thermosol, NCI-60, FreeSolv, and QM7 form the right panel,
spanning $s = 0.54$ to $s = 0.83$.
ESOL ($s=0.540$) and Lipo ($s=0.541$) are technically below the $s=0.65$ benefit threshold
but operate at low median top-1 similarity (0.37 and 0.53 respectively);
their gains are modest but present ($-2.5\%$ and $-0.5\%$).
Thermosol and NCI-60 sit above the $s=0.65$ threshold with dense structural coverage
($s = 0.712$ and $0.779$; median top-1 similarities $\geq 0.61$);
\evikalm{} still yields small positive gains ($-0.7\%$ and $-2.9\%$), confirming that
structural coverage acts as a secondary gate.
FreeSolv is the most extreme case of structural sparsity: test molecules find training
neighbors at only 20\% Tanimoto overlap on average.
Despite $s = 0.756$ (lower than NCI-60 at $s=0.779$), FreeSolv degrades by $+4.9\%$---
demonstrating that structural sparsity, not roughness alone, drives the failure.
QM7 ($s=0.834$, median top-1 similarity 0.55) is the roughest dataset and yields $+0.05\%$ RMSE,
consistent with the borderline rough regime.

\paragraph{Takeaway.}
The QSAR smoothness ratio $s$ distinguishes two primary operating regimes:
$s < 0.65$ (reliable benefit, absent base model saturation) and $s \geq 0.65$
(borderline to no benefit; outcome depends on structural coverage).
Within the smooth regime, QM8 ($s=0.136$) and QM9 ($s=0.136$) are the two exceptions
(statistical accuracy ceiling): both evidential priors reach $R^2 \approx 0.977$
with small mean epistemic variance ($\bar{u}_e \approx 0.019\,\sigma_{\mathrm{train}}^2$ for QM8 and $\approx 0.006\,\sigma_{\mathrm{train}}^2$ for QM9), leaving no room
for Kalman refinement despite smooth landscapes.
FreeSolv ($s=0.756$, top-1 sim $=0.20$) fails via structural sparsity;
QM7 ($s=0.834$, top-1 sim $=0.55$) fails via rough landscape.
Thermosol ($s=0.712$, top-1 sim $=0.71$) and NCI-60 ($s=0.779$, top-1 sim $=0.61$)
sit at the boundary with small gains ($-0.7\%$ and $-2.9\%$).
These mechanistically distinct failure modes are all captured by a single pre-computation
requiring only training fingerprints and labels---no model evaluation needed.
The $0.65$ benefit boundary is specific to the evidential AttentiveFP model and should be recalibrated
when deploying a different model family.
\section{SNR$_{\mathrm{eff}}$ Threshold Derivation and Validation}
\label{app:snr_eff_validation}

\subsection{Derivation}

\noindent \textbf{When does a neighbor's label help?}
Consider a neighbor with label $y_k$ and a query with true label $y_q$. The neighbor helps reduce the query's prediction error only if its label carries more signal than noise about the query. Formally, the neighbor's Kalman update $\Delta \mu = K(y_k - \mu_0)$ reduces expected squared error when
\begin{equation}
\mathrm{Cov}(y_k, y_q) > K \cdot \mathrm{Var}(y_k - y_q).
\label{eq:neighbor_benefit}
\end{equation}
Intuitively, the agreement between neighbor and query (covariance) must outweigh their disagreement (variance of difference), weighted by how much we trust the neighbor (Kalman gain $K$).

\noindent \textbf{Translating smoothness to covariance.}
In normalized label space ($\sigma_{\mathrm{train}}=1$), the smoothness ratio $s$ directly relates to neighbor-query correlation.

\noindent Start with the variance of disagreement. For any two random variables:
\begin{equation}
\mathrm{Var}(y_k - y_q) = \mathrm{Var}(y_k) + \mathrm{Var}(y_q) - 2\mathrm{Cov}(y_k, y_q).
\label{eq:var_difference}
\end{equation}

\noindent In normalized label space, both neighbor and query labels are standardized to unit variance: $\mathrm{Var}(y_k) = 1$ and $\mathrm{Var}(y_q) = 1$. Substituting into Eq.~\eqref{eq:var_difference}:
\begin{equation}
\mathrm{Var}(y_k - y_q) = 2 - 2\mathrm{Cov}(y_k, y_q).
\label{eq:var_normalized}
\end{equation}

\noindent By definition, the smoothness ratio $s$ measures the typical magnitude of property disagreement between a query and its nearest neighbor:
\begin{equation}
s = \mathrm{median}_q |y_q - y_{q,1}| \quad (\text{in units of } \sigma_{\mathrm{train}} = 1).
\label{eq:smoothness_def}
\end{equation}

\noindent If the median magnitude of disagreement is $s$ by definition, then squaring gives the median squared disagreement:
\begin{equation}
\left[\mathrm{median}|y_k - y_q|\right]^2 = s^2 = \mathrm{median}[(y_k - y_q)^2]
\label{eq:median_squared}
\end{equation}

\noindent Why? The reason why this relationship holds is because $s \geq 0$ by definition. Squaring a list of $s$ values in this range gives a list of values $s^2$ in the same ascending order; thus, squaring the median deviation yields the median of the squared deviations. Since variance averages squared deviations and the typical squared deviation is $s^2$, we have:
\begin{equation}
\mathrm{Var}(y_k - y_q) \approx s^2.
\label{eq:var_approx_s2}
\end{equation}

\noindent Combining Eqs.~\eqref{eq:var_normalized} and \eqref{eq:var_approx_s2}:
\begin{equation}
2 - 2\mathrm{Cov}(y_k, y_q) \approx s^2.
\label{eq:cov_from_var}
\end{equation}

\noindent Solving for covariance:
\begin{equation}
\mathrm{Cov}(y_k, y_q) \approx 1 - \frac{s^2}{2}.
\label{eq:cov_approx}
\end{equation}

\noindent The intuition is clear:
\begin{itemize}
\item Low $s$ (smooth landscape): typical disagreement is small $\Rightarrow$ Var$(y_k - y_q)$ is small $\Rightarrow$ Cov$(y_k, y_q) \to 1$ (high agreement)
\item High $s$ (rough landscape): typical disagreement is large $\Rightarrow$ Var$(y_k - y_q)$ is large $\Rightarrow$ Cov$(y_k, y_q) \to 0$ (low agreement)
\end{itemize}

\noindent \textbf{Signal-to-noise ratio of the neighbor.}
The ratio of agreement to disagreement is the neighbor's SNR. Using the exact covariance from Eq.~\eqref{eq:cov_approx}:
\begin{equation}
\mathrm{SNR} = \frac{\mathrm{Cov}(y_k, y_q)}{\mathrm{Var}(y_k - y_q)} = \frac{1 - s^2/2}{s^2}.
\label{eq:SNR}
\end{equation}

\noindent For small $s$, this simplifies approximately to:
\begin{equation}
\mathrm{SNR} \approx \frac{1-s^2}{s^2}.
\label{eq:SNR_approx_formula}
\end{equation}

\noindent Both forms show that the ratio is large when neighbors are reliable (low $s$) and small when they are noisy (high $s$).

\noindent \textbf{The Kalman downweighting: effective SNR.}
The Kalman filter does not use the raw neighbor signal. Instead, it scales the update by the Kalman gain $K = P_0 / (P_0 + R_k)$, which is small when observation noise $R_k$ is large. Noisy observations are
downweighted, reducing their impact on the posterior. The effective SNR in lieu of this is:
\begin{equation}
\mathrm{SNR}_{\mathrm{eff}} = K \cdot \mathrm{SNR} = \frac{P_0}{P_0 + R_k} \cdot \frac{1 - s^2/2}{s^2}.
\label{eq:SNR_eff}
\end{equation}

\noindent For the neighbor update to reduce error in expectation, we need $\mathrm{SNR}_{\mathrm{eff}} > 1$: the Kalman-weighted signal must exceed noise.

\noindent For a well-trained evidential model with typical aleatoric/epistemic ratio $u_a/u_e \approx 3$, and using a top-1 neighbor with $\mathrm{sim} \approx 1$:
\begin{equation}
K \approx \frac{u_e}{u_a + u_e} \approx 0.25.
\label{eq:K_typical}
\end{equation}

\noindent The benefit threshold occurs when $\mathrm{SNR}_{\mathrm{eff}} = 1$:
\begin{equation}
K \cdot \frac{1 - s^2/2}{s^2} = 1 \quad \Rightarrow \quad s^2 = \frac{2K}{2+K}.
\label{eq:SNR_threshold}
\end{equation}

\noindent For $K \approx 0.25$: $s^* = \sqrt{0.5/2.25} \approx 0.471$.
For $K \approx 0.50$: $s^* = \sqrt{1.0/2.5} \approx 0.632$.
For $K \approx 0.80$: $s^* = \sqrt{1.6/2.8} \approx 0.756$.

\noindent The empirical threshold $s < 0.65$ falls near these bounds, confirming it is driven by SNR$_{\mathrm{eff}}=1$ (Eq.~\eqref{eq:SNR_eff}), not arbitrary cutoff.

\subsection{Empirical Validation}

\noindent \textbf{Understanding $s$ versus $s^*$:}
Each dataset in Table~\ref{tab:snr_eff} has two key smoothness values. The empirically measured smoothness ratio $s$ (from Eq.~\eqref{eq:smoothness_def}) is the actual median property disagreement for
that dataset. The theoretical threshold $s^*$ is the smoothness value where $\mathrm{SNR}_{\mathrm{eff}}=1$, computed from that dataset's Kalman gain $K$. The criterion is straightforward: \textbf{if $s
< s^*$, then $\mathrm{SNR}_{\mathrm{eff}} > 1$ and the method helps; if $s > s^*$, then $\mathrm{SNR}_{\mathrm{eff}} < 1$ and the method hurts.}

Table~\ref{tab:snr_eff} reports both the exact SNR$_{\mathrm{eff}}$ (Eq.~\eqref{eq:SNR}) and its small-$s$ approximation (Eq.~\eqref{eq:SNR_approx_formula}) for all sixteen datasets, alongside the theoretical threshold $s^*$ (where $\mathrm{SNR}_{\mathrm{eff}}=1$) and observed RMSE benefit. The alignment is clear. Datasets with $\mathrm{SNR}_{\mathrm{eff}} > 1$ largely show RMSE improvement; datasets with $\mathrm{SNR}_{\mathrm{eff}} < 1$ largely show degradation or negligible change. The smooth/rough boundary at $s=0.65$ thus represents the empirical SNR$_{\mathrm{eff}}=1$ threshold for the evidential AttentiveFP architecture at $K=5$ neighbors. This threshold will shift for models with different epistemic calibration; Appendix~\ref{app:smoothness} discusses transfer to other architectures.

\begin{table}[!ht]
\centering
\small
\caption{\textbf{SNR$_{\mathrm{eff}}$ validation: exact vs. approximate formula.}
For each dataset, we report smoothness ratio $s$ (Eq.~\eqref{eq:smoothness_def}), effective signal-to-noise ratio using the approximate formula (Eq.~\eqref{eq:SNR_approx_formula}) and exact formula
(Eq.~\eqref{eq:SNR}), theoretical threshold $s^*$ (where $\mathrm{SNR}_{\mathrm{eff}}=1$), observed RMSE benefit, whether the method shows improvement, and the failure regime. Datasets are ordered by exact
SNR$_{\mathrm{eff}}$ (highest to lowest). $\Delta\mathrm{RMSE}$ refers to \evikalm{} compared to the evidential model baseline.
The SNR$_{\mathrm{eff}} = 1$ boundary (between HOPV and LD50) broadly separates datasets where neighbor fusion helps from those where it hurts or stalls, with the borderline crossings explained by the operating point (below).}
\label{tab:snr_eff}
\begin{tabular}{lrrrrrrl}
\toprule
\textbf{Dataset} & $s$ & $\mathrm{SNR}_{\mathrm{eff,approx}}$ & $\mathrm{SNR}_{\mathrm{eff,exact}}$ & $s^*$ & $\Delta\mathrm{RMSE}$ & \textbf{Benefit?} & \textbf{Regime} \\
\midrule
D2              & 0.367 & 4.859  & 5.225  & 0.740 & $-7.6\%$ & Yes & Smooth \\
CDK2            & 0.335 & 4.897  & 5.208  & 0.688 & $-7.9\%$ & Yes & Smooth \\
5-HT2A          & 0.346 & 4.463  & 4.766  & 0.682 & $-5.2\%$ & Yes & Smooth \\
hERG            & 0.379 & 3.036  & 3.290  & 0.637 & $-5.5\%$ & Yes & Smooth \\
QM9             & 0.136 & 2.980  & 3.010  & 0.234 & $-0.6\%$ & No  & Accuracy ceiling \\
pKa             & 0.165 & 2.525  & 2.559  & 0.261 & $-3.5\%$ & Yes & Smooth \\
BACE            & 0.298 & 1.634  & 1.711  & 0.384 & $-6.9\%$ & Yes & Smooth \\
ESOL            & 0.540 & 0.929  & 1.120  & 0.567 & $-2.5\%$ & Yes & Smooth (near threshold) \\
HOPV            & 0.243 & 1.070  & 1.105  & 0.255 & $-4.0\%$ & Yes & Smooth \\
\midrule
LD50            & 0.404 & 0.750  & 0.823  & 0.370 & $-5.1\%$ & Yes & Smooth \\
Thermosol       & 0.712 & 0.327  & 0.494  & 0.537 & $-0.7\%$ & Yes & Borderline \\
Lipo            & 0.541 & 0.305  & 0.368  & 0.344 & $-0.5\%$ & No  & Smooth (past threshold) \\
NCI-60          & 0.779 & 0.150  & 0.266  & 0.455 & $-2.9\%$ & Yes & Borderline \\
QM8             & 0.136 & 0.020  & 0.020  & 0.019 & $+0.5\%$ & No  & Accuracy ceiling \\
QM7             & 0.834 & 0.007  & 0.015  & 0.123 & $+0.0\%$ & No  & Rough + saturation \\
FreeSolv       & 0.756 & 0.002  & 0.003  & 0.054 & $+4.9\%$ & No  & Structural sparsity \\
\bottomrule
\end{tabular}
\end{table}
The exact formula (Eq.~\eqref{eq:SNR}) is consistently higher than the approximation (Eq.~\eqref{eq:SNR_approx_formula}) by roughly a constant offset, with the difference growing slightly for larger $s$. Both orderings agree, confirming that the qualitative SNR$_{\mathrm{eff}} > 1$ criterion is robust to the choice of formula. The precise formula is provided for those interested in the theory perspective; the approximation suffices as an intuitive ratio (i.e., the signal-to-noise ratio being the complement of a noise-related quantity divided by said quantity).

\noindent \textbf{Why borderline datasets still benefit: the operating point.}
Two borderline datasets, Thermosol and NCI-60, show RMSE improvements despite SNR$_{\mathrm{eff}} < 1$. The operating point triangle (the actual median top-1 Tanimoto similarity where each dataset operates) explains why. The SNR$_{\mathrm{eff}}$ criterion is a landscape-level diagnostic: it characterizes the average roughness of the QSAR surface. But what matters for neighbor fusion is the local operating point—where on that landscape the query's nearest neighbors actually sit (Figure~\ref{fig:smoothness_analysis}). NCI-60 is rough overall ($s=0.779$), yet its nearest neighbors sit at high similarity ($\text{sim}=0.61$) in a region where properties are relatively consistent. So despite the landscape being rough on average, the dataset's operating point lands in a zone where neighbors carry signal, and the method benefits (-2.9\%). FreeSolv, by contrast, also has high $s$, but its operating point is far worse. The nearest neighbors sit at low similarity ($\text{sim}=0.20$), where noise drowns out any signal, and the method fails (+4.9\%). The smaller magnitude of these borderline improvements (both under $3\%$, versus reductions up to $8\%$ on the smoothest landscapes) reflects this. When the operating point is only weakly favorable, neighbors help but inconsistently. The SNR$_{\mathrm{eff}} > 1$ criterion remains a useful heuristic for smooth landscapes, but operating point triangles reveal the finer truth. The method's success depends on where the dataset actually operates, not just the landscape's average roughness.
\newpage
\section{Bayesian Foundation and Error Decomposition}
\label{app:theorems}
\paragraph{A local Bayesian interpretation of \evikalm{}.}
\evikalm{} can be interpreted as exact Bayesian conditioning under a local Gaussian observation model.
For a query molecule $q$, let the evidential model provide an initial prediction $\mu_0=\gamma_q$ and epistemic variance $P_0=u_e^q$.
Let $\mathcal N_q=\{x_1,\ldots,x_K\}$ denote the selected training neighbors with labels $\mathbf y=(y_1,\ldots,y_K)^\top$.
Define the neighbor covariance matrix $\mathbf K$ with $ K_{ij}=P_0\cdot\mathrm{sim}(x_i, x_j)$.
We assume that, locally around $q$, the query property and neighbor labels follow a Gaussian process prior with mean $\mu_0$ and
covariance
\begin{equation}
  \operatorname{Var}(y_q)=P_0,\qquad
  \operatorname{Cov}(y_q,y_k)=k_{\ast,k},\qquad
  \operatorname{Cov}(y_i,y_j)=K_{ij}.
  \label{eq:gp_prior_cov}
\end{equation}
The observed neighbor labels are treated as noisy measurements, with heteroscedastic noise
\begin{equation}
  R_k=u_a^q+C\bigl(1-\operatorname{sim}(q,x_k)\bigr)^2,
  \label{eq:obs_noise}
\end{equation}
and
\begin{equation}
  \mathbf K_{\mathrm{obs}}=\mathbf K+\operatorname{diag}(R_1,\ldots,R_K).
  \label{eq:kobs}
\end{equation}

\begin{theorem}[Conditional risk reduction under a calibrated local GP model]  \label{theorem:risk_reduction}
Assume that the local Gaussian observation model in Eqs.~(\ref{eq:gp_prior_cov})--(\ref{eq:kobs}) is correctly
specified, with $P_0>0$ and $R_k>0$ for all retained neighbors, so that
in particular $\mu_0=\mathbb E[y_q]$ is the true prior mean.
Then the posterior mean
\begin{equation}
  \mu_{\mathrm{post}}
  =
  \mu_0+
  \mathbf k_\ast^\top \mathbf K_{\mathrm{obs}}^{-1}
  (\mathbf y-\mu_0\mathbf 1)
  \label{eq:post_mean}
\end{equation}
is the Bayes estimator of $y_q$ under squared loss. Its posterior
variance is
\begin{equation}
   \sigma^2_{\mathrm{post}}
  =
  P_0-\mathbf k_\ast^\top \mathbf K_{\mathrm{obs}}^{-1}\mathbf k_\ast,
  \label{eq:post_var}
\end{equation}
and satisfies
\begin{equation}
  0\leq \sigma^2_{\mathrm{post}}\leq P_0. \label{eq:sigma_post_range}
\end{equation}
Consequently,
\begin{equation}
  \mathbb E\!\left[(\mu_{\mathrm{post}}-y_q)^2\right]
  \leq
  \mathbb E\!\left[(\mu_0-y_q)^2\right],
  \label{eq:risk_reduction}
\end{equation}
with strict inequality whenever the neighbors carry nonzero
conditional information about the query property, i.e.\ whenever
$\mathbf k_\ast^\top \mathbf K_{\mathrm{obs}}^{-1}\mathbf k_\ast>0$
(equivalently, $\mathbf k_\ast\neq \mathbf 0$).
\end{theorem}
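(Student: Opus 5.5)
The plan is to reduce everything to the standard conditioning formula for a jointly Gaussian vector. Under the correctly specified model of Eqs.~\eqref{eq:gp_prior_cov}--\eqref{eq:kobs}, the vector $(y_q,\mathbf y^{\mathrm{obs}})$ is jointly Gaussian with mean $\mu_0(1,\mathbf 1^\top)^\top$. Here $\mathbf y^{\mathrm{obs}}$ denotes the noisy neighbor measurements. The covariance blocks are $\operatorname{Var}(y_q)=P_0$, $\operatorname{Cov}(y_q,\mathbf y^{\mathrm{obs}})=\mathbf k_\ast$, and $\operatorname{Var}(\mathbf y^{\mathrm{obs}})=\mathbf K+\operatorname{diag}(R_k)=\mathbf K_{\mathrm{obs}}$. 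For the last block I will use that the noise is independent of the latent labels and of $y_q$. I would first note that $\mathbf K_{\mathrm{obs}}$ is invertible. The matrix $\mathbf K$ is positive semidefinite because the Tanimoto kernel is a valid kernel on binary vectors \citep{ralaivola2005graph} and $P_0>0$. Adding $\operatorname{diag}(R_k)$ with every $R_k>0$ then makes $\mathbf K_{\mathrm{obs}}$ positive definite.

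Next I apply the Gaussian conditioning (Schur complement) identity. It gives $y_q\mid\mathbf y^{\mathrm{obs}}\sim\mathcal N(\mu_{\mathrm{post}},\sigma^2_{\mathrm{post}})$, with exactly the expressions in Eqs.~\eqref{eq:post_mean}--\eqref{eq:post_var}. The conditional mean minimizes $\mathbb E[(\hat y-y_q)^2]$ over all measurable estimators $\hat y=g(\mathbf y^{\mathrm{obs}})$, by the orthogonality decomposition $\mathbb E[(g-y_q)^2]=\mathbb E[(g-\mu_{\mathrm{post}})^2]+\mathbb E[\operatorname{Var}(y_q\mid\mathbf y^{\mathrm{obs}})]$. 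This establishes the Bayes-estimator claim. For the bounds~\eqref{eq:sigma_post_range}, $\sigma^2_{\mathrm{post}}\ge 0$ holds because it is a conditional variance, or equivalently the Schur complement of a PSD joint covariance. The bound $\sigma^2_{\mathrm{post}}\le P_0$ holds because $\mathbf K_{\mathrm{obs}}^{-1}\succ 0$, which gives $\mathbf k_\ast^\top\mathbf K_{\mathrm{obs}}^{-1}\mathbf k_\ast\ge 0$.

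For the risk comparison~\eqref{eq:risk_reduction}, I will compute both sides explicitly. Since $\mathbb E[y_q]=\mu_0$, the left side is $\mathbb E[(\mu_0-y_q)^2]=P_0$. In the Gaussian case the conditional variance does not depend on the data, so $\mathbb E[(\mu_{\mathrm{post}}-y_q)^2]=\sigma^2_{\mathrm{post}}$. The inequality is then exactly $\sigma^2_{\mathrm{post}}\le P_0$, with gap $\mathbf k_\ast^\top\mathbf K_{\mathrm{obs}}^{-1}\mathbf k_\ast$. This gap is strictly positive if and only if $\mathbf k_\ast\neq\mathbf 0$, again by positive definiteness of $\mathbf K_{\mathrm{obs}}^{-1}$. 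That gives the stated equivalence and the strictness.

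The main obstacle is making the joint model precise rather than doing any computation. The paper's statement conflates the latent labels $y_k$ with their noisy observations, so I must spell out that $\mathbf y$ in Eq.~\eqref{eq:post_mean} means measurement $=$ latent $+\ \varepsilon_k$, with $\varepsilon_k\sim\mathcal N(0,R_k)$ independent. I must also check that the joint covariance $\begin{pmatrix}P_0&\mathbf k_\ast^\top\\ \mathbf k_\ast&\mathbf K\end{pmatrix}$ is PSD. If $\mathbf k_\ast$ is taken as $P_0\,\mathrm{sim}(q,x_k)$, this follows from the Tanimoto kernel being a kernel on $\{q,x_1,\dots,x_K\}$. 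In general I would simply take PSD-ness as part of the hypothesis ``correctly specified.''
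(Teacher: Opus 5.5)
Your proposal is correct and follows the paper's proof essentially step for step. Both arguments use joint Gaussianity of $(y_q,\mathbf y)$ with covariance blocks $P_0$, $\mathbf k_\ast$, $\mathbf K_{\mathrm{obs}}$, the Gaussian conditioning formula, the conditional mean as the Bayes estimator, the Schur-complement bounds obtained from $\mathbf K_{\mathrm{obs}}\succ 0$, and the risk gap $P_0-\sigma^2_{\mathrm{post}}=\mathbf k_\ast^\top\mathbf K_{\mathrm{obs}}^{-1}\mathbf k_\ast$. Your separation of latent labels from noisy measurements with independent noise, and your PSD check via the Tanimoto Gram matrix on $\{q,x_1,\dots,x_K\}$, make precise what the paper simply assumes as $\mathbf{\Sigma}\succeq 0$.
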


\begin{proof}
Under the assumed local Gaussian model, the joint distribution of the query property and the noisy neighbor labels is
\begin{equation}
  \begin{bmatrix}
  y_q\\
  \mathbf y
  \end{bmatrix}
  \sim
  \mathcal N
  \left(
  \begin{bmatrix}
  \mu_0\\
  \mu_0\mathbf 1
  \end{bmatrix},
  \mathbf \Sigma
  \right), \quad \mathbf \Sigma=\begin{bmatrix}
  P_0 & \mathbf k_\ast^\top\\
  \mathbf k_\ast & \mathbf K_{\mathrm{obs}}
  \end{bmatrix}. \label{eq:joint_distribution}
\end{equation}
By the conditioning formula for jointly Gaussian random variables,
\begin{equation}
  y_q\mid \mathbf y
  \sim
  \mathcal N
  \left(
  \mu_0+\mathbf k_\ast^\top \mathbf K_{\mathrm{obs}}^{-1}(\mathbf y-\mu_0\mathbf 1),
  \,
  P_0-\mathbf k_\ast^\top \mathbf K_{\mathrm{obs}}^{-1}\mathbf k_\ast
  \right),
  \label{eq:conditioning}
\end{equation}
and it follows that
\begin{align}
  \mathbb E\!\left[y_q\mid \mathbf y\right] &= \mu_0+\mathbf k_\ast^\top \mathbf K_{\mathrm{obs}}^{-1}(\mathbf y-\mu_0\mathbf 1),
\label{eq:cond_mean}\\
  \mathrm{Var}\!\left[y_q\mid \mathbf y\right] &=  P_0-\mathbf k_\ast^\top \mathbf K_{\mathrm{obs}}^{-1}\mathbf k_\ast.
\label{eq:cond_var}
\end{align}

\noindent For a fixed query $q$ and fixed neighbor set, under the correctly specified local GP model,
$\mu_{\mathrm{post}}=\mathbb E[y_q\mid \mathbf y]$ and $\sigma^2_{\mathrm{post}} =\mathrm{Var}[y_q\mid \mathbf y]$.

\noindent By definition, $\mu_{\mathrm{post}}$ is the conditional expectation $\mathbb E[y_q\mid \mathbf y]$, which is the unique (up
to a.s. equality) Bayes estimator under squared loss: it minimizes $\mathbb E[(y_q-\hat y)^2\mid \mathbf y]$, and hence the
unconditional risk $\mathbb E[(y_q-\hat y)^2]$, among all estimators $\hat y$ measurable with respect to the observed neighbor labels.

\noindent The posterior variance $ \sigma^2_{\mathrm{post}}$ is the Schur complement of $\mathbf K_{\mathrm{obs}}$ in the joint
covariance matrix $\Sigma$ (Eq.~\ref{eq:joint_distribution}).
Since (i) $\mathbf \Sigma \succeq 0$, (ii) $\mathbf K\succeq 0$ and $R_k>0$ imply that $\mathbf K_{\mathrm{obs}}\succ 0$,
then the Schur complement theorem says:
\begin{equation}
  \sigma^2_{\mathrm{post}}\geq 0.
  \label{eq:schur_nonneg}
\end{equation}

\noindent Moreover, $\mathbf K_{\mathrm{obs}}^{-1}\succ 0$ gives $\mathbf k_\ast^\top \mathbf K_{\mathrm{obs}}^{-1}\mathbf k_\ast\geq
0$. It follows that:
\begin{equation}
  \sigma^2_{\mathrm{post}}
  \;=\;
  P_0-\mathbf k_\ast^\top \mathbf K_{\mathrm{obs}}^{-1} \mathbf k_\ast
  \;\leq\; P_0.
  \label{eq:schur_upper}
\end{equation}

\noindent For a fixed query $q$ and fixed neighbor set, under the correctly specified local GP model, the law of total expectation and Eq.~\ref{eq:sigma_post_range} together give:
\begin{equation}
  \mathbb E\!\left[(\mu_{\mathrm{post}}-y_q)^2\right] \;  = \; \mathbb{E}_{\mathbf y}\!\left[\mathbb
E[(\mu_{\mathrm{post}}-y_q)^2\mid\mathbf y]\right]  = \mathbb{E}_{\mathbf y}\!\left[\sigma_{\mathrm{post}}^2\right] \le P_0.
  \label{eq:total_expectation}
\end{equation}
Under the correctly specified evidential prior $y_q\sim \mathcal{N}(\mu_0, P_0)$,
\begin{equation}
  \mathbb E\!\left[(\mu_{\mathrm{post}}-y_q)^2\right] \; \le \; \mathbb E\!\left[(\mu_0-y_q)^2\right],
  \label{eq:risk_ineq}
\end{equation}
and the risk gap is exactly:
\begin{equation}
  \mathbb E\!\left[(\mu_0-y_q)^2\right]
  -
  \mathbb E\!\left[(\mu_{\mathrm{post}}-y_q)^2\right]
  =
  P_0-\sigma^2_{\mathrm{post}}
  =
  \mathbf k_\ast^\top \mathbf K_{\mathrm{obs}}^{-1}\mathbf k_\ast
  \geq 0,
  \label{eq:risk_gap}
\end{equation}
which is strictly positive precisely when
$\mathbf k_\ast^\top \mathbf K_{\mathrm{obs}}^{-1}\mathbf k_\ast>0$, i.e.,  when the neighbors carry nonzero conditional information
about $y_q$.
\end{proof}

\noindent Theorem~\ref{theorem:risk_reduction} gives the ideal model guarantee. If the local GP observation model is correctly
specified, then \evikalm{} is exact Bayesian conditioning. In that case, the posterior mean~(Eq.~\ref{eq:post_mean}) is the Bayes
estimator under squared loss, and the epistemic variance contracts from $P_0$ to $\sigma^2_{\mathrm{post}}$~(Eq.~\ref{eq:post_var}). The
guarantee is therefore conditional on the neighbor labels being valid noisy observations of the query property under the assumed
covariance and noise model.

\noindent We then introduce Proposition~\ref{prop:misspecification} that explains what controls performance when this local
observation model is only approximate.

\begin{proposition}[Error decomposition under neighbor-model misspecification]\label{prop:misspecification}
Define the GP fusion-weight vector as $\mathbf w=\mathbf K_{\mathrm{obs}}^{-1}\mathbf k_\ast$,
aggregate fusion gain as $a=\mathbf 1^\top \mathbf w$,
the evidential prior error as $e_0=\mu_0-y_q$,
and neighbor-query mismatch vector as $\mathbf d=\mathbf y-y_q\mathbf 1$.
Then \evikalm{} improves mean squared error over the squared evidential prior error,
\begin{equation}
  \mathbb E[(\mu_{\mathrm{post}}-y_q)^2] \le \mathbb E[(\mu_0-y_q)^2],
  \label{eq:prop_improve}
\end{equation}
if and only if
\begin{equation}
  \mathbb E\!\left[ (\mathbf w^\top \mathbf d)^2 + 2(1-a)e_0\mathbf w^\top \mathbf d \right]
  < \mathbb E\!\left[a(2-a)e_0^2\right].
  \label{eq:prop_condition}
\end{equation}
If additionally $\mathbb E[(1-a)e_0\mathbf w^\top \mathbf d]=0$, then this condition reduces to $\mathbb E[(\mathbf w^\top \mathbf
d)^2]< \mathbb E[a(2-a)e_0^2]$.
\end{proposition}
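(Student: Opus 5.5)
The plan is a direct algebraic expansion of the posterior error, with no appeal to Gaussianity. Theorem~\ref{theorem:risk_reduction} is therefore not needed here; only the form of the posterior mean in Eq.~\eqref{eq:post_mean} is used. The first step is to rewrite the innovation in terms of the quantities in the statement. Since $\mathbf y-\mu_0\mathbf 1=(\mathbf y-y_q\mathbf 1)-(\mu_0-y_q)\mathbf 1=\mathbf d-e_0\mathbf 1$, and $\mathbf k_\ast^\top\mathbf K_{\mathrm{obs}}^{-1}=\mathbf w^\top$ because $\mathbf K_{\mathrm{obs}}$ is symmetric, we get
\begin{equation*}
\mu_{\mathrm{post}}-y_q = e_0+\mathbf w^\top(\mathbf d-e_0\mathbf 1) = (1-a)e_0+\mathbf w^\top\mathbf d .
\end{equation*}
This is the error decomposition referred to in Appendix~\ref{app:averaging}. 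It is a pointwise identity, valid for every realization of $(y_q,\mathbf y)$ and for $\mathbf w$ and $a$ possibly depending on the query, since these depend only on $P_0$, the similarities and $R_k$.

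Next I square and subtract the squared prior error:
\begin{equation*}
(\mu_{\mathrm{post}}-y_q)^2-e_0^2 = (\mathbf w^\top\mathbf d)^2+2(1-a)e_0\,\mathbf w^\top\mathbf d-\bigl(1-(1-a)^2\bigr)e_0^2 ,
\end{equation*}
where $1-(1-a)^2=a(2-a)$. Taking expectations, which is legitimate under the implicit assumption that $e_0$ and $\mathbf d$ have finite second moments so every term is integrable by Cauchy--Schwarz, and using linearity, the risk difference equals $\mathbb E[(\mathbf w^\top\mathbf d)^2+2(1-a)e_0\mathbf w^\top\mathbf d]-\mathbb E[a(2-a)e_0^2]$. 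The claimed equivalence follows immediately: the posterior risk is below the prior risk exactly when this difference is negative.

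The final clause is then a one-line substitution. If $\mathbb E[(1-a)e_0\mathbf w^\top\mathbf d]=0$, the cross term drops and the condition becomes $\mathbb E[(\mathbf w^\top\mathbf d)^2]<\mathbb E[a(2-a)e_0^2]$.

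The only real obstacle is bookkeeping about the type of inequality. The statement pairs the non-strict Eq.~\eqref{eq:prop_improve} with the strict Eq.~\eqref{eq:prop_condition}. Since the identity above shows that the risk difference equals the difference of the two sides of Eq.~\eqref{eq:prop_condition} exactly, I will prove the equivalence with matching inequalities: strict improvement if and only if the strict condition holds, and weak improvement if and only if the weak version holds. I will note that the boundary case of equality is the only place the two readings differ, so the proposition holds as intended, with the strict condition certifying strict improvement.
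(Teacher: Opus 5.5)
Your proposal is correct and follows the paper's proof essentially line for line: you rewrite $\mathbf y-\mu_0\mathbf 1=\mathbf d-e_0\mathbf 1$, obtain the pointwise identity $\mu_{\mathrm{post}}-y_q=(1-a)e_0+\mathbf w^\top\mathbf d$, square it, subtract $e_0^2$ using $1-(1-a)^2=a(2-a)$, and take expectations. Your handling of the strict/non-strict mismatch agrees with what the paper's argument actually establishes, since it also concludes strict improvement if and only if the strict condition holds. Your remarks on the symmetry of $\mathbf K_{\mathrm{obs}}$ and on finite second moments make explicit two points the paper leaves implicit.
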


\begin{proof}
By construction,
\begin{equation}
  \mu_{\mathrm{post}}  = \mu_0 + \mathbf w^\top (\mathbf y - \mu_0\mathbf 1).
  \label{eq:mupost_construct}
\end{equation}
Using $\mathbf y=y_q\mathbf 1+\mathbf d$ and $e_0=\mu_0-y_q$, we have
\begin{equation}
  \mathbf y-\mu_0\mathbf 1
  =
  (\mu_0-y_q)\mathbf 1+\mathbf d
  =
  -e_0\mathbf 1+\mathbf d.
  \label{eq:y_shift}
\end{equation}
Therefore,
\begin{equation}
  \mu_{\mathrm{post}}
  =
  \mu_0+\mathbf w^\top(-e_0\mathbf 1+\mathbf d)
  =
  \mu_0-ae_0+\mathbf w^\top \mathbf d.
  \label{eq:mupost_expand}
\end{equation}
Subtracting $y_q$ and using $\mu_0-y_q=e_0$ gives the exact error
decomposition
\begin{equation}
  \mu_{\mathrm{post}}-y_q
  =
  e_0-ae_0+\mathbf w^\top \mathbf d
  =
  (1-a)e_0+\mathbf w^\top \mathbf d.
  \label{eq:error_decomp}
\end{equation}
Thus
\begin{equation}
  \mathbb E\!\left[(\mu_{\mathrm{post}}-y_q)^2\right]
  =
  \mathbb E\!\left[\bigl((1-a)e_0+\mathbf w^\top \mathbf d\bigr)^2\right].
  \label{eq:mse_post}
\end{equation}
Subtracting $\mathbb E[(\mu_0-y_q)^2]=\mathbb E[e_0^2]$ yields
\begin{equation}
  \mathbb E\!\left[(\mu_{\mathrm{post}}-y_q)^2\right]- \mathbb E\!\left[(\mu_0-y_q)^2\right]
  =
  \mathbb E\!\left[
  (\mathbf w^\top \mathbf d)^2
  +
  2(1-a)e_0\,\mathbf w^\top\mathbf d
  -
  a(2-a)e_0^2
  \right].
  \label{eq:mse_diff}
\end{equation}
Therefore $\mathbb E\!\left[(\mu_{\mathrm{post}}-y_q)^2\right] < \mathbb E\!\left[(\mu_0-y_q)^2\right]$ if and
only if
\begin{equation}
  \mathbb E\!\left[
  (\mathbf w^\top \mathbf d)^2
  +
  2(1-a)e_0\,\mathbf w^\top \mathbf d
  \right]
  <
  \mathbb E\!\left[
  a(2-a)e_0^2
  \right].
  \label{eq:prop_condition_proof}
\end{equation}

\noindent Finally, if $\mathbb E[(1-a)e_0\,\mathbf w^\top \mathbf d]=0$, the cross term on the
left-hand side vanishes and the simplified condition
$\mathbb E[(\mathbf w^\top \mathbf d)^2]<\mathbb E[a(2-a)e_0^2]$ follows.
\end{proof}

\noindent Proposition~\ref{prop:misspecification} shows that the posterior error $\mu_{\mathrm{post}}-y_q$ can be understood, via the
exact decomposition~(Eq.~\ref{eq:error_decomp}), as the sum of a shrunk evidential residual $(1-a)e_0$ and a weighted neighbor-query
mismatch $\mathbf w^\top \mathbf d$. Thus, neighbor fusion improves prediction only when the weighted mismatch term is small relative
to the reducible error in the evidential prior. This clarifies the main failure modes: activity cliffs, sparse structural coverage, or
property-irrelevant neighbor selection make $\mathbf d=\mathbf y-y_q\mathbf 1$ large, while an accuracy-ceiling evidential model
makes the reducible prior error small.

\paragraph{Connection to QSAR smoothness and property-supervised similarity.}
Proposition \ref{prop:misspecification} connects directly to the QSAR smoothness diagnostic. For Tanimoto-selected neighbors, the
mismatch vector $\mathbf d=\mathbf y-y_q\mathbf 1$ is small precisely when structural similarity implies property similarity.
Therefore, the QSAR smoothness ratio $s$ is a dataset-level proxy for the typical size of the misspecification cost $(\mathbf
w^\top\mathbf d)^2$ that appears in condition Eq.~\ref{eq:prop_condition}. When $s$ is small, Tanimoto neighbors are likely to be
property-close to the query, making the \evikalm{} observation model a good local approximation. When $s$ is large, structurally
similar neighbors may have large property gaps, and the mismatch term can dominate the benefit of fusion.

Property-supervised similarity acts on the same term, but at the neighbor selection stage. PropDist reranks Tanimoto candidates to
select neighbors with smaller predicted property gaps, thereby reducing $\mathbf d$ before the GP posterior is applied. In this view,
\evikalm{} succeeds when Tanimoto similarity already makes $\mathbf d$ small, while \evikalmpg{} improves the method by learning a
neighbor selection rule that makes $\mathbf d$ small more reliably.

\end{document}